\renewcommand{\algorithmiccomment}[1]{{\color{blue} \bgroup\hfill//~#1\egroup}}
\DeclareMathOperator*{\argmax}{arg\,max}
\newcommand{\E}{\mathbb{E}}
\newcommand{\R}{\mathbb{R}}
\newcommand{\ca}{\mathcal{A}}
\newcommand{\ta}{\tilde{a}}
\newcommand{\tx}{\tilde{x}}
\def\shownotes{1}  %set 1 to show author notes
\newcommand{\authnote}[2]{{\scriptsize $\ll$\textsf{#1 notes: #2}$\gg$}}
\newcommand{\authnote}[2]{}
\newcommand{\dingwen}[1]{{\color{blue}\authnote{Dingwen}{#1}}}
\newcommand{\yw}[1]{{\color{red}\authnote{Yuanhao}{#1}}}
\newcommand{\cA}{\mathcal{A}}
\newcommand{\EE}{\mathbb{E}}
\newcommand{\one}{\mathbbm{1}}
\renewcommand{\*}{\star}
\renewcommand{\tilde}{\widetilde}
\renewcommand{\log}{\ln}
\newtheorem{theorem}{Theorem}
\newtheorem{lemma}[theorem]{Lemma}
\newtheorem{conjecture}[theorem]{Conjecture}
\newtheorem{definition}{Definition}
\newtheorem{proposition}[theorem]{Proposition}
\newtheorem{remark}[theorem]{Remark}
\newtheorem{lem}[theorem]{Lemma}
\newtheorem{prop}[theorem]{Proposition}
\newtheorem{coro}[theorem]{Corollary}
\newenvironment{proof-sketch}{\noindent{\bf Proof Sketch}
  \hspace*{1em}}{\qed\bigskip\\}
\newenvironment{proof-idea}{\noindent{\bf Proof Idea}
  \hspace*{1em}}{\qed\bigskip\\}
\newenvironment{proof-of}[1][{}]{\noindent{\bf Proof of \cref{#1}}
  \hspace*{1em}}{\qed\bigskip\\}
\newenvironment{proof-of-lemma}[1][{}]{\noindent{\bf Proof of Lemma {#1}}
  \hspace*{1em}}{\qed\bigskip\\}
\newenvironment{proof-of-proposition}[1][{}]{\noindent{\bf
    Proof of Proposition {#1}}
  \hspace*{1em}}{\qed\bigskip\\}
\newenvironment{proof-of-theorem}[1][{}]{\noindent{\bf Proof of Theorem {#1}}
  \hspace*{1em}}{\qed\bigskip\\}
\newenvironment{inner-proof}{\noindent{\bf Proof}\hspace{1em}}{
  $\bigtriangledown$\medskip\\}
\newenvironment{proof-attempt}{\noindent{\bf Proof Attempt}
  \hspace*{1em}}{\qed\bigskip\\}
\title{Learning Rationalizable Equilibria in Multiplayer Games}
\author{Yuanhao Wang\thanks{Equal contribution.}${}^{\ast 1}$, Dingwen Kong\footnotemark[1]${}^{\ast 2}$, Yu Bai${}^{3}$, Chi Jin${}^{1}$ \\
${}^{1}$Princeton University, ${}^{2}$Peking University, ${}^{3}$Salesforce Research \\
\texttt{yuanhao@princeton.edu, dingwenk@pku.edu.cn}\\ \texttt{yu.bai@salesforce.com, chij@princeton.edu}
}
\author{
 Yuanhao Wang\thanks{Equal contribution.} \\
Princeton University\\
\texttt{yuanhao@princeton.edu} \\
\And
Dingwen Kong\footnotemark[1] \\
Peking University \\
\texttt{dingwenk@pku.edu.cn} \\
\AND
 Yu Bai \\
Salesforce Research \\
\texttt{yu.bai@salesforce.com}
\And
Chi Jin \\
Princeton University \\
\texttt{chij@princeton.edu}
}
\begin{document}

\maketitle

\begin{abstract}

A natural goal in multiagent learning besides finding equilibria is to learn \emph{rationalizable} behavior, where players learn to avoid \emph{iteratively dominated actions}. 
However, even in the basic setting of multiplayer general-sum games, 
existing algorithms require a number of samples \emph{exponential} in the number of players to learn rationalizable equilibria under bandit feedback. 
% existing methods for learning rationalizable equilibria require a number of samples that is \emph{exponential} in the number of players. 
This paper develops the first line of efficient algorithms for learning rationalizable Coarse Correlated Equilibria (CCE) and Correlated Equilibria (CE) whose sample complexities are \emph{polynomial} in all problem parameters including the number of players. To achieve this result, we also develop a new efficient algorithm for the simpler task of finding one rationalizable action profile (not necessarily an equilibrium), whose sample complexity  substantially improves over the best existing results of \citet{wu2021multi}. Our algorithms incorporate several novel techniques to guarantee rationalizability and no (swap-)regret simultaneously, including a correlated exploration scheme and adaptive learning rates, which may be of independent interest. We complement our results with a sample complexity lower bound showing the sharpness of our guarantees.

% However, even under the basic settings of multiplayer general-sum games with bandit feedback, existing methods for learning rationalizable equilibria require a number of samples that is \emph{exponential} in the number of players.

% Next, using this algorithm as initialization, 
% we develop the first efficient algorithms
% We further develop algorithms with the first efficient guarantees 
% for learning approximate Coarse Correlated Equilibria (CCE) and Correlated Equilibria (CE) that are also rationalizable.
% that the linear dependency in the number of players for iterated dominance elimination is unavoidable.

%\yw{Should we really claim ``the first''?}
%\dingwen{Do we need to mention the black-box algorithms in the abstract?}

\end{abstract}
\section{Introduction}

%\chijin{change all multi-player, sub-game to multiplayer, subgame}

A common objective in multiagent learning is to find various \emph{equilibria}, such as Nash equilibria (NE), correlated equilibria (CE) and coarse correlated equilibria (CCE). Generally speaking, a player in equilibrium lacks incentive to deviate assuming conformity of other players to the same equilibrium. Equilibrium learning has been extensively studied in the literature of game theory and online learning, and no-regret based learners can provably learn approximate CE and CCE with both computational and statistical efficiency~\citep{stoltz2005incomplete,cesa2006prediction}.

However, not all equilibria are created equal. As shown by~\citet{viossat2013no}, a CCE can be entirely supported on dominated actions---actions that are worse off than some other strategy in all circumstances---which rational agents should apparently never play. Approximate CE also suffers from a similar problem. As shown by~\citet[Theorem 1]{wu2021multi}, there are examples where an $\epsilon$-CE always plays iteratively dominated actions---actions that would be eliminated when iteratively deleting strictly dominated actions---unless $\epsilon$ is exponentially small. It is also shown that standard no-regret algorithms are indeed prone to finding such seemingly undesirable solutions~\citep{wu2021multi}.
The intrinsic reason behind this is that CCE and approximate CE may not be \emph{rationalizable}, and existing algorithms can indeed fail to find rationalizable solutions.

Different from equilibria notions, rationalizability~\citep{bernheim1984rationalizable,pearce1984rationalizable} looks at the game from the perspective of a single player without knowledge of the actual strategies of other players, and only assumes common knowledge of their rationality. A rationalizable strategy will avoid strictly dominated actions, and assuming other players have also eliminated their dominated actions, iteratively avoid strictly dominated actions in the subgame. Rationalizability is a central solution concept in game theory~\citep{osborne1994course} and has found applications in auctions~\citep{battigalli2003rationalizable} and mechanism design~\citep{bergemann2011rationalizable}.

If an (approximate) equilibrium only employs rationalizable actions, it would prevent irrational behavior such as playing dominated actions. Such equilibria are arguably more reasonable than unrationalizable ones, and constitute a stronger solution concept. This motivates us to consider the following open question:
\begin{center}
    \emph{Can we efficiently learn equilibria that are also rationalizable?}
\end{center}

Despite its fundamental role in multiagent reasoning, rationalizability is rarely studied from a learning perspective until recently, with~\citet{wu2021multi} giving the first algorithm for learning rationalizable action profiles from bandit feedback. However, these rationalizable action profiles found are not necessarily equilibria, and the problem of learning rationalizable CE and CCE remains a challenging open problem. Due to the existence of unrationalizable equilibria, running standard CE or CCE learners will not guarantee rationalizable solutions. On the other hand, one cannot hope to first identify all rationalizable actions and then find an equilibrium on the subgame, since even determining whether an action is rationalizable requires exponentially many samples (see Proposition~\ref{prop:decidehard}). Therefore, achieving rationalizability and approximate equilibria simultaneously is nontrivial and presents new algorithmic challenges.

\newcolumntype{C}{>{\centering\arraybackslash}p{4cm}}

\begin{table}[t]
\centering
  \renewcommand{\arraystretch}{1.5} 
  %\resizebox{\columnwidth}{!}{
\begin{tabular}{|cc|C|}
\hline
\multicolumn{2}{|l|}{\makecell{\textbf{Task}}}                                           & {\textbf{Sample Complexity}}  \\ \hline
\multicolumn{2}{|l|}{Find \emph{all} rationalizable actions (Proposition~\ref{prop:decidehard})}             & $\Omega(A^{N-1})$           \\ \hline
\multicolumn{2}{|l|}{Find \emph{one} rationalizable action profile (Theorem~\ref{thm:ide_upper})}              & $\tilde{O}\left(\frac{LNA}{\Delta^{2}}\right)$ \\ \hline
\multicolumn{1}{|l|}{\multirow{2}{*}{\makecell{Learn rationalizable\\ equilibria}}} & $\epsilon$-CCE (Theorem~\ref{thm:cce_main}) &   $\tilde{O}\left(\frac{LNA}{\Delta^{2}}+\frac{NA}{\epsilon^2}\right)$                        \\ \cline{2-3} 
\multicolumn{1}{|l|}{}                                                  & $\epsilon$-CE (Theorem~\ref{thm:ce_main}) & $\tilde{O}\left(\frac{LNA}{\Delta^{2}}+\frac{NA^2}{\min\{\epsilon^2,\Delta^2\}}\right)$                         \\ \hline
\end{tabular}
\caption{Summary of main results. Here $N$ is the number of players, $A$ is the number of actions per player, $L<NA$ is the minimum elimination length and $\Delta$ is the error we allow for rationalizability.  \label{table:summary}}

\end{table}

In this work, we address the challenges above and give a positive answer to our main question. Our contributions, summarized in Table~\ref{table:summary}, are the following:
\begin{itemize}[leftmargin=2em]
    \item As a first step, we provide a simple yet sample-efficient algorithm for identifying a $\Delta$-rationalizable action profile under bandit feedback, using only $\Tilde{O}\left(\frac{LNA}{\Delta^2}\right)$\footnote{Throughout this paper, we use $\tilde{O}$ to suppress logarithmic factors in $N$, $A$, $L$, $\frac{1}{\Delta}$, $\frac{1}{\delta}$, and $\frac{1}{\epsilon}$.} samples in normal-form games with $N$ players, $A$ actions per player and a minimum elimination length of $L$. This greatly improves the result of~\citet{wu2021multi} and is tight up to logarithmic factors when $L=O(1)$.
    %\chijin{explain here what is $L$.}
    \item Using the above algorithm as a subroutine, we develop exponential weights based algorithms that can provably find $\Delta$-rationalizable $\epsilon$-CCE using $\Tilde{O}\left(\frac{LNA}{\Delta^2}+\frac{NA}{\epsilon^2}\right)$ samples, and $\Delta$-rationalizable $\epsilon$-CE using $\Tilde{O}\left(\frac{LNA}{\Delta^2}+\frac{NA^2}{
    \min\{\epsilon^2,\Delta^2\}}\right)$ samples. To the best of our knowledge, these are the first guarantees for learning rationalizable approximate CCE and CE.%we are the first to establish such guarantees for learning rationalizable equilibria.
    \item We also provide reduction schemes that find $\Delta$-rationalizable $\epsilon$-CCE/CE using black-box algorithms for $\epsilon$-CCE/CE. Despite having slightly worse rates, these algorithms can directly leverage the progress in equilibria finding, which may be of independent interest.
\end{itemize}

\subsection{Related work}

\paragraph{Rationalizability and iterative dominance elimination.} Rationalizability~\citep{bernheim1984rationalizable,pearce1984rationalizable} is a notion that captures rational reasoning in games and relaxes Nash Equilibrium. 
Rationalizability is closely related to the iterative elimination of dominated actions, which has been a focus of game theory research since the 1950s~\citep{luce1957games}. It can be shown that an action is rationalizable if and only if it survives iterative elimination of strictly dominated actions\footnote{For this equivalence to hold, we need to allow dominance by mixed strategies, and correlated beliefs when there are more than two players. These conditions are met in the setting of this work.}~\citep{pearce1984rationalizable}. There is also experimental evidence supporting iterative elimination of dominated strategies as a model of human reasoning~\citep{camerer2011behavioral}.

\paragraph{Equilibria learning in games.} 

There is a rich literature on applying online learning algorithms to learning equilibria in games. It is well-known that if all agents have no-regret, the resulting empirical average would be an $\epsilon$-CCE~\citep{peyton2004strategic}, while if all agents have no swap-regret, the resulting empirical average would be an $\epsilon$-CE~\citep{hart2000simple, cesa2006prediction}. Later work continuing this line of research include those with faster convergence rates~\citep{syrgkanis2015fast,chen2020hedging,daskalakis2021near}, last-iterate convergence guarantees~\citep{daskalakis2018last, wei2020linear}, and extension to extensive-form games~\citep{celli2020no,bai2022near,bai2022efficient,song2022sample} and Markov games~\citep{song2021can,jin2021v}.

\paragraph{Computational and learning aspect of rationalizability.}

Despite its conceptual importance, rationalizability and iterative dominance elimination are not well studied from a computational or learning perspective. For iterative strict dominance elimination in two-player games, \citet{knuth1988note} provided a cubic-time algorithm and proved that the problem is P-complete. The weak dominance version of the problem is proven to be NP-complete by~\citet{conitzer2005complexity}.

\citet{hofbauer1996evolutionary} showed that in a class of learning dynamics which includes replicator dynamics --- the continuous-time variant of FTRL, all iteratively strictly dominated actions vanish over time, while~\citet{mertikopoulos2010emergence} proved similar results for stochastic replicator dynamics; however, neither work provides finite-time guarantees.~\citet{cohen2017hedging} proved that Hedge eliminates dominated actions in finite time, but did not extend their results to the more challenging case of iteratively dominated actions.
%\citet{hurkens1995learning}

The most related work in literature is the work on learning rationalizable actions by~\citet{wu2021multi}, who proposed the Exp3-DH algorithm to find a strategy
%\dingwen{change to distribution? to be consistent with Section 3}
mostly supported on rationalizable actions with a polynomial rate. Our Algorithm~\ref{alg:findaction} accomplishes the same task %\dingwen{not the same task now}
with a faster rate, while our Algorithms~\ref{alg:CCE-adaptive} \&~\ref{alg:CE-adaptive} deal with the more challenging problems of finding $\epsilon$-CE/CCE that are also rationalizable. Although Exp3-DH is based on a no-regret algorithm, it does not enjoy regret or weighted regret guarantees and thus does not provably find rationalizable equilibria.

\section{Preliminary}
An $N$-player normal-form game involves $N$ players whose action space are denoted by $\ca=\ca_1\times \cdots \times \ca_N$, and is defined by utility functions $u_1,\cdots,u_N:$ $\ca\to [0,1]$. Let $A=\max_{i\in [N]} |A_i|$ denote the maximum number of actions per player, $x_i$ denote a mixed strategy of the $i$-th player (\emph{i.e.}, a distribution over $\ca_i$) and $x_{-i}$ denote a (correlated) mixed strategy of the other players (\emph{i.e.}, a distribution over $\prod_{j\neq i}\ca_j$). We use $\Delta(S)$ to denote a distribution over the set $S$.
%\vspace{-0.3cm}
\paragraph{Learning from bandit feedback} We consider the bandit feedback setting where in each round, each player $i\in[N]$ chooses an action $a_i\in\ca_i$, and then observes a random feedback $U_i\in[0,1]$ such that%~\yub{consider adding ``we condiser the bandit feedback setting where...'' at beginning of this sentence}
\[
\EE[U_i|a_1,a_2,\cdots,a_n]=u_i(a_1,a_2,\cdots,a_n).
\]
%\dingwen{\citet{wu2021multi} assumes zero-mean noise with bounded variance. But this is not enough for concentration and their proof in Eq(42) is using Azuma incorrectly.}

%\chijin{should we use upper and lower case of $u_i$ to avoid notation overload for both random variable and function.}

\subsection{Rationalizability}
An action $a\in \ca_i$ is said to be rationalizable if it could be the best response to some (possibly correlated) belief of other players' strategies, assuming that they are also rational. In other words, the set of rationalizable actions is obtained by iteratively removing actions that could never be a best response. For finite normal-form games, this is in fact equivalent to the iterative elimination of strictly dominated actions\footnote{See, \emph{e.g.}, the Diamond-In-the-Rough (DIR) games in~\citet[Definition 2]{wu2021multi} for a concrete example of iterative dominance elimination.}~\citep[Lemma 60.1]{osborne1994course}.

%\chijin{let's consistently use ``normal-form'' instead of ``normal form'' throughout the paper?} Done.

\begin{definition}[$\Delta$-Rationalizability]
\label{def:ration}
Define
\begin{equation*}
    E_1:=\bigcup_{i=1}^N\left\{
a\in\ca_i: \exists x\in \Delta({\ca_i}), \forall a_{-i},\quad  u_i(a,a_{-i}) \le u_i(x,a_{-i}) - \Delta
\right\},
\end{equation*}
%$$E_1:=\bigcup_{i=1}^N\left\{a\in\ca_i: \exists x\in \Delta({\ca_i}), \forall a_{-i},\quad  u_i(a,a_{-i}) \le u_i(x,a_{-i}) - \Delta
%\right\},$$
which is the set of $\Delta$-dominated actions for all players. Further define
%$$E_l:=E_{l-1}\cup\left\{ 
%a\in\ca_i: \exists x\in \Delta({\ca_i}), \forall a_{-i} \text{ s.t. } a_{-i}\cap E_{l-1}=\emptyset ,u_i(a,a_{-i}) \le u_i(x,a_{-i}) - \Delta\right\},$$
\begin{equation*}
    E_l:=\bigcup_{i=1}^N\left\{ a\in\ca_i: \exists x\in \Delta({\ca_i}), \forall a_{-i} \text{ s.t. } a_{-i}\cap E_{l-1}=\emptyset,  u_i(a,a_{-i}) \le u_i(x,a_{-i}) - \Delta\right\},
\end{equation*}
%\dingwen{technically we do not need to take union with $E_{l-1}$ when defining $E_l$?}
%\yw{I think you are correct. In that case we should emphasize that $E_l\subseteq E_{l+1}$}
which is the set of actions that would be eliminated by the $l$-th round. Define  $L=\inf\{l: E_{l+1}=E_l\}$ as the \textbf{minimum elimination length}, and $E_L$ as the set of $\Delta$-\textbf{iteratively dominated actions} ($\Delta$-IDAs). Actions in $\cup_{i=1}^N \ca_i \setminus E_L$ are said to be \textbf{$\Delta$-rationalizable}. %Furthermore, we call an action IDA if there exists $\zeta>0$, such that the action is a $\zeta$-IDA. We call an action rationalizable if it is not an IDA.
\end{definition}
%The set of rationalizable actions would be exactly the complement of $E_L$ for a small enough $\Delta$~\citep[Proposition 61.2]{osborne1994course}.
%Henceforth we will use rationalizability and survival of $\Delta$-iterated dominance elimination interchangeably.

%\chijin{should we take union over $i$ for both definitions?}
Notice that $E_1\subseteq \cdots \subseteq E_L=E_{L+1}$. Here $\Delta$ plays a similar role as the reward gap for best arm identification in stochastic multi-armed bandits.
We will henceforth use $\Delta$-rationalizability and survival of $L$ rounds of iterative dominance elimination (IDE) interchangeably\footnote{Alternatively one can also define $\Delta$-rationalizability by the iterative elimination of actions that are never $\Delta$-best response, which is mathematically equivalent to Definition~\ref{def:ration} (see Appendix~\ref{sec:neverbr}).}.
Since one cannot eliminate all the actions of a player, $|\cup_{i=1}^N \ca_i \setminus E_L|\ge N$, which further implies $L\le N(A-1)<NA$.

%\chijin{in the last sentence do you mean $|E_L|\ge N$ instead of $|E_L|\le N(A-1)$?} \yw{You are correct.}

%Similarly we can define $\Delta$-rationalizable actions and show the equivalence between $\Delta$-rationalizable and survival of $\Delta$-iterated dominance elimination (formal definitions deferred to Appendix~X).
%Henceforth in the rest of the paper we will use these two concepts interchangeably.

%\dingwen{This sounds a bit vague to me. Should we define some $\Delta$-rationalizable action using $\Delta$-approximate best response, and prove the equivalence formally? Or an easier way is to assume all IDEs satisfy a $\Delta$ gap, like in \citet{wu2021multi}...} 

%These are the actions that would be eliminated by the $l$-th round. It can be shown that if $a$ would survive the maximum elimination, \emph{i.e.} if $a\notin E_{NA}$, then $a$ would also survive iterative elimination using alternative paths. In fact, for any alternative path 
%$$E'_1\subseteq E'_2\subseteq \cdots \subseteq E'_{NA},$$
%it can be shown that $E'_l\subseteq E_l$. (Note that we only need to consider elimination length of at most $NA$.)

\subsection{Equilibria in games}

%\yw{Remove exact version}

We consider three common learning objectives, namely Nash Equilibrium (NE), Correlated Equilibrium (CE) and Coarse Correlated Equilibrium (CCE).
\begin{definition}[Nash Equilibrium]
A strategy profile %$(x_1,\cdots,x_N)$ is a Nash equilibrium if 
%$$u_i(x_i,x_{-i}) \ge u_i(a, x_{-i}), \forall a \in \ca_i, \forall i\in [N].$$
 $(x_1,\cdots,x_N)$ is an $\epsilon$-Nash equilibrium if
$$u_i(x_i,x_{-i}) \ge u_i(a, x_{-i})-\epsilon, \forall a \in \ca_i, \forall i\in [N].$$
\end{definition}

\begin{definition}[Correlated Equilibrium]
A correlated strategy $\Pi \in \Delta(\ca)$ is %a correlated equilibrium if $\forall i\in [N], \forall \phi:\ca_i\to \ca_i$,
%\begin{equation*}
%   \sum_{a_i\in \ca_i, a_{-i}\in \ca_{-i}} \Pi(a_i,a_{-i})u_i(a_i,a_{-i}) \ge
%   \sum_{a_i\in \ca_i, a_{-i}\in \ca_{-i}} \Pi(a_i,a_{-i})u_i(\phi(a_i),a_{-i}).
%\end{equation*}
%Similarly, $\Pi \in \Delta_\ca$ is 
an $\epsilon$-correlated equilibrium if $\forall i\in [N], \forall \phi:\ca_i\to \ca_i$,
\begin{equation*}
   \sum_{a_i\in \ca_i, a_{-i}\in \ca_{-i}} \Pi(a_i,a_{-i})u_i(a_i,a_{-i}) \ge
   \sum_{a_i\in \ca_i, a_{-i}\in \ca_{-i}} \Pi(a_i,a_{-i})u_i(\phi(a_i),a_{-i}) - \epsilon.
\end{equation*}
\end{definition}

\begin{definition}[Coarse Correlated Equilibrium]
A correlated strategy $\Pi \in \Delta(\ca)$ is an $\epsilon$-CCE if $\forall i\in [N], \forall a'\in  \ca_i$,
\begin{equation*}
   \sum_{a_i\in \ca_i, a_{-i}\in \ca_{-i}} \Pi(a_i,a_{-i})u_i(a_i,a_{-i}) \ge
   \sum_{a_i\in \ca_i, a_{-i}\in \ca_{-i}} \Pi(a_i,a_{-i})u_i(a',a_{-i}) - \epsilon.
\end{equation*}
\end{definition}
When $\epsilon=0$, the above definitions give exact Nash equilibrium, correlated equilibrium, and coarse correlated equilibrium, respectively. It is well known that $\epsilon$-NE are $\epsilon$-CE, and $\epsilon$-CE are $\epsilon$-CCE.

%\chijin{I think the last sentence should be the opposite, i.e., $\epsilon$-NE are $\epsilon$-CE and $\epsilon$-CE are $\epsilon$-CCE.}

\subsection{Connection between Equilibria and Rationalizability}
\paragraph{Exact equilibria.} It is known that all actions in the support of an exact NE and CE are rationalizable~\citep[Lemma 56.2]{osborne1994course}. However, one can easily construct an exact CCE that is fully supported on dominated (hence, unrationalizable) actions (see \emph{e.g.}~\citet[Fig. 3]{viossat2013no}). %(see e.g.~\citet[Fig. 1]{gibson2013regret}).

\paragraph{Approximate equilibria.} For $\epsilon$-Nash equilibrium with $\epsilon<{\rm poly}(\Delta, 1/N, 1/A)$), one can show that the equilibrium is still mostly supported on rationalizable actions. 
\begin{prop}%[$\epsilon$-Nash is mostly rationalizable]
\label{prop:nash}
If $x^*=(x^*_1,\cdots,x^*_N)$ is an $\epsilon$-Nash with $\epsilon<\frac{\Delta^2}{24N^2A}$,
$\forall i$, $\Pr_{a\sim x^*_i}[a\in E_{L}] \le \frac{2L\epsilon}{\Delta}.$
\end{prop}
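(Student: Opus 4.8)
The plan is to induct on the elimination rounds $l=1,\dots,L$, tracking the quantity $p_l := \max_{i\in[N]}\Pr_{a\sim x_i^*}[a\in E_l]$ and proving $p_l \le 2l\epsilon/\Delta$, which at $l=L$ is exactly the claim. The engine of the whole argument is the standard fact that in an $\epsilon$-Nash equilibrium every action played with positive probability is nearly optimal against $x_{-i}^*$. Writing $V_i := \max_{a'\in\ca_i} u_i(a',x_{-i}^*)$, the $\epsilon$-Nash condition gives $V_i - u_i(x_i^*,x_{-i}^*)\le \epsilon$, and since $V_i - u_i(x_i^*,x_{-i}^*)=\sum_a x_i^*(a)\,(V_i-u_i(a,x_{-i}^*))$ is a sum of nonnegative gaps, any set of actions each having gap at least some $\gamma>0$ can carry total $x_i^*$-mass at most $\epsilon/\gamma$. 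So the entire task reduces to lower-bounding the gap $V_i-u_i(a,x_{-i}^*)$ for $a\in E_l$.

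For the base case $l=1$: if $a\in E_1\cap\ca_i$ there is a mixed strategy $x$ with $u_i(a,a_{-i})\le u_i(x,a_{-i})-\Delta$ for all $a_{-i}$; averaging over $a_{-i}\sim x_{-i}^*$ gives $u_i(a,x_{-i}^*)\le u_i(x,x_{-i}^*)-\Delta\le V_i-\Delta$, so the gap is at least $\Delta$ and $p_1\le \epsilon/\Delta$. The inductive step is the crux: for $a\in E_l\cap\ca_i$ the dominating inequality $u_i(a,a_{-i})\le u_i(x,a_{-i})-\Delta$ is guaranteed \emph{only} on opponent profiles avoiding $E_{l-1}$, so when I average against $x_{-i}^*$ I must control the bad event $B:=\{a_{-i}: a_{-i}\cap E_{l-1}\ne\emptyset\}$. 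This is exactly where the product structure of a Nash profile is essential: because $x_{-i}^*=\prod_{j\ne i}x_j^*$, a union bound over the other players gives $\Pr_{a_{-i}\sim x_{-i}^*}[B]\le (N-1)\,p_{l-1}$.

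Splitting the expectation of $u_i(x,a_{-i})-u_i(a,a_{-i})$ over $B$ and its complement, using the $\Delta$-gap off $B$ and the trivial bound $-1$ on $B$ (utilities lie in $[0,1]$ and $\Delta\le 1$), yields $V_i-u_i(a,x_{-i}^*)\ge \Delta-2(N-1)p_{l-1}$, hence the recursion $p_l\le \epsilon/(\Delta-2(N-1)p_{l-1})$ whenever the denominator is positive. I then close the induction using $p_{l-1}\le 2(l-1)\epsilon/\Delta\le 2L\epsilon/\Delta$ together with $L<NA$ and $\epsilon<\Delta^2/(24N^2A)$: a short calculation shows $2(N-1)p_{l-1}\le \Delta/2$, so the denominator is at least $\Delta/2$ and $p_l\le 2\epsilon/\Delta\le 2l\epsilon/\Delta$, finishing the induction (in fact giving the stronger uniform bound $p_l\le 2\epsilon/\Delta$).

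I expect the main obstacle to be the bad-event bookkeeping in the inductive step: correctly deriving the recursion and verifying that the threshold on $\epsilon$ keeps the denominator $\Delta-2(N-1)p_{l-1}$ bounded away from zero across all $L$ rounds, so that the gaps never degrade and the induction does not blow up. Everything else is the routine ``average payoff is near-optimal'' argument for approximate Nash equilibria.
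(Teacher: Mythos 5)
Your proof is correct, and it follows the same overall skeleton as the paper's: induct on the elimination round $l$, and use the fact that in an $\epsilon$-Nash the total $x_i^*$-mass on actions whose utility gap against $x_{-i}^*$ is at least $\gamma$ is bounded by $\epsilon/\gamma$. The only place where you diverge is the inductive step's handling of the caveat that the $\Delta$-domination for $a\in E_l$ is guaranteed only on opponent profiles avoiding $E_{l-1}$. The paper constructs the renormalized strategies $\tx_j$ supported off $E_l$, applies the domination exactly under $\tx_{-i}$, and transfers back to $x^*_{-i}$ via an $\ell_1$-perturbation bound of order $Nl\epsilon/\Delta$; you instead average the domination inequality directly against $x^*_{-i}$ and split on the bad event that some opponent plays an $E_{l-1}$ action, which by the product structure and a union bound has probability at most $(N-1)p_{l-1}$. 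The two estimates are quantitatively comparable and both lean on the same threshold $\epsilon<\frac{\Delta^2}{24N^2A}$ together with $L<NA$ to keep the effective gap at least $\Delta/2$ throughout the induction. Your version is slightly more direct and, as you note, actually yields the uniform bound $p_l\le 2\epsilon/\Delta$ rather than the growing $2l\epsilon/\Delta$; the paper's perturbation argument accumulates the error additively over rounds (bounding $E_{l+1}\setminus E_l$ and summing), which is why its stated conclusion carries the factor $L$. Both are valid proofs of the stated proposition.
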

Therefore, for two-player zero-sum games\footnote{For multiplayer general-sum games, finding approximate NE is computationally hard and takes $\Omega(2^N)$ samples in worst case.}, it is possible to run an approximate NE solver and automatically find a rationalizable $\epsilon$-NE. However, this method will induce a rather slow rate, and we will provide a much more efficient algorithm for finding rationalizable $\epsilon$-NE in Section~\ref{sec:cce}.

The connection between CE and rationalizability becomes quite different when it comes to approximate equilibria, which are inevitable in the presence of noise. As shown by~\citet[Theorem 1]{wu2021multi}, an $\epsilon$-CE can be entirely supported on iteratively dominated actions, unless $\epsilon=O(2^{-A})$. In other words, rationalizability is not guaranteed by running an approximate CE solver unless with an extremely high accuracy. Furthermore, since exact CCE can be fully supported on dominated actions, so is $\epsilon$-CCE regardless how small $\epsilon$ is. Therefore, efficiently finding $\epsilon$-CE and $\epsilon$-CCE that are simultaneously rationalizable remains a challenging open problem.

\section{Warm-up: Learning Rationalizable Equilibria Using $A^N$ Samples}

\label{sec:naive}

A direct approach to learn rationalizable equilibria is to achieve this goal by two steps: (1) identify the set of \emph{all} rationalizable actions, and (2) learn equilibria in the subgame restricted to these rationalizable actions. In this section, we provide algorithm and analysis using this approach. We will show such two-step approach inevitably incur $A^{\Omega(N)}$  sample complexity, which is exponential in the number of players $N$.

%Specifically, since the definition of a utility gap $\Delta=0$  by identifying 

Formally, we say a set $\mathcal{G}$ is a set of all rationalizable actions \emph{up to tolerance $\Delta$} if and only if (a) $\mathcal{G}$ contains all $0$-rationalizable actions; (b) all actions in $\mathcal{G}$ are $\Delta$-rationalizable.
% in step (1), we aim to identify a set of all rationalizable actions \emph{up to tolerance $\Delta$}---that is, we will find a set of actions $\tilde{\cA}$

The algorithm we use to realize the two-step approach is given in Algorithm \ref{alg:naive}, 
% Concretely, we use Algorithm \ref{alg:naive}, 
which first enumerates all possible action profiles to build an empirical game model, and then performs IDE on this empirical game. When $M$ is sufficiently large, we can guarantee that the set of remaining actions $\tilde{\cA}_1 \times \cdots \times \tilde{\cA}_N$ is a set of all rationalizable actions up to tolerance $\Delta$.
% it is guaranteed that all $\Delta$-IDAs are {\color{blue}effectively} eliminated, but no rationalizable actions will be eliminated. 
The final step is to learn an $\epsilon$-CCE/CE on the subgame, which will be an $\epsilon$-CCE/CE of the original game as well, which can be done efficiently using FTRL-based algorithms~\citep{cesa2006prediction, blum2007external}. Specifically, we will use Algorithm 5 and 6 by~\citet{jin2021v}, which could learn $\epsilon$-CCE and $\epsilon$-CE using $\tilde{O}\left(\frac{A}{\epsilon^2}\right)$ and $\tilde{O}\left(\frac{A^2}{\epsilon^2}\right)$ samples respectively. The soundness of this algorithm is guaranteed by Proposition~\ref{prop:naive}.% Specifically, if line \ref{line:oracle} is implemented using state-of-the-art learning algorithms for $\epsilon$-CE/CCE, such as, \emph{e.g.}, Algorithm 5 and 6 by~\citet{jin2021v}, 

%\chijin{add names and citations for the black-box algorithm that achieves the guarantees.}
\begin{algorithm}[t]
	\caption{Direct Approach for Learning Rationalizable $\epsilon$-CCE/CE\label{alg:naive}}
	\begin{algorithmic}[1]
		\FORALL{$a=(a_1,a_2,\ldots,a_N)\in\cA$}
		    \STATE Play $a$ for $M$ times
		    \STATE For all $i\in[N]$, compute player $i$'s average payoff $\hat{u}_i(a)$%\COMMENT{w.h.p. $|\hat{u}_i(a)-u_i(a)|\leq \frac{\Delta}{10}$}
	  \ENDFOR
      \STATE  Eliminate all $(\Delta/2)$ - IDAs in the empirical game $\{\hat{u}_i\}_{i=1}^N$, and denote the remaining actions as $\tilde{\cA}_i$
      \STATE Find an $\epsilon'$-CCE/CE $\Pi$ of the subgame restricted to $\tilde{\cA}_1 \times \cdots \times \tilde{\cA}_N$ \label{line:oracle}
      \RETURN $\Pi$
	\end{algorithmic}
\end{algorithm}
\begin{prop}
\label{prop:naive}
With parameter $M=\lceil\frac{256\log(1/\delta')}{\Delta^2}\rceil$ where $\delta'=\delta/(A^N N)$, with probability at least $1-\delta$, the output strategy of Algorithm~\ref{alg:naive} is a $\Delta$-rationalizable $\epsilon$-CCE/CE. The total sample complexity is
\[
\tilde{O}\left(\frac{N\cdot A^N}{\Delta^2}+\frac{A}{\epsilon^2}\right) \text{(for CCE)},\ \tilde{O}\left(\frac{N\cdot A^N}{\Delta^2}+\frac{A^2}{\epsilon^2}\right)\text{(for CE)}.
\]
\end{prop}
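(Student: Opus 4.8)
The plan is to establish the proposition in two independent parts: (1) with high probability the empirical IDE step produces a set $\tilde{\cA}_1 \times \cdots \times \tilde{\cA}_N$ that is a valid set of all rationalizable actions up to tolerance $\Delta$, and (2) an $\epsilon'$-CCE/CE computed on this subgame is automatically an $\epsilon$-CCE/CE of the full game. The sample complexity then follows by simply adding the two phases' costs.

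First I would control the estimation error of the empirical game. By playing each profile $a \in \cA$ exactly $M$ times and averaging the bounded ($[0,1]$) feedback, a Hoeffding bound gives $|\hat u_i(a) - u_i(a)| \le \Delta/4$ for a single $(i,a)$ with probability at least $1-2e^{-2M(\Delta/4)^2} = 1 - 2e^{-M\Delta^2/8}$. With $M = \lceil 256\log(1/\delta')/\Delta^2 \rceil$ and $\delta' = \delta/(A^N N)$, this per-entry failure probability is at most $\delta'$, and a union bound over all $N \cdot A^N$ pairs $(i,a)$ yields that $\|\hat u_i - u_i\|_\infty \le \Delta/4$ simultaneously for all $i$ with probability at least $1-\delta$. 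I would condition on this good event for the remainder.

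The heart of the argument is the inductive claim that running $(\Delta/2)$-IDE on $\{\hat u_i\}$ yields a set with the two required properties. The key observation is that a uniform $\Delta/4$ perturbation of utilities shifts every dominance margin $u_i(x,a_{-i}) - u_i(a,a_{-i})$ by at most $\Delta/2$. Hence, by induction on the elimination round $l$, I would show that any action eliminated at true tolerance $0$ (i.e., a $0$-IDA) is also eliminated in the empirical $(\Delta/2)$-IDE, so $\tilde\cA$ contains all $0$-rationalizable actions; conversely, any action eliminated in the empirical $(\Delta/2)$-IDE is a genuine $\Delta$-IDA in the true game, so every action in $\tilde\cA$ is $\Delta$-rationalizable. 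The induction must carefully track that the \emph{surviving} sets in the empirical and true processes are nested correctly at each round, since the belief constraint ``$a_{-i} \cap E_{l-1} = \emptyset$'' ranges over the opponents' surviving profiles and mismatched survivor sets could otherwise break the argument—\textbf{this bookkeeping is the main obstacle}. Finally, since $\tilde\cA$ is $\Delta$-rationalizable and utilities of the subgame are unchanged, any $\epsilon'$-CCE/CE of the subgame is a $\Delta$-rationalizable strategy; I would invoke the fact that an $\epsilon'$-equilibrium supported on $\tilde\cA$ remains an $\epsilon'$-CCE/CE of the full game, because no player can profitably deviate to an eliminated action (such deviations are strictly worse by the dominance guarantee), giving the full-game equilibrium with $\epsilon = \epsilon'$. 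The stated complexity follows by adding the enumeration cost $N \cdot A^N \cdot M = \tilde O(N A^N/\Delta^2)$ to the oracle cost $\tilde O(A/\epsilon^2)$ or $\tilde O(A^2/\epsilon^2)$ from~\citet{jin2021v}.
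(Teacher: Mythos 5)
Your proposal follows essentially the same route as the paper's proof: Hoeffding plus a union bound over all $N\cdot A^N$ pairs $(i,a)$ gives a uniformly $O(\Delta)$-accurate empirical game; an induction over elimination rounds (with exactly the nested-survivor-set bookkeeping you flag) shows that every $\Delta$-IDA of the true game is eliminated by the empirical $(\Delta/2)$-IDE, so $\tilde{\cA}$ contains only $\Delta$-rationalizable actions; and the subgame $\epsilon'$-CCE/CE extends to the full game because every eliminated action is dominated, over the surviving opponent profiles, by a mixed strategy on surviving actions. The sample-count bookkeeping is identical.

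One logical slip, though it lands in a part that the proposition does not actually need: you assert that ``any action eliminated at true tolerance $0$ (i.e., a $0$-IDA) is also eliminated in the empirical $(\Delta/2)$-IDE, so $\tilde{\cA}$ contains all $0$-rationalizable actions.'' That intermediate claim is false --- a domination with margin exactly $0$ can be destroyed by the $\pm\Delta/4$ perturbation, so a $0$-IDA need not be $(\Delta/2)$-dominated empirically --- and even if it were true it would yield $\tilde{\cA}\subseteq\{0\text{-rationalizable}\}$ rather than the claimed $\supseteq$. The correct version of that direction is the converse: every action eliminated by the empirical $(\Delta/2)$-IDE has true margin at least $\Delta/4>0$ against the relevant survivors, hence is a genuine IDA, so no $0$-rationalizable action is removed. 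Since the proposition only requires (i) that $\tilde{\cA}$ consist of $\Delta$-rationalizable actions and (ii) that the subgame equilibrium remain an equilibrium of the full game, neither of which uses this direction, your argument for the stated claim still goes through; the paper likewise proves only these two facts (its Lemmas on the empirical IDE and on the extension of the subgame equilibrium) and does not bother with the containment of all $0$-rationalizable actions.
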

Unfortunately, the sample complexity of this algorithm is \emph{exponential} in the number of players. As we will see, this exponential dependency is intrinsic to any algorithms which rely on identifying the set of \emph{all} rationalizable actions up to certain tolerance. In fact, we can consider a even simpler task of \emph{deciding} whether a given action $a$ is rationalizable up to tolerance $\Delta$, i.e., output (a) \emph{yes} if $a$ is $0$-rationalizable; (b) \emph{no} if $a$ is not $\Delta$-rationalizable; (c) \emph{arbitrary answer} otherwise.
Our next proposition shows that even this simpler task requires a number of samples exponential in the number of players. The intuition behind this hardness result is that verifying action dominance requires the enumeration of the joint action space of other players, which is exponentially large.
%the following hardness result, showing that even \emph{deciding} whether a given action is rationalizable is statistically hard. The intuition behind this result is that verifying a dominated action requires to search the joint action space of the opponents.
\begin{prop}
\label{prop:decidehard}
For any $\Delta<0.1$, there exists a class of games where deciding whether a given action is rationalizable up to tolerance $\Delta$ with $0.9$ probability needs $\Omega(A^{N-1}\Delta^{-2})$ samples.
\end{prop}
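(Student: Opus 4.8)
The plan is to prove a lower bound via an information-theoretic indistinguishability argument, constructing a family of games in which any action's rationalizability status hinges on the utility at a single, adversarially hidden joint action profile of the other players. The core idea exploits the observation already stated in the excerpt: verifying $\Delta$-dominance of an action requires checking the inequality $u_i(a,a_{-i}) \le u_i(x,a_{-i}) - \Delta$ across \emph{all} $a_{-i}$, and the joint action space $\prod_{j\neq i}\ca_j$ has size $A^{N-1}$. If whether $a$ is dominated depends on the reward at just one carefully planted profile among these $A^{N-1}$ possibilities, then any algorithm must effectively search for that profile, which under bandit feedback requires $\Omega(A^{N-1})$ plays just to locate it, with an extra $\Delta^{-2}$ factor to resolve the reward to accuracy $\Delta$.

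Concretely, I would design a base game and a perturbed game that differ only in the reward of player $i$ (for the action $a$ in question) at a single hidden profile $a_{-i}^\* \in \prod_{j\neq i}\ca_j$. In the base game, $a$ is strictly dominated by some mixed (or pure) strategy $x$ with margin $\Delta$ at every $a_{-i}$, so $a \in E_1$ and is \emph{not} even $\Delta$-rationalizable, forcing the correct answer to be \emph{no}. In the perturbed game, I raise player $i$'s reward for playing $a$ at the single profile $a_{-i}^\*$ by enough (roughly $\Delta$ plus a small slack) so that the dominance inequality fails there, making $a$ a potential best response and hence $0$-rationalizable, forcing the answer \emph{yes}. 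Since the two games yield opposite required outputs, any correct decision procedure must distinguish them. The challenge is to make these local changes while keeping the rest of the game fixed and ensuring no other structural signals leak the answer; I would keep all other players' utilities and all other action-profile utilities identical across the two games, and make the reward distributions (e.g.\ Bernoulli) differ only at the single coordinate $(a, a_{-i}^\*)$.

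The heart of the argument is then a standard distinguishing-probability / KL-divergence bound. The only profiles whose feedback distributions differ between the two games are those involving player $i$ playing $a$ together with the others playing $a_{-i}^\*$. To distinguish the games with probability at least $0.9$, the algorithm must query this exact profile enough times; by Pinsker's inequality (or Le Cam's two-point method), the expected number of such queries must be $\Omega(\Delta^{-2})$ to overcome the $O(\Delta)$ gap in the Bernoulli means. To force $\Omega(A^{N-1})$ total samples, I would randomize the location of the hidden profile $a_{-i}^\*$ uniformly over the $A^{N-1}$ candidates (a standard averaging/Yao-type reduction): since the algorithm does not know which profile is special and receives identical feedback on all un-probed profiles, it cannot concentrate its queries, so in expectation it must probe a constant fraction of the $A^{N-1}$ profiles before hitting the relevant one. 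Multiplying the search cost $\Omega(A^{N-1})$ by the resolution cost $\Omega(\Delta^{-2})$ gives the claimed $\Omega(A^{N-1}\Delta^{-2})$ lower bound.

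The step I expect to be the main obstacle is combining the two lower-bound mechanisms cleanly, namely the $A^{N-1}$ search factor and the $\Delta^{-2}$ estimation factor, into a single rigorous bound rather than merely asserting their product. A naive union of the two arguments risks double counting or an incorrect dependence, so I would set it up as a single hypothesis-testing problem over a prior: the null game (answer \emph{no}) versus a uniform mixture over $A^{N-1}$ perturbed games (each with answer \emph{yes}, planting the reward bump at a different profile). Bounding the total-variation distance between the feedback transcripts under the null and under the mixture, and showing it stays below $0.8$ unless the algorithm collects $\Omega(A^{N-1}\Delta^{-2})$ samples, is the delicate calculation; it requires carefully accounting for the fact that each individual game differs from the null only on a single profile, so the per-sample information gain is both small (the $\Delta$ gap) and sparsely located (one of $A^{N-1}$ profiles), which is precisely what yields the product in the final bound.
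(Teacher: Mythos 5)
Your construction is essentially the paper's: a null game $G_0$ in which the queried action of one player is uniformly $\Delta$-dominated (answer \emph{no}), versus perturbed games $G_{a^*}$, one for each of the $A^{N-1}$ joint profiles $a^*$ of the other players, in which the reward of that action is bumped up only at the profile $a^*$ so that it becomes a strict best response there and hence $0$-rationalizable (answer \emph{yes}); the feedback distributions differ only on the single profile $(a,a^*)$, and Pinsker plus the KL chain rule force $\Omega(\Delta^{-2})$ visits to that profile. Where you diverge is the step you yourself flag as the main obstacle: extracting the $A^{N-1}$ factor. You propose a Yao-type mixture, bounding the total variation between the transcript under $G_0$ and under a uniform mixture of the $G_{a^*}$'s, which is workable but genuinely delicate (one has to control the $\chi^2$-type overlap terms of the mixture). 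The paper's route is simpler and sidesteps the double-counting worry entirely: it runs the two-point argument separately for each pair $(G_0, G_{a^*})$, all under the \emph{same} null measure, concluding $\E_{G_0}[n(a^*)] \ge \frac{1}{10\Delta^2}$ for \emph{every} $a^*$ simultaneously; since the counts $n(a^*)$ for distinct profiles $a^*$ are disjoint and sum to at most the total number of samples, summing over the $A^{N-1}$ profiles immediately gives $\E_{G_0}[\text{samples}] \ge \frac{A^{N-1}}{10\Delta^2}$. No prior over locations and no mixture TV bound is needed. If you adopt that summation step, your argument closes cleanly; one small point to keep in mind when formalizing is that the perturbed game must make the action $0$-rationalizable (not merely ``not $\Delta$-dominated''), which the paper ensures by setting the planted reward to $2\Delta$ against the competing actions' constant $\Delta$, and which your ``$\Delta$ plus slack'' bump also achieves.
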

\section{Learning Rationalizable Action Profiles}
\label{sec:oneaction}
\iffalse
\yw{Formally state that deciding whether an action is realizable needs $\Omega(A^N)$ samples?}\dingwen{Though quite intuitive, I think this is a nice argument. Perhaps we can write a short lemma on this.}

\begin{itemize}
    \item Find all rationalizable actions is hard
    \item Easier task is to find any subset of rationalizable actions; this is the same task considered in Wu et al.
    \item In this section we design algorithm for finding subset..., which improves over Wu et al.'s result and serves as subroutine for later equilibria finding algorithms
    \item Dominance-solvable games can find PSNE
\end{itemize}
\fi
%Unfortunately, as shown by Proposition~\ref{prop:decidehard}, even the simpler problem of \emph{deciding} whether one single action is rationalizable is statistically hard.

%statistically intractable to even \emph{decide} whether one single action is rationalizable or not.

% The negative result in the last section motivates us to consider an easier task: can we at least find one rationalizable action profile sample-efficiently?

To avoid the expoenential dependency in the number of players, we develop efficient approaches for learning rationalizable equilibria \emph{without} identifying the set of all rationalizable actions. In this section, we first consider an easier task---finding one rationalizable action profile that is not necessarily an equilibirum.
Formally, we say a action profile $(a_1, \ldots, a_N)$ is rationalizable if for all $i \in [N]$, $a_i$ is a rationalizable action.
This is arguably one of the most fundamental tasks regarding rationalizability.
%Unarguably, this is the most basic task regarding rationablizability.
For \emph{mixed-strategy dominance solvable} games~\citep{alon2021dominance}, the unique rationalizable action profile will be the unique NE and also the unique CE of the game. Therefore this easier task \emph{per se} is still of practical importance.

This section provides a sample-efficient algorithm which finds a rationalizable action profile using only $\tilde{O}\left(\frac{LNA}{\Delta^2}\right)$ samples. This algorithm will also serve as an important subroutine for algorithms finding rationalizable CCE/CE in the later sections.

The intuition behind this algorithm is simple: if an action profile $a_{-i}$ can survive $l$ rounds of IDE, then its best response $a_i$ (i.e., $\argmax_{a\in{\cA}_i}{u}_i(a, a_{-i})$) can survive at least $l+1$ rounds of IDE, since the action $a_i$ can only be eliminated after some actions in $a_{-i}$ are eliminated. Concretely, we start from an arbitrary action profile $(a_1^{(0)},\ldots,a_N^{(0)})$. In each round $l\in[L]$, we compute the (empirical) best response of $a_{-i}^{(l-1)}$ for each $i\in[N]$, and use those best responses to construct a new action profile $(a_1^{(l)},\ldots,a_N^{(l)})$. By constructing iterative best responses, we will end up with an action profile that can survive $L$ rounds of IDE, which means surviving any number of rounds of IDE according to the definition of $L$.
The full algorithm is presented in Algorithm~\ref{alg:findaction}, for which we have the following theoretical guarantee. %Theorem~\ref{thm:ide_upper} states the theoretical guarantee for Algorithm~\ref{alg:findaction}.

%\chijin{point out that surviving $L$ rounds means surviving any round.}
%\vspace{-0.1cm}
\begin{algorithm}[t]
	\caption{Iterative Best Response\label{alg:findaction}}
	\begin{algorithmic}[1]
		%\STATE \textbf{Input:} Number of actions $A$, number of players $N$
		\STATE \textbf{Initialization:} choose $a_i^{(0)} \in \cA_i$ arbitrarily for all $i\in[N]$
		\FOR{$l=1,\cdots,L$}
		\FOR{$i\in [N]$}
		    \STATE For all $ a\in \cA_i$, play $(a, a^{(l-1)}_{-i})$ for $M$ times, compute player $i$'s average payoff  $\hat{u}_i(a, a^{(l-1)}_{-i})$ 
		  %\STATE For all $ a\in \cA_i$, evaluate $u_i(a, a^{(l-1)}_{-i})$ for $M$ times and compute average $\hat{u}_i(a, a^{(l-1)}_{-i})$ 
		  %with $\tilde{O}(1/\Delta^2)$ samples so that w.h.p. $\Vert U-\hat U\Vert_\infty \le \Delta/4$
          \STATE 	Set $a^{(l)}_i\leftarrow\argmax_{a\in{\cA}_i}\hat{u}_i(a, a^{(l-1)}_{-i})$ \COMMENT{ Computing the empirical best response}
		\ENDFOR
		\ENDFOR
		\RETURN{$(a^{(L)}_1,\cdots,a^{(L)}_N)$}
	\end{algorithmic}
\end{algorithm}
%\vspace{-0.3cm}
%We have the following guarantee for Algorithm~\ref{alg:findaction}.
\begin{theorem}
\label{thm:ide_upper}
With $M=\left\lceil\frac{16\ln(LNA/\delta)}{\Delta^2}\right\rceil$, with probability $1-\delta$, Algorithm~\ref{alg:findaction} returns an action profile that is $\Delta$-rationalizable using a total of $\tilde{O}\left(\frac{LNA}{\Delta^2}\right)$ samples.
\end{theorem}

%\chijin{change the statement ``that survives $L$ rounds of $\Delta$-IDE'' to the terminology of $\Delta$-rationalizable}
% \dingwen{as well? But it's not the same task rigorously...}
% %\begin{remark}
% \citet{wu2021multi} considers the task of finding rationalizable strategies as well. They show that the Exp3-DH algorithm is able to reach strategies with at most $\zeta$ fraction supported on $\Delta$-IDAs 

\citet{wu2021multi} provide the first polynomial sample complexity results for finding rationalizable action profiles. They prove that their algorithm Exp3-DH is able to find a distribution with $1-\zeta$ fraction supported on $\Delta$-rationalizable actions 
using $\tilde{O}\left(\frac{L^{1.5}N^3A^{1.5}}{\zeta^3\Delta^3}\right)$
samples under bandit feedback\footnote{\citet{wu2021multi}'s result allows trade-off between variables via different choice of algorithmic parameters. However, a $\zeta^{-1}\Delta^{-3}$ factor is unavoidable regardless of choice of parameters. }. 
% Note that Exp3-DH also only finds a subset instead of all rationalizable actions and can be justifiably compared with Theorem~\ref{thm:ide_upper}. 
Compared to their result, our sample complexity bound $\Tilde{O}\left(\frac{LNA}{\Delta^2}\right)$ has more favorable dependence on all problem parameters, and our algorithm will output a distribution that is fully supported on rationalizable actions (thus has no dependence on $\zeta$).  %Note that Exp3-DH also only finds one instead of all rationalizable strategy --- inevitably so due to Proposition~\ref{prop:decidehard} --- and can be justifiably compared to Theorem~\ref{thm:ide_upper}. Our $\Tilde{O}\left(\frac{LNA}{\Delta^2}\right)$ sample complexity bound has more favorable dependency on all problem parameters, and our algorithm is guaranteed to output a rationalizable action profile with high probability (and has no dependency on $p$).  

%\chijin{I feel the sentence ``justifiably compared to Theorem~\ref{thm:ide_upper}'' reads a bit weird...}
%\yw{Do you mean that we are comparing algorithm to theorem, or that the sentence structure is weird?}
%Note that Exp3-DH only finds a subset of rationalizable actions --- which is also inevitable as finding the full set of rationalizable actions is statistically hard --- which is the same task dealt by Algorithm~\ref{alg:findaction}. In comparison, our $\Tilde{O}\left(\frac{LNA}{\Delta^2}\right)$ samples complexity bound has more favorable dependency on all problem parameters, and our algorithm is guaranteed to output a rationalizable action profile with high probability (and thus has no dependency on $p$).
%\end{remark}

We further complement Theorem~\ref{thm:ide_upper} with a sample complexity lower bound showing that the linear dependency on $N$ and $A$ are optimal. This lower bound suggests that the $\tilde{O}\left(\frac{LNA}{\Delta^2}\right)$ upper bound is tight up to logarithmic factors when $L=O(1)$, and we conjecture that this is true for general $L$.

\begin{theorem}
\label{thm:ide_lower}
Even for games with $L\le 2$,  any algorithm that returns a $\Delta$-rationalizable action profile with $0.9$ probability needs $\Omega\left(\frac{NA}{\Delta^{2}}\right)$ samples.
%Any algorithm that returns an action profile surviving $L=2$ rounds of IDE with $0.9$ probability needs $\Omega\left(\frac{NA}{\Delta^2}\right)$ samples.
\end{theorem}
%We remark that finding an action profile surviving $L=1$ round of IDE only needs $\tilde{O}(\frac{A}{\Delta^2})$ samples, which 
%Therefore, Theorem~\ref{thm:ide_lower} serves as a formal barrier between the difficulties of identifying iterative dominance and one-round dominance. We conjecture the following minimax optimal rate for finding an action profile surviving $L$ rounds of $\Delta$-IDE.
%This lower bound suggests that the $\tilde{O}\left(\frac{LNA}{\Delta^2}\right)$ upper bound is tight up to logarithmic factors when $L=O(1)$, and we conjecture that this is true for general $L$.
%\chijin{need to fix the language of the last two sentences}
%\vspace{-0.2cm}
%\paragraph{Conjecture} \emph{The minimax optimal sample complexity for finding a $\Delta$-rationalizable action profile is $\Theta\left(\frac{LNA}{\Delta^2}\right)$ for games with minimum elimination length $L$.}
\begin{conjecture}
\label{conjecture:minimax}
The minimax optimal %~\yub{technically it's ``minimax'' sample complexity? (minimax optimal is for an algorithm)}~\yub{$NA\Delta^{-2}$ and $\frac{LNA}{\Delta^2}$ are different styles, consider unifying this throughout paper} 
sample complexity for finding a $\Delta$-rationalizable action profile is $\Theta\left(\frac{LNA}{\Delta^2}\right)$ for games with minimum elimination length $L$.
\end{conjecture}
%\chijin{again, change to $\Delta$-rationalizable}

%\dingwen{This algorithm can be executed in a decentralized fashion.}
%\newpage
%\newpage
\section{Learning Rationalizable Coarse Correlated Equilibria (CCE)}
\label{sec:cce}
In this section we introduce our algorithm for efficiently learning rationalizable CCEs. The high-level idea is to run no-regret Hedge-style algorithms for every player, while constraining the strategy inside the rationalizable region. Our algorithm is motivated by the fact that the probability of playing a dominated action will decay exponentially over time in the Hedge algorithm for adversarial bandit under full information feedback~\citep{cohen2017hedging}. The full algorithm description is provided in Algorithm \ref{alg:CCE-adaptive}, and here we explain several key components in our algorithm design.
%\dingwen{should we introduce the weighted algorithm?}
%\subsection{Algorithm}

%See our algorithm in Algorithm \ref{alg:CCE-adaptive}. We explain here several key components in our algorithm design.
\textbf{Correlated exploration scheme.} In the bandit feedback setting, standard exponential weights algorithms such as EXP3.IX
require \emph{importance sampling} and \emph{biased estimators} to derive a high-probability regret bound~\citep{neu2015explore}. However, such bias could cause a dominating strategy to lose its advantage. In our algorithm we adopt a correlated exploration scheme, which essentially simulates each player's full information feedback by bandit feedback using $NA$ samples.
% (i.e. her full utility vector) 
%\chijin{to simulate the fully information feedback on $\theta$... we can...}
%for every $i\in [N]$, play $(a_i,a_{-i})$ where player $i$ loops over $a_i\in\{1,\dots,A\}$, and the other players sample $a_{-i}\sim \theta_{-i}$. 
Specifically, at every time step $t$, the players take turn to enumerate their action set, while the other players fix their strategies according to Hedge. 
It is important to note that such correlated scheme \emph{does not} require any communication between the players---the players can schedule the whole process before the game starts. %\chijin{explain what is $\theta$}

% \chijin{define what do we mean by ``full information'' feedback?}

\begin{figure}[t]
\begin{algorithm}[H]
	\caption{Hedge for Rationalizable $\epsilon$-CCE
	\label{alg:CCE-adaptive}}
	\begin{algorithmic}[1]
       \STATE $(a^\*_1,\cdots,a^\*_N)\leftarrow\texttt{Algorithm~\ref{alg:findaction}}$
       \STATE For all $i\in[N]$, initialize $\theta_i^{(1)}(\cdot)\leftarrow \one[\cdot = a^\*_i]$
       \FOR{$t=1,\cdots,T$}
            \FOR{$i=1,\cdots,N$}
           \STATE  For all $a\in\cA_i$, play $(a,\theta_{-i}^{(t)})$ for $M_t$ times, compute player $i$'s average payoff $u_i^{(t)}(a)$  %evaluate $u_i(a, \theta_{-i}^{(t)})$ for $M_t$ times and compute a minibatch average $u_i^{(t)}(a)$ 
		    \STATE Set $\theta^{(t+1)}_i(\cdot) \propto \exp\left(\eta_{t}\sum_{\tau=1}^{t} u_i^{(\tau)}(\cdot)\right)$
		    \ENDFOR
		\ENDFOR
	  \STATE For all $t\in[T]$ and $i\in[N]$, eliminate all actions in $\theta_i^{(t)}$ with probability smaller than $p$, then renormalize the vector to simplex as $\bar{\theta}_i^{(t)}$	\STATE \textbf{output:} $\left(\sum_{t=1}^T \otimes_{i=1}^n \bar{\theta}^{(t)}_i\right) / T$
	\end{algorithmic}
\end{algorithm}
\end{figure}

\textbf{Rationalizable initialization and variance reduction.} We use Algorithm~\ref{alg:findaction}, which learns a rationalizable action profile, to give the strategy for the first round. By carefully preserving the disadvantage of any iteratively dominated action, we keep the iterates inside the rationalizable region throughout the whole learning process. To ensure this for every iterate with high probability, a minibatch is used to reduce the variance of the estimator.

%Since a anytime guarantee is required, we use minibatch to stabilize the estimator. 
\textbf{Clipping.} In the final step, we clip all actions with small probabilities, so that iteratively dominated actions do not appear in the output. The threshold is small enough to not affect the $\epsilon$-CCE guarantee.

\subsection{Theoretical Guarantee}
In Algorithm~\ref{alg:CCE-adaptive}, we choose parameters in the following manner: \begin{equation}
\label{eqn:para_CCE}
\eta_t=\max\left\{\sqrt{\frac{\ln A}{t}},\frac{4\log(1/p)}{\Delta t}\right\}, M_t=\left\lceil\frac{64\log(ANT/\delta)}{\Delta^2 t}\right\rceil, \text{ and }p=\frac{\min\{\epsilon,\Delta\}}{8AN}.
\end{equation}
Note that our learning rate can be bigger than the standard learning rate in FTRL algorithms when $t$ is small. The purpose is to guarantee the rationalizability of the iterates from the beginning of the learning process. As will be shown in the proof, this larger learning rate will not hurt the final rate. We now state the theoretical guarantee for Algorithm~\ref{alg:CCE-adaptive}.
%\chijin{is "to guarantee effective IDE" equivalent to "to guarantee rationalizability of the iterates" }
\begin{theorem}
\label{thm:cce_main}
With parameters chosen as in~Eq.\eqref{eqn:para_CCE} , after
$T=\tilde{O}\left(\frac{1}{\epsilon^2}+\frac{1}{\epsilon\Delta}\right)$ rounds, with probability $1-3\delta$, the output strategy
%$\Pi = \left(\sum_{t} \otimes_{i=1}^n \bar{\theta}^{(t)}_i\right) / T$ 
of Algorithm~\ref{alg:CCE-adaptive} is a $\Delta$-rationalizable $\epsilon$-CCE. %, \emph{i.e.}, an $\epsilon$-CCE that only plays $\Delta$-rationalizable actions.
The total sample complexity is \[\tilde{O}\left(\frac{LNA}{\Delta^2} + \frac{NA}{\epsilon^2}\right).\]
\end{theorem}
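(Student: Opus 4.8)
The plan is to establish three statements in turn: (i) the output is $\Delta$-rationalizable, (ii) the output is an $\epsilon$-CCE, and (iii) the total sample count is $\tilde{O}(LNA/\Delta^2+NA/\epsilon^2)$. Everything happens on a single high-probability event obtained by a union bound over all players $i\in[N]$, actions $a\in\cA_i$, and rounds $t\in[T]$, on which each minibatch average $u_i^{(t)}(a)$ concentrates around its conditional mean $u_i(a,\theta_{-i}^{(t)})$. Because $u_i^{(t)}(a)$ averages $M_t$ samples in $[0,1]$ and $M_t=\lceil 64\log(ANT/\delta)/(\Delta^2 t)\rceil\ge 1$, martingale (Freedman/Bernstein-type) bounds control both the per-round deviation and the \emph{cumulative} deviation $\sum_{\tau\le t}(u_i^{(\tau)}(a)-u_i(a,\theta_{-i}^{(\tau)}))$; the ceiling, which forces $M_t=1$ for large $t$, is what keeps the summed variances small later on. I would also invoke Theorem~\ref{thm:ide_upper} so that, with probability $1-\delta$, the profile $a^\*$ returned by Algorithm~\ref{alg:findaction} is $\Delta$-rationalizable and hence $\theta_i^{(1)}$ vanishes on $E_L$.

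For rationalizability, the core claim—proved by induction on the elimination level $l$—is that $\theta_i^{(t)}(a)\le p^2$ for every round $t\in[T]$, player $i$, and action $a\in E_l$; since $p^2<p$, clipping at threshold $p$ then deletes every iteratively dominated action from the output. For the inductive step, fix $a\in E_l\setminus E_{l-1}$ with $\Delta$-dominating mixture $x\in\Delta(\cA_i)$, so that $u_i(x,a_{-i})-u_i(a,a_{-i})\ge\Delta$ whenever $a_{-i}\cap E_{l-1}=\emptyset$. The inductive hypothesis bounds the mass that $\theta_{-i}^{(\tau)}$ places on $E_{l-1}$ by $NAp^2$, so the expected per-round gap $u_i(x,\theta_{-i}^{(\tau)})-u_i(a,\theta_{-i}^{(\tau)})$ is at least $\Delta-2NAp^2$, which the choice $p=\min\{\epsilon,\Delta\}/(8NA)$ keeps close to $\Delta$. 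Summing over $\tau\le t$ and subtracting the cumulative-deviation bound from the concentration event gives $\sum_{\tau\le t}(u_i^{(\tau)}(x)-u_i^{(\tau)}(a))\ge\Delta t/2$. Lower-bounding the Hedge partition function by the dominating mixture via Jensen's inequality then yields $\theta_i^{(t+1)}(a)\le\exp(-\eta_t\sum_{\tau\le t}(u_i^{(\tau)}(x)-u_i^{(\tau)}(a)))\le\exp(-\eta_t\Delta t/2)$, and the adaptive lower bound $\eta_t\ge 4\log(1/p)/(\Delta t)$ forces the right-hand side below $p^2$, closing the induction. Propagating the cross-level leakage through this induction while keeping the minibatch deviations controlled for \emph{all} $t$ simultaneously is the step I expect to be the main obstacle.

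For the $\epsilon$-CCE guarantee, the CCE gap of $\bar\Pi=\frac1T\sum_t\otimes_i\bar\theta_i^{(t)}$ for player $i$ and deviation $a'$ equals the average regret $\frac1T\sum_t(u_i(a',\bar\theta_{-i}^{(t)})-u_i(\bar\theta_i^{(t)},\bar\theta_{-i}^{(t)}))$, which I would bound in three pieces. First, a perturbation argument: clipping discards mass at most $Ap$ per player, so replacing $\bar\theta^{(t)}$ by $\theta^{(t)}$ changes any utility by $O(NAp)\le\epsilon/4$. Second, the FTRL/Hedge bound with decreasing step sizes, $\reg\le \frac{\ln A}{\eta_T}+\frac12\sum_t\eta_t$, controls the regret measured on the estimated utilities $u_i^{(t)}(\cdot)=u_i(\cdot,\theta_{-i}^{(t)})$; the larger early step sizes inflate $\sum_t\eta_t$ by only $\tilde{O}(1/\Delta)$, giving average regret $\tilde{O}(\sqrt{\ln A/T}+1/(\Delta T))$. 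Third, the estimated-versus-true utility gap is a martingale whose summed conditional variance is dominated by the late rounds (where $M_t=1$), so $\sum_t\sigma_t^2=\tilde{O}(T+1/\Delta^2)$ and the average estimation error is $\tilde{O}(\sqrt{1/T}+1/(\Delta T))$. Combining the three pieces bounds the average regret by $\tilde{O}(\sqrt{1/T}+1/(\Delta T))$, which is at most $\epsilon$ once $T=\tilde{O}(1/\epsilon^2+1/(\epsilon\Delta))$.

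Finally, the sample complexity adds the $\tilde{O}(LNA/\Delta^2)$ samples of Algorithm~\ref{alg:findaction} to the main loop's cost $\sum_{t=1}^T NA\,M_t$. Since $M_t=\lceil\tilde{O}(1/(\Delta^2 t))\rceil$, the harmonic-type sum gives $\sum_t M_t=\tilde{O}(1/\Delta^2+T)$, so the loop uses $\tilde{O}(NA/\Delta^2+NAT)=\tilde{O}(NA/\Delta^2+NA/\epsilon^2)$ samples, where I used $1/(\epsilon\Delta)\le\frac12(1/\epsilon^2+1/\Delta^2)$. Adding the two contributions yields the claimed $\tilde{O}(LNA/\Delta^2+NA/\epsilon^2)$, and a union bound over the three failure events—initialization, rationalizability concentration, and regret concentration—gives overall success probability $1-3\delta$.
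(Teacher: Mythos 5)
Your proposal is correct and follows essentially the same route as the paper's proof: rationalizable initialization via Theorem~\ref{thm:ide_upper}, Azuma--Hoeffding control of the cumulative minibatch noise, an induction showing iteratively dominated actions keep probability below the clipping threshold (the paper inducts on the round $t$ over all of $E_L$ using $E_L=E_{L+1}$, whereas you induct on the elimination level $l$ with threshold $p^2$ --- an equivalent reorganization), followed by the same FTRL-regret-plus-martingale argument and clipping perturbation for the $\epsilon$-CCE guarantee and the same harmonic-sum computation $\sum_t M_t=\tilde{O}(1/\Delta^2+T)$ for the sample complexity. The step you flagged as a potential obstacle (cross-level leakage) is handled exactly as you sketch, since the uniform threshold makes the per-round gap $\Delta-O(NAp^2)\ge\Delta/2$ at every level.
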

Here, by  $\Delta$-rationalizable $\epsilon$-CCE, we mean an $\epsilon$-CCE that plays only $\Delta$-rationalizable actions a.s.
%\yub{$\epsilon$-CCE fully supported on $\Delta$-rationalizable actions (hopefully it is clear what ``supported'' means for a correlated strategy).}
%\yw{Create conjecture env and ref}
\begin{remark} Due to our lower bound (Theorem \ref{thm:ide_lower}), an $\tilde{O}(\frac{NA}{\Delta^2})$ term is unavoidable since learning a rationalizable action profile is an easier task than learning rationalizable CCE.
Based on our Conjecture~\ref{conjecture:minimax}, the additional $L$ dependency is also likely to be inevitable.
% $\tilde{O}(\frac{LNA}{\Delta^2})$ term is likely inevitable since learning one rationalizable action profile is an easier task than learning rationalizable CCE. 
On the other hand, learning an $\epsilon$-CCE alone only requires $\tilde{O}(\frac{A}{\epsilon^2})$ samples, where as in our bound we have a larger $\tilde{O}(\frac{NA}{\epsilon^2})$ term. %which is smaller than our bound by a $N$ factor.
The extra $N$ factor is a consequence of our correlated exploration scheme in which only one player explores at a time. Removing this $N$ factor might require more sophisticated exploration methods and utility estimators, which we leave as future work.
%We leave designing better exploration schemes and estimators and removing this extra $N$ factor as future work.
%We leave designing new exploration scheme and estimator using techniques in, e.g., \citet{lee2020bias}, and further removing this $N$ factor as future work.
\end{remark}
%\begin{proof}[Proof Sketch]
%The proof splits into two parts, i.e., rationalizability and CCE.

\subsubsection{Overview of the analysis}
We give an overview of our analysis of Algorithm~\ref{alg:CCE-adaptive} below. The full proof is deferred to Appendix~\ref{sec:cceproof}.

\textbf{Step 1: Ensure rationalizability.} We will first show that rationalizability is preserved at each iterate, \emph{i.e.}, actions in $E_L$ will be played with low probability across all iterates. Formally,
% . This is formally stated in the following lemma.
\begin{lem}
\label{lem:CCE_lowprob}
With probability at least $1-2\delta$, for all $t\in[T]$ and all $i\in[N]$, $a_i\in\cA_i\cap E_L$,
\begin{align*}
    \textstyle{ \theta_i^{(t)}(a_i)\leq p.}
\end{align*}
\end{lem}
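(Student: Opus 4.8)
The plan is to prove, by induction on the elimination level $l=1,\dots,L$, the stronger statement $P(l)$: for all $t\in[T]$, $i\in[N]$ and $a\in\cA_i\cap E_l$, we have $\theta_i^{(t)}(a)\le p$. The lemma is then $P(L)$. Throughout I condition on the event (of probability at least $1-\delta$ by Theorem~\ref{thm:ide_upper}) that the initializer $a^\*=(a^\*_1,\dots,a^\*_N)$ produced by Algorithm~\ref{alg:findaction} is $\Delta$-rationalizable, so that $a^\*_i\notin E_L$ and hence $\theta_i^{(1)}(a)=\one[a=a^\*_i]=0$ for every $a\in E_L$; this settles the round $t=1$ at every level and contributes one $\delta$ to the failure probability.

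For the update itself, note that for any $t\ge1$, any $a\in\cA_i$ and any comparator $x\in\Delta(\cA_i)$, the exponential-weights step combined with Jensen's inequality gives
\begin{equation*}
\theta_i^{(t+1)}(a)=\frac{\exp(\eta_t U_i^{(t)}(a))}{\sum_{a'}\exp(\eta_t U_i^{(t)}(a'))}\le\exp\big(\eta_t(U_i^{(t)}(a)-U_i^{(t)}(x))\big),\qquad U_i^{(t)}(b):=\sum_{\tau=1}^t u_i^{(\tau)}(b),
\end{equation*}
with $U_i^{(t)}(x):=\sum_b x(b)U_i^{(t)}(b)$. Since $\eta_t\ge\frac{4\log(1/p)}{\Delta t}$, it suffices to show, taking $x$ to be the strategy that $\Delta$-dominates $a$ in Definition~\ref{def:ration}, that $U_i^{(t)}(a)-U_i^{(t)}(x)\le-\frac{\Delta t}{4}$, which drives the exponent below $\log p$ and yields $\theta_i^{(t+1)}(a)\le p$.

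I would control this cumulative gap by splitting it into a drift term and a martingale term. Writing $\cF_{\tau-1}$ for the history determining the round-$\tau$ strategies, the minibatch plays $(b,\theta_{-i}^{(\tau)})$ so $\EE[u_i^{(\tau)}(b)\mid\cF_{\tau-1}]=u_i(b,\theta_{-i}^{(\tau)})$. For the drift I decompose $u_i(a,\theta_{-i}^{(\tau)})-u_i(x,\theta_{-i}^{(\tau)})$ over opponent profiles: on profiles that avoid $E_{l-1}$ the $\Delta$-domination gives a gap $\le-\Delta$, while the remaining profiles contribute at most their total mass (using the trivial bound $1$). Since $\theta_{-i}^{(\tau)}=\otimes_{j\ne i}\theta_j^{(\tau)}$, the inductive hypothesis $P(l-1)$ bounds the mass each opponent places on any action of $E_{l-1}$ by $p$, so a union bound puts mass at most $NAp\le\Delta/8$ on the ``bad'' profiles. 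Hence the per-round conditional drift is at most $-\Delta+(1+\Delta)\tfrac{\Delta}{8}\le-\tfrac34\Delta$ (using $\Delta\le1$), and summing gives drift $\le-\frac{3\Delta t}{4}$. In the base case $l=1$ the domination holds against all opponent profiles, so the same bound holds with no bad mass and no appeal to an inductive hypothesis.

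The remaining martingale deviation $\sum_{\tau\le t}D_\tau$, where $D_\tau:=(u_i^{(\tau)}(a)-u_i^{(\tau)}(x))-\EE[u_i^{(\tau)}(a)-u_i^{(\tau)}(x)\mid\cF_{\tau-1}]$, must be kept below $\frac{\Delta t}{2}$, and this is the crux. The main obstacle is that the minibatch $M_\tau\propto1/\tau$ \emph{shrinks}, so the per-round estimation error grows like $\Delta\sqrt\tau$ and a naive per-round union bound fails (its sum scales like $\Delta t^{3/2}$). Instead I use that each $u_i^{(\tau)}(\cdot)$ is an average of $M_\tau$ bounded i.i.d.\ samples, so $D_\tau$ is conditionally sub-Gaussian with variance proxy $\le\frac{1}{2M_\tau}$; the Azuma--Hoeffding bound for sub-Gaussian martingale differences then gives $\Pr[\sum_{\tau\le t}D_\tau\ge\frac{\Delta t}{2}]\le\exp\big(-\frac{(\Delta t/2)^2}{\sum_{\tau\le t}1/M_\tau}\big)$. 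The choice $M_\tau=\lceil\frac{64\log(ANT/\delta)}{\Delta^2\tau}\rceil$ is tuned precisely for this: one checks the clean uniform bound $\frac{1}{M_\tau}\le\frac{\Delta^2\tau}{64\log(ANT/\delta)}$ (valid both when the minibatch is active and when $M_\tau=1$), whence $\sum_{\tau\le t}1/M_\tau\le\frac{\Delta^2t^2}{128\log(ANT/\delta)}$ and the tail is at most $(ANT/\delta)^{-\Omega(1)}$. A union bound over the at most $NAT$ triples $(t,i,a)$ keeps this part of the failure probability below $\delta$, which together with the initializer's $\delta$ gives the claimed $1-2\delta$, completing the induction and hence the proof.
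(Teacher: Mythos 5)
Your proposal is correct, and all of the core ingredients match the paper's proof in Appendix~\ref{sec:cceproof}: the pointwise exponential-weights bound $\theta_i^{(t+1)}(a)\le\exp\bigl(-\eta_t\sum_{\tau\le t}(u_i^{(\tau)}(x)-u_i^{(\tau)}(a))\bigr)$ via Jensen, the decomposition of the cumulative gap into an $\Omega(\Delta t)$ drift (using that opponents place at most $NAp$ mass on eliminated actions) plus a martingale term, the observation that $M_\tau\propto 1/\tau$ makes $\sum_{\tau\le t}1/M_\tau=O(\Delta^2t^2/\log(\cdot))$ so Azuma--Hoeffding beats the drift, and the aggressive learning rate $\eta_t\ge 4\log(1/p)/(\Delta t)$ that converts the gap into the bound $p$. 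The one genuine structural difference is the induction variable: the paper inducts on the round index $t$, treating all of $E_L$ in one layer by exploiting the fixed-point property $E_{L+1}=E_L$ (so every $a\in E_L$ is $\Delta$-dominated against profiles avoiding $E_L$, and the hypothesis at rounds $\tau\le t$ supplies the needed opponent behavior); you instead induct on the elimination depth $l$, invoking $P(l-1)$ at \emph{all} rounds to control the drift for actions in $E_l$, with no inner induction on $t$. Both are valid and of comparable length; your version makes the layered structure of Definition~\ref{def:ration} explicit, while the paper's collapses the layers and is closer in spirit to how the same lemma is re-used in the CE analysis. Two cosmetic points: your bound $\sum_{\tau\le t}1/M_\tau\le\Delta^2t^2/(128\log(ANT/\delta))$ should have $64$ rather than $128$ in the denominator since $\sum_{\tau\le t}\tau=t(t+1)/2>t^2/2$, which is harmless given the large slack in the exponent; and when you union-bound over triples $(t,i,a)$ you should note that the comparator $x$ is the single canonical dominating strategy attached to $a$ at the level where $a$ first enters some $E_l$, so no extra union over comparators is needed (the paper sidesteps this by bounding the noise per action and applying a triangle inequality over the mixture $x$).
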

where $p$ is defined in \eqref{eqn:para_CCE}. Lemma~\ref{lem:CCE_lowprob} guarantees that, after the clipping in Line 7 of Algorithm~\ref{alg:CCE-adaptive}, the output correlated strategy be $\Delta$-rationalizable.
%Recall that $p=\min\{\epsilon,\Delta\}/(2NA)$.
%With Lemma~\ref{lem:CCE_lowprob}, all actions in $E_L$ will be eliminated in Line 9 of Algorithm~\ref{alg:CCE-adaptive}. Therefore, the output strategy will not contain any $\Delta$-IDAs.

We proceed to explain the main idea for proving Lemma~\ref{lem:CCE_lowprob}.
A key observation is that the set of rationalizable actions, $\cup_{i=1}^n \cA_i\setminus E_L$, is closed under best response---for the $i$-th player, as long as the other players continue to play actions in $\cup_{j\neq i} \cA_j\setminus E_L$, actions in $\cA_i\cap E_L$ will suffer from excess losses each round in an exponential weights style algorithm.
Concretely, for any $a_{-i}\in(\prod_{j\neq i}\cA_j)\setminus E_L$ and any iteratively dominated action $a_i\in \cA_i\cap E_L$, there always exists  $x_i\in\Delta(\cA_i)$ such that
\[
u_i(x_i,a_{-i})\geq u_i(a_i,a_{-i}) + \Delta.
\]
With our choice of $p$ in Eq.~(\ref{eqn:para_CCE}), if other players choose their actions from $\cup_{j\neq i}\cA_j \setminus E_L$ with probability $1-pAN$, we can still guarantee an excess loss of $\Omega(\Delta)$. It follows that
%Recall that $p$ is chosen so that $NAp<\Delta/4$. In other words, if Lemma~\ref{lem:CCE_lowprob} holds for $\tau=1,\cdots,t$, for any 
%Therefore if other players choose their actions from $\cup_{j\neq i}\cA_j \setminus E_L$ with probability $1-pN$, 
\[
\sum_{\tau=1}^{t} u_i^{(\tau)}(x_i)-\sum_{\tau=1}^{t} u_i^{(\tau)}(a_i)\geq \Omega(t\Delta) - \text{Sampling Noise}.
\]
However, this excess loss can be obscured by the noise from bandit feedback when $t$ is small. Note that it is crucial that the statement of Lemma~\ref{lem:CCE_lowprob} holds for all $t$ due to the inductive nature of the proof.
%Note that we need anytime guarantee since we need to control the policy inside $\cup_{i=1}^N\cA_i/E_L$ along the whole process. 
As a solution, we use a minibatch of size $M_t=\tilde{O}\left(\lceil\frac{1}{\Delta^2 t}\right\rceil)$ in the $t$-th round to reduce the variance of the payoff estimator $u_i^{(t)}$. The noise term can now be upper-bounded with Azuma-Hoeffding by
\[
\text{Sampling Noise}\leq \tilde{O}\left(\sqrt{\sum_{\tau=1}^t \frac{1}{M_t}}\right)\leq O(t\Delta),
\]
Combining this with our choice of the learning rate $\eta_t$ gives
%Therefore by our choice of the learning rate, we will have
\begin{equation}
\label{eqn:1}
\eta_t\left(\sum_{\tau=1}^{t} u_i^{(\tau)}(x_i)-\sum_{\tau=1}^{t} u_i^{(\tau)}(a_i)\right)\gg 1.
\end{equation}
{By the update rule of the Hedge algorithm}, this implies that $\theta_i^{(t+1)}(a_i)\leq p$, which enables us to complete the proof of Lemma~\ref{lem:CCE_lowprob} via induction on $t$.
%\chijin{make sure red colored text is correct} It is.

\textbf{Step 2: Combine with no-regret guarantees.} Next, we prove that the output strategy is an $\epsilon$-CCE. For a player $i\in[N]$, the regret is defined as
$
{\rm Regret}_T^i=\max_{\theta\in\Delta({\cA_i})}\sum_{t=1}^T \langle  u^{(t)}_i, \theta-\theta_i^{(t)}\rangle.
$
We can obtain the following regret bound by standard analysis of FTRL with changing learning rates.
\begin{lemma}
\label{lem:CCE-Regret}
For all $i\in[N]$, ${\rm Regret}_T^i\leq\tilde{O}\left(\sqrt{T}+\frac{1}{\Delta}\right)$.
\end{lemma}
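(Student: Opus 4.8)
The plan is to recognize the update in Line 6 of Algorithm~\ref{alg:CCE-adaptive} as exponential weights---equivalently, FTRL with the negative-entropy regularizer---run with the time-varying learning-rate sequence $\eta_t$ from Eq.~\eqref{eqn:para_CCE}, and then to invoke the standard regret bound for FTRL with non-increasing learning rates. The first thing I would verify is monotonicity: since both $\sqrt{\ln A/t}$ and $\frac{4\log(1/p)}{\Delta t}$ are decreasing in $t$, their maximum $\eta_t$ is non-increasing, which is exactly the hypothesis the changing-learning-rate analysis requires. Because each estimate $u_i^{(t)}$ takes values in $[0,1]^{\cA_i}$, the per-round stability term is $O(1)$, and the standard bound yields
$$
{\rm Regret}_T^i \;\le\; \frac{\ln A}{\eta_T} \;+\; O\!\left(\sum_{t=1}^T \eta_t\right).
$$

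Next I would bound the two terms separately. For the leading term, $\eta_T \ge \sqrt{\ln A/T}$ by definition of the maximum, so $\frac{\ln A}{\eta_T} \le \sqrt{T\ln A} = \tilde{O}(\sqrt{T})$. For the sum, writing $a_t=\sqrt{\ln A/t}$ and $b_t=\frac{4\log(1/p)}{\Delta t}$, the crude but sufficient inequality $\sum_t \max\{a_t,b_t\}\le \sum_t a_t + \sum_t b_t$ decouples the two regimes:
$$
\sum_{t=1}^T \eta_t \;\le\; \sqrt{\ln A}\sum_{t=1}^T \frac{1}{\sqrt{t}} \;+\; \frac{4\log(1/p)}{\Delta}\sum_{t=1}^T \frac{1}{t} \;\le\; 2\sqrt{T\ln A} \;+\; \frac{4\log(1/p)}{\Delta}\bigl(1+\ln T\bigr).
$$
Since $p=\min\{\epsilon,\Delta\}/(8AN)$, the factor $\log(1/p)$ is logarithmic in $N,A,1/\epsilon,1/\Delta$ and is hidden by $\tilde{O}$, as is the harmonic $\ln T$ factor. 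Hence the first part is $\tilde{O}(\sqrt{T})$ and the second is $\tilde{O}(1/\Delta)$, giving ${\rm Regret}_T^i \le \tilde{O}(\sqrt{T}+1/\Delta)$ as claimed.

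The computation is largely routine, but the one point deserving care---and really the whole purpose of the lemma---is confirming that the inflated learning rate used in the early rounds (the $\frac{4\log(1/p)}{\Delta t}$ branch, which exceeds the usual $\Theta(1/\sqrt{t})$ rate) costs only an additive $\tilde{O}(1/\Delta)$ and does not degrade the $\sqrt{T}$ dependence. This is precisely where the $1/t$ (rather than $1/\sqrt{t}$) shape of that branch is essential: a $1/t$ rate sums to a harmonic series $\tilde{O}(1/\Delta)$, whereas a $1/\sqrt{t}$ inflation would have summed to $\Theta(\sqrt{T})$ with a $1/\Delta$ prefactor and spoiled the bound. I would therefore present the monotonicity check and the decoupled sum bound as the two load-bearing steps, and treat the invocation of the standard FTRL-with-changing-rate regret inequality (and the harmless role of the vertex initialization $\theta_i^{(1)}=\one[\cdot=a_i^\*]$, absorbed into the $\ln A$ range of the regularizer) as a citation rather than a derivation.
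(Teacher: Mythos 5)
Your proposal is correct and follows essentially the same route as the paper's proof: both identify the update as FTRL with monotonically decreasing learning rates, invoke the standard changing-learning-rate bound (the paper cites Orabona, Corollary 7.9) to get $\frac{\ln A}{\eta_T}+O(\sum_t\eta_t)$, and bound the sum by splitting the max into its two branches, with the $1/t$ branch summing to a harmonic $\tilde O(1/\Delta)$ term. The paper handles the non-uniform vertex initialization with an additive constant, matching your remark that it is harmless.
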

Here the additive $1/\Delta$ term is the result of our larger $\tilde{O}(\Delta^{-1}t^{-1})$ learning rate for small $t$. It follows from Lemma~\ref{lem:CCE-Regret} that $T=\tilde{O}\left(\frac{1}{\epsilon^2}+\frac{1}{\Delta\epsilon}\right)$ suffices to guarantee that the correlated strategy $\frac{1}{T}\left(\sum_{t=1}^T \otimes_{i=1}^n {\theta}^{(t)}_i\right)$ is an $(\epsilon/2)$-CCE. Since $pNA=O(\epsilon)$, the clipping step only minorly affects the CCE guarantee and the clipped strategy  $\frac{1}{T}\left(\sum_{t=1}^T \otimes_{i=1}^n \bar{\theta}^{(t)}_i\right)$ is an $\epsilon$-CCE.
%the clipping of actions will not compromise the approximate CCE guarantee.  
%guaranteed to be a $\epsilon$-CCE. Note that since $pNA=O(\epsilon)$, the clipping of actions will not compromise the approximate CCE guarantee.  
%\end{proof}

\subsection{Learning rationalizable Nash equilibrium in zero-sum games}
Algorithm~\ref{alg:CCE-adaptive} can also be applied to two-player zero-sum games to learn a rationalizable $\epsilon$-NE efficiently.
Note that in two-player zero-sum games, the marginal distribution of an $\epsilon$-CCE is guaranteed to be a $2\epsilon$-Nash (see, \emph{e.g.}, Proposition 9 in \citet{bai2020near}). Hence direct application of Algorithm~\ref{alg:CCE-adaptive} to a zero-sum game gives the following sample complexity bound.
\begin{coro}
In a two-player zero-sum game, the sample complexity for finding a $\Delta$-rationalizable $\epsilon$-Nash with Algorithm~\ref{alg:CCE-adaptive} is $\tilde{O}\left(\frac{LA}{\Delta^2} + \frac{A}{\epsilon^2}\right)$.
\end{coro}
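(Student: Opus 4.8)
The plan is to derive this corollary as a nearly immediate consequence of Theorem~\ref{thm:cce_main} combined with the standard fact that in two-player zero-sum games the marginals of an approximate CCE form an approximate Nash equilibrium. First I would specialize Theorem~\ref{thm:cce_main} to $N=2$: running Algorithm~\ref{alg:CCE-adaptive} with target accuracy $\epsilon/2$ produces, with high probability, a $\Delta$-rationalizable $(\epsilon/2)$-CCE $\Pi$, and substituting $N=2$ into the sample-complexity bound $\tilde{O}(\frac{LNA}{\Delta^2}+\frac{NA}{\epsilon^2})$ yields $\tilde{O}(\frac{LA}{\Delta^2}+\frac{A}{\epsilon^2})$, since both the constant factor $N=2$ and the replacement of $\epsilon$ by $\epsilon/2$ are absorbed into the $\tilde{O}$ notation.

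Next I would pass from the correlated object $\Pi$ to a product strategy by taking marginals. Let $x_1,x_2$ denote the marginal distributions of $\Pi$ over $\cA_1$ and $\cA_2$, respectively. By the cited reduction (Proposition 9 of~\citet{bai2020near}), in a zero-sum game the marginals of an $(\epsilon/2)$-CCE constitute a $2\cdot(\epsilon/2)=\epsilon$-Nash equilibrium, so $(x_1,x_2)$ is an $\epsilon$-Nash equilibrium.

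It then remains to verify that rationalizability is inherited under marginalization. Since Theorem~\ref{thm:cce_main} guarantees that $\Pi$ is supported almost surely on $\Delta$-rationalizable action profiles, every profile $(a_1,a_2)$ in the support of $\Pi$ has both coordinates $\Delta$-rationalizable; hence each marginal $x_i$ places mass only on $\Delta$-rationalizable actions of player $i$. Combining the three points, $(x_1,x_2)$ is a $\Delta$-rationalizable $\epsilon$-Nash equilibrium obtained with the claimed sample complexity.

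I do not expect a genuine obstacle here: the entire content is carried by Theorem~\ref{thm:cce_main} and the zero-sum CCE-to-Nash reduction, and the only points requiring care are the harmless constant-factor bookkeeping in the accuracy parameter and the (equally elementary) observation that the marginals of a correlated strategy supported on rationalizable profiles remain supported on rationalizable actions.
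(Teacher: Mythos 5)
Your proposal is correct and follows exactly the paper's own route: specialize Theorem~\ref{thm:cce_main} to $N=2$, invoke the zero-sum CCE-to-Nash marginalization (Proposition 9 of \citet{bai2020near}) with the harmless factor-of-two adjustment, and note that the marginals of a correlated strategy supported on $\Delta$-rationalizable profiles remain supported on $\Delta$-rationalizable actions. No gaps.
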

This result improves over a direct application of Proposition~\ref{prop:nash}, which gives $\Tilde{O}\left(\frac{A^3}{\Delta^4}+\frac{A}{\epsilon^2}\right)$ sample complexity and produces an $\epsilon$-Nash that could still take unrationalizable actions with positive probability.

\section{Learning Rationalizable Correlated Equilibrium (CE)}
\label{sec:ce}
%\vspace{-0.1cm}
In order to extend our results on $\epsilon$-CCE to $\epsilon$-CE, a natural approach would be augmenting Algorithm~\ref{alg:CCE-adaptive} with the celebrated  Blum-Mansour reduction~\citep{blum2007external} from swap-regret to external regret. In this reduction, one maintains $A$ instances of a no-regret algorithm $\{\rm Alg_1,\cdots,Alg_A\}$. In iteration $t$, the player would stack the recommendations of the $A$ algorithms as a matrix, denoted by $\hat \theta^{(t)}\in\R^{A\times A}$, and compute its eigenvector $\theta^{(t)}$ as the randomized strategy in round $t$. After observing the actual payoff vector $u^{(t)}$, it will pass the weighted payoff vector $\theta^{(t)}(a)u^{(t)}$ to algorithm {$\rm Alg_a$} for each $a$. In this section, we focus on a fixed player $i$, and omit the subscript $i$ when it's clear from the context.%for notational simplicity.

Applying this reduction to Algorithm~\ref{alg:CCE-adaptive} directly, however, would fail to preserve rationalizability since the weighted loss vector $\theta^{(t)}(a)u^{(t)}$ admit a smaller utility gap $\theta^{(t)}(a)\Delta$. Specifically, consider an action $b$ dominated by a mixed strategy $x$. In the payoff estimate of instance $a$, %Consequently, if action $b$ is dominated by $x$,
\begin{align}
    \sum_{\tau=1}^t \theta^{(\tau)}(a)\left(u^{(\tau)}(b)-u^{(\tau)}(x)\right) \gtrsim \Delta\sum_{\tau=1}^t \theta^{(\tau)}(a) - \sqrt{\sum_{\tau=1}^t \frac{1}{M^{(\tau)}}} \ngeq 0,\label{eq:ce-azuma1}
\end{align}
which means that we cannot guarantee the elimination of IDAs every round as in Eq~\eqref{eqn:1}.

In Algorithm~\ref{alg:CE-adaptive}, we address this by making $\sum_{\tau=1}^t \theta^{(\tau)}(a)$ play the role as $t$, tracking the progress of each no-regret instance separately. In time step $t$, we will compute the average payoff vector $u^{(t)}$ based on $M^{(t)}$ samples; then as in the Blum-Mansour reduction, we will update the $A$ instances of Hedge with weighted payoffs $\theta^{(t)}(a)u^{(t)}$ and will use the eigenvector of $\hat\theta$ as the strategy for the next round. The key detail here is our choice of parameters, which adapts to the past strategies $\{\theta^{(\tau)}\}_{\tau=1}^t$:
\begin{equation}
\label{eq:ce-param}
    M^{(t)}_i:=\left\lceil\max_a\frac{64\theta_i^{(t)}(a)}{\Delta^2\cdot  \sum_{\tau=1}^t \theta^{(\tau)}_i(a)}\right\rceil,\eta_{t,i}^a:=\max\left\{\frac{2\ln(1/p)}{\Delta\sum_{\tau=1}^{t}\theta_i^{(\tau)}(a)},\sqrt{\frac{A \ln A}{t}}\right\},p=\frac{\min\{\epsilon,\Delta\}}{8AN}.
\end{equation}
Compared to Eq~\eqref{eqn:para_CCE}, we are essentially replacing $t$ with an adaptive $\sum_{\tau=1}^t \theta^{(\tau)}(a)$. We can now improve (\ref{eq:ce-azuma1}) to
\begin{equation}
\label{eq:ceazuma2}
    \sum_{\tau=1}^t \theta^{(\tau)}(a)\left(u^{(\tau)}(b)-u^{(\tau)}(x)\right) \gtrsim \Delta\sum_{\tau=1}^t \theta^{(\tau)}(a) - \sqrt{\sum_{\tau=1}^t \frac{\theta^{(\tau)}(a)^2}{M^{(\tau)}}} \gtrsim \Delta\sum_{\tau=1}^t \theta^{(\tau)}(a) .
\end{equation}

\iffalse
We can now show via Azuma-Hoeffding that
\begin{align*}
     \sum_{\tau=1}^t \theta^{(\tau)}(a)\left(u^{(\tau)}(b)-u^{(\tau)}(x)\right) &\gtrsim \Delta\sum_{\tau=1}^t \theta^{(\tau)}(a) - \sqrt{\sum_{\tau=1}^t \frac{\theta^{(\tau)}(a)^2}{M^{(\tau)}}} \nonumber \\
     & \gtrsim \Delta\sum_{\tau=1}^t \theta^{(\tau)}(a) - \sqrt{\Delta^2\sum_{\tau=1}^t \theta^{(\tau)}(a)\sum_{j=1}^\tau \theta^{(j)}(a)} \gtrsim \Delta\sum_{\tau=1}^t \theta^{(\tau)}(a) . %\label{eq:ceide}
\end{align*}
\fi
%\chijin{this Azuma-Hoeffding appears a bit out of nowhere, for reader without much background, it's quite unclear why this concentration inequality plays a key role in ensuring rationalizability. Maybe make a bit connection to the proof overview in the last section to make this look more natural?}
%\yw{We also used Azuma-Hoeffding a few times in Section 4 and earlier in Section 5. I dropped the mention of Azuma-Hoeffding and emphasized comparison with (3)}
This together with our choice of $\eta_t^a$ allows us to ensure the rationalizability of every iterate. The full algorithm is presented in Algorithm~\ref{alg:CE-adaptive}.

\begin{algorithm}[t]
	\caption{Adaptive Hedge for Rationalizable $\epsilon$-CE \label{alg:CE-adaptive}}
	\begin{algorithmic}[1]
		%\STATE \textbf{Input:} Action space $\cA$, number of actions $A$
		\STATE $(a^\*_1,\cdots,a^\*_N)\leftarrow\texttt{Algorithm~\ref{alg:findaction}}$
		\STATE For all $i\in[N]$, initialize $\theta_i^{(1)}\gets (1-|\ca_i|p) \one[\cdot = a^\*_i]+ p\one$
		%\STATE $\theta_1(a^*_1)\gets 1-p(A-1)/NA$, $\theta_1(a)\neq p/(N(A))$ for 
		%\STATE Initialize $\theta_1$ which has at most $p$ probability on actions in $E_L$
 	   \FOR{$t=1,2,\ldots,T$} \FOR{$i=1,2,\ldots,N$}
 	   \STATE  For all $a\in\cA_i$, play $(a,\theta_{-i}^{(t)})$ for $M^{(t)}_i$ times, compute player $i$'s average payoff $u_i^{(t)}(a)$ 
		%\STATE For all $a\in\cA_i$, evaluate $u_i(a, \theta_{-i}^{(t)})$ for $M_i^{(t)}$ times and compute minibatch average $u_i^{(t)}(a)$
	    \STATE For all $b\in\cA_i$, set $\hat\theta_i^{(t+1)}(\cdot|b) \propto \exp\left(\eta_{t,i}^b  \sum_{\tau=1}^t u_i^{(\tau)}(\cdot)\theta_i^{(\tau)}(b)\right)$
	    \STATE Find $\theta_i^{(t+1)}\in\Delta({\cA_i})$ such that $\theta_i^{(t+1)}(a) = \sum_{b\in \ca_i}\hat\theta^{(t+1)}_i(a|b)\theta^{(t+1)}_i(b)$
		\ENDFOR
		\ENDFOR
		\STATE For all $t\in[T]$ and $i\in[N]$, eliminate all actions in $\theta_i^{(t)}$ with probability smaller than $p$, then renormalize the vector to simplex as $\bar{\theta}_i^{(t)}$	\STATE \textbf{output:} $\left(\sum_{t=1}^T \otimes_{i=1}^n \bar{\theta}^{(t)}_i\right) / T$
	\end{algorithmic}
\end{algorithm}

\iffalse
\begin{lemma}
\label{lem:CE-IDE}
Algorithm~\ref{alg:CE-adaptive} preserves $\Delta$-rationalizability. That is, $\forall a\in E_L$, $\forall b\in [N]$, $t\in [T]$, $\hat\theta^{(t)}(a|b)\le p$.
\end{lemma}
Here $p$ is an algorithmic parameter which we will choose to be $p=\Delta/(2N)$. Next, we can obtain swap-regret bounds for every player by bounding the regret of each Hedge instance with standard analysis of FTRL with changing learning rates.
\begin{lemma}
\label{lem:CE-Regret}
Algorithm~\ref{alg:CE-adaptive} has $\tilde{O}\left(\sqrt{AT}+\frac{A}{\Delta}\right)$ swap regret.
\end{lemma}
Combining Lemma~\ref{lem:CE-IDE} and~\ref{lem:CE-Regret} proves our main theoretical result on finding rationalizable approximate correlated equilibrium.
\fi

We proceed to our theoretical guarantee for Algorithm~\ref{alg:CE-adaptive}. The analysis framework is largely similar to that of Algorithm~\ref{alg:CCE-adaptive}. Our choice of $M_i^{(t)}$ is sufficient to ensure $\Delta$-rationalizability via Azuma-Hoeffding inequality, while swap-regret analysis of the algorithm proves that the average (clipped) strategy is indeed an $\epsilon$-CE. The full proof is deferred to Appendix~\ref{sec:ce-proof}. 
\begin{theorem}
\label{thm:ce_main}
With parameters in Eq.~\eqref{eq:ce-param}, after $T=\tilde{O}\left(\frac{A}{\epsilon^2}+\frac{A}{\Delta^2}\right)$ rounds, with probability $1-3\delta$, the output strategy of Algorithm~\ref{alg:CE-adaptive} is a $\Delta$-rationalizable $\epsilon$-CE . The total sample complexity is
\begin{align*}
    \Tilde{O}\left(\frac{LNA}{\Delta^2}+\frac{NA^2}{\min\{\Delta^2,\epsilon^2\}}\right).
\end{align*}
\end{theorem}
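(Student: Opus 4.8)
The plan is to reuse the two-part template developed for Algorithm~\ref{alg:CCE-adaptive}, now adapted to the Blum-Mansour swap-regret reduction: first show that every iterate stays inside the rationalizable region so that the clipped output is $\Delta$-rationalizable, and then show that each player's swap regret is small so that the time-averaged strategy is an $\epsilon$-CE. \textbf{Step 1 (rationalizability).} I would prove the analogue of Lemma~\ref{lem:CCE_lowprob}: with probability $1-2\delta$, for every $t$, player $i$, instance $b\in\cA_i$, and iteratively dominated action $a\in\cA_i\cap E_L$, one has $\hat\theta_i^{(t)}(a|b)\le p$. Since $\theta_i^{(t)}(a)=\sum_b \hat\theta_i^{(t)}(a|b)\theta_i^{(t)}(b)$, this forces $\theta_i^{(t)}(a)\le p$, so clipping removes all IDAs. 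The argument is a nested induction: an outer induction on the elimination layer $l$ (handling $a\in E_l\setminus E_{l-1}$ while assuming all actions of $E_{l-1}$ are played with aggregate probability $\le pAN$), and an inner induction on $t$. Fixing $i$ and $b$, the layer hypothesis supplies a mixed strategy $x$ with $u_i(x,a_{-i})\ge u_i(a,a_{-i})+\Delta$ whenever $a_{-i}$ avoids $E_{l-1}$; since opponents draw from $E_{l-1}$ with probability $\le pAN$, the per-round weighted advantage of $x$ over $a$ remains $\Omega(\Delta)$. Summing the weighted payoff gaps and applying Azuma-Hoeffding exactly as in Eq.~\eqref{eq:ceazuma2} gives $\sum_{\tau\le t}\theta_i^{(\tau)}(b)\bigl(u_i^{(\tau)}(x)-u_i^{(\tau)}(a)\bigr)\gtrsim \Delta\sum_{\tau\le t}\theta_i^{(\tau)}(b)$. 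Feeding this into the Hedge update for instance $b$ (using Jensen on the partition function to handle the mixed dominating $x$) together with the learning rate $\eta_{t,i}^b\ge \frac{2\ln(1/p)}{\Delta\sum_{\tau\le t}\theta_i^{(\tau)}(b)}$ from Eq.~\eqref{eq:ce-param} forces $\hat\theta_i^{(t+1)}(a|b)\le p$, closing the induction.

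\textbf{The main obstacle} is making this martingale concentration uniform in $t$ while the weights $\theta_i^{(\tau)}(b)$ and batch sizes $M_i^{(\tau)}$ are themselves random and adapted to the filtration. The crucial cancellation is that the adaptive choice $M_i^{(t)}=\lceil \max_a \frac{64\,\theta_i^{(t)}(a)}{\Delta^2\sum_{\tau\le t}\theta_i^{(\tau)}(a)}\rceil$ makes the noise satisfy $\sqrt{\sum_{\tau\le t}\theta_i^{(\tau)}(b)^2/M_i^{(\tau)}}\lesssim \sqrt{\Delta^2\sum_{\tau\le t}\theta_i^{(\tau)}(b)\sum_{j\le\tau}\theta_i^{(j)}(b)}\le \Delta\sum_{\tau\le t}\theta_i^{(\tau)}(b)$, so the $\Omega(\Delta)$ signal dominates the noise simultaneously for all $t$. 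Proving this requires replacing the fixed clock $t$ of the CCE analysis by the running cumulative mass $\sum_{\tau\le t}\theta_i^{(\tau)}(b)$ as the effective horizon and carefully handling its data dependence; a union bound over $t\in[T]$, players, the $A$ instances, and the $\le L$ layers then controls the overall failure probability.

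\textbf{Step 2 (swap regret).} For the equilibrium guarantee I would establish the analogue of Lemma~\ref{lem:CCE-Regret}: each player's swap regret is $\tilde{O}(\sqrt{AT}+A/\Delta)$. By the Blum-Mansour reduction the swap regret equals the sum over $b$ of the external regret of the Hedge instance run on weighted losses $\theta^{(\tau)}(b)u^{(\tau)}$; standard FTRL-with-changing-rates analysis bounds instance $b$'s weighted regret by $\tilde{O}\bigl(\tfrac{1}{\eta}+\eta\sum_\tau\theta^{(\tau)}(b)^2\bigr)$. The larger learning rate $\eta^b=\Theta\bigl(1/(\Delta\sum_\tau\theta^{(\tau)}(b))\bigr)$ used for small cumulative mass contributes the additive $A/\Delta$ (just as the $1/\Delta$ term arose for CCE), while the $\sqrt{A\ln A/t}$ branch yields $\sqrt{AT}$ after summing $\sum_b\theta^{(\tau)}(b)=1$. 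Requiring this swap regret to be $\le \epsilon T/2$ gives $T=\tilde{O}(A/\epsilon^2+A/(\Delta\epsilon))$, which is subsumed by the stated $T=\tilde{O}(A/\epsilon^2+A/\Delta^2)$ by AM-GM, and certifies $\frac1T\sum_t\otimes_i\theta_i^{(t)}$ as an $(\epsilon/2)$-CE. Since $pNA\le \epsilon/8$, clipping perturbs each player's utility by $O(\epsilon)$, so the clipped output $\frac1T\sum_t\otimes_i\bar\theta_i^{(t)}$ is an $\epsilon$-CE, and by Step 1 it is $\Delta$-rationalizable.

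\textbf{Sample complexity.} The initialization call to Algorithm~\ref{alg:findaction} costs $\tilde{O}(LNA/\Delta^2)$ by Theorem~\ref{thm:ide_upper}. In round $t$ player $i$ spends $A\,M_i^{(t)}$ samples, and the ceiling plus the max decompose as $\sum_t M_i^{(t)}\le T+\frac{64}{\Delta^2}\sum_t\max_a\frac{\theta_i^{(t)}(a)}{\sum_{\tau\le t}\theta_i^{(\tau)}(a)}\le T+\frac{64}{\Delta^2}\sum_a\sum_t\frac{\theta_i^{(t)}(a)}{\sum_{\tau\le t}\theta_i^{(\tau)}(a)}=\tilde{O}(T+A/\Delta^2)$, since each inner sum telescopes to $O(\log T)$. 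Multiplying by $A$ actions and $N$ players and substituting $T=\tilde{O}(A/\epsilon^2+A/\Delta^2)$ gives a main-loop cost of $\tilde{O}(NA^2/\min\{\epsilon^2,\Delta^2\})$, yielding the claimed total $\tilde{O}\bigl(LNA/\Delta^2+NA^2/\min\{\Delta^2,\epsilon^2\}\bigr)$.
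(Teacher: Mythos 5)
Your proposal follows essentially the same route as the paper's proof: the same adaptive-batch Azuma--Hoeffding bound making the noise $O(\Delta\sum_\tau\theta_i^{(\tau)}(b))$, the same induction on $t$ forcing $\hat\theta_i^{(t)}(a|b)\le p$ and hence $\theta_i^{(t)}(a)\le p$, the same per-expert FTRL analysis giving swap regret $\tilde O(\sqrt{AT}+A/\Delta)$, and the same telescoping bound on $\sum_t M_i^{(t)}$. Two minor remarks: your nested induction over elimination layers is unnecessary, since $E_L=E_{L+1}$ means every IDA is $\Delta$-dominated once all of $E_L$ is avoided, so a single induction on $t$ suffices (as in Lemma~\ref{lem:ce-adaptive-ide-real}); and you implicitly pass from swap regret measured against the sampled payoffs $u_i^{(t)}$ to swap regret against the expected payoffs $u_i(\cdot,\theta_{-i}^{(t)})$, a step the paper makes explicit in Lemma~\ref{lem:cetworegret} via a martingale bound with a union bound over the $A^A$ swap functions $\phi$ (which is also where the extra $\sqrt{A}$ in the concentration term comes from, though it does not change the final rate).
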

Compared to Theorem~\ref{thm:cce_main}, our second term has an additional $A$ factor, which is quite reasonable considering that algorithms for learning $\epsilon$-CE take $\tilde{O}(A^2\epsilon^{-2})$ samples, also $A$-times larger than the $\epsilon$-CCE rate.

%\chijin{write some discussion about the rate, comparing to Theorem 5, why we have extra $A$, what is the standard rate for finding eps-CE, etc }
%Here, by  $\Delta$-rationalizable $\epsilon$-CE, we mean a $\epsilon$-CE that plays only $\Delta$-rationalizable actions a.s.
%where the dependence on $p^{-1}$ is polylogarithmic and suppressed by the $\tilde{O}(\cdot)$ notation.
\iffalse
\begin{proof}
By Lemma~\ref{lem:CE-IDE}, $\forall t$, $\sum_{a\in E_L}\theta_t(a)\le p/N$. It follows that with $\Pi=\sum_t \otimes_i \theta_t^i/T$ has at most $p$ fraction containing actions in $E_L$.

By Lemma~\ref{lem:CE-Regret} and our choice of $T$, the swap-regret of each agent is at most $\epsilon T$. It follows that $\Pi$ is an $\epsilon$-CE.

Finally, the sample complexity can be calculated as
\begin{align*}
    \underbrace{\tilde{O}\left(\frac{LNA}{\Delta^2}\right)}_{\text{Initialization cost}}+\underbrace{NA}_{\text{Simulating full information}}\cdot \sum_{t=1}^T M_t &\le \tilde{O}\left(\frac{LNA}{\Delta^2}\right) + NA\cdot \left(T + \sum_{b}\frac{4C^2\ln T}{\Delta^2}\right)\\
    &= \Tilde{O}\left(\frac{LNA}{\Delta^2} + \frac{NA^2}{\min\{\Delta^2,\epsilon^2\}}\right),
\end{align*}
where we used the fact that
\begin{align*}
    \sum_{t=1}^{\infty} \frac{\theta_t(a)}{\sum_{i=1}^t \theta_i(a)} &\le 1+\int_{\theta_1(a)}^{\sum_{t=1}^T \theta_t(a)}{\frac{{\rm d}s}{s}} = 1 + \ln\left(\frac{\sum_{t=1}^T \theta_t(a)}{\theta_1(a)}\right)\\
    &\le 1 + \ln\left(\frac{TNA}{p}\right).
\end{align*}
\end{proof}
\fi
%\chijin{move lemmas and proof to appendix}
%\vspace{-0.2cm}
\section{Reduction-based Algorithms} 
\label{sec:reduction}
While Algorithm~\ref{alg:CCE-adaptive} and~\ref{alg:CE-adaptive} make use of one specific no-regret algorithm, namely Hedge (Exponential Weights), in this section, we show that arbitrary algorithms for finding CCE/CE can be augmented to find rationalizable CCE/CE. The sample complexity obtained via this reduction is comparable with those of Algorithm~\ref{alg:CCE-adaptive} and~\ref{alg:CE-adaptive} when $L=\Theta(NA)$, but slightly worse when $L\ll NA$. %\chijin{but slightly worse when $L \ll NA$}. 
Moreover, this black-box approach would enable us to derive algorithms for rationalizable equilibria with more desirable qualities, such as last-iterate convergence, when using equilibria-finding algorithms with these properties.

%\yw{Move pseudocode to appendix. Keep brief description}
 \begin{algorithm}[t]
	\caption{Rationalizable $\epsilon$-CCE via Black-box Reduction \label{alg:cce-reduction}}
	\begin{algorithmic}[1]
		%\STATE \textbf{Input:} Number of actions $A$, number of players $N$, target accuracy $\epsilon$
		\STATE $(a^\*_1,\cdots,a^\*_N)\leftarrow\texttt{Algorithm~\ref{alg:findaction}}$
		\STATE For all $i\in[N]$, initialize $\ca_i^{(1)}\gets \{a^\*_i\}$ 
		%\STATE For all $i\in[N]$, find an action $a_i\in\cA_i$ that can survive any rounds of $\epsilon$-IDE, and initialize \[{\cA}_i^{(1)}\leftarrow \left\{a_i\right\}\] 
		\FOR{$t=1,2,\ldots$}
		\STATE Find an $\epsilon'$-CCE $\Pi$ with black-box algorithm $\mathcal{O}$ in the subgame $\Pi_{i\in[N]}{\cA}_i^{(t)}$
		\STATE $\forall i\in [N], a'_i\in\cA_i$, evaluate $ u_i(a'_i, \Pi_{-i})$ for $M$ times and compute average $\hat u_i(a'_i, \Pi_{-i})$  %so that w.h.p. $\Vert U-\hat U\Vert_\infty \le \epsilon/4$
		\FOR{$i\in [N]$}
		    
        	\STATE Let $a_i'\leftarrow\argmax_{a\in{\cA}_i}\hat u_i(a, \Pi_{-i})$ \COMMENT{{ Computing the empirical best response}}
        	\STATE  ${\cA}^{(t+1)}_i\leftarrow {\cA}^{(t)}_i\cup\{a_i'\}$

		\ENDFOR
	    
	    \IF{$\cA_i^{(t)}=\cA_i^{(t+1)}$ for all $i\in[N]$}
		\RETURN $\Pi$
	    \ENDIF
	    \ENDFOR
	\end{algorithmic}
\end{algorithm}
%Let us take the 
Suppose that we are given a black-box algorithm~$\mathcal{O}$ that finds $\epsilon$-CCE in arbitrary games. We can then use this algorithm in the following ``support expansion'' manner. We start with a subgame of only rationalizable actions, which can be identified efficiently with Algorithm~\ref{alg:findaction}, and call~$\mathcal{O}$ to find an $\epsilon$-CCE $\Pi$ for the subgame. Next, we check for each $i\in[N]$ if the best response to $\Pi_{-i}$ is contained in $\ca_i$. If not, this means that the subgame's $\epsilon$-CCE may not be an $\epsilon$-CCE for the full game; in this case, the best response to $\Pi_{-i}$ would be a rationalizable action that we can safely include into the action set. On the other hand, if the best response falls in $\ca_i$ for all $i$, we can conclude that $\Pi$ is also an $\epsilon$-CCE for the original game. The details are given by Algorithm~\ref{alg:cce-reduction}, and our main theoretical guarantee is the following.

\begin{theorem}
\label{thm:cce-reduction}
With $M = \left\lceil\frac{4\ln\left(2NA/\delta\right)}{\epsilon'^2}\right\rceil$, Algorithm~\ref{alg:cce-reduction} outputs a $\Delta$-rationalizable $\epsilon$-CCE with probability at least $1-2\delta$, using at most $NA$ calls to the black-box CCE algorithm and $\tilde{O}\left(\frac{N^2A^2}{\min\{\epsilon^2,\Delta^2\}}\right)$ additional samples.
\end{theorem}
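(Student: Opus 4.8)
The plan is to establish the three claims of the theorem separately: (i) correctness of the rationalizability guarantee, (ii) correctness of the $\epsilon$-CCE guarantee, and (iii) the stated iteration and sample bounds. Throughout I would fix the black-box accuracy at $\epsilon' = \tfrac12\min\{\epsilon,\Delta\}$, so that $\epsilon' < \Delta$ and $2\epsilon' \le \epsilon$ hold simultaneously. The first ingredient is a uniform concentration bound: with the prescribed $M=\lceil 4\ln(2NA/\delta)/\epsilon'^2\rceil$, a Hoeffding bound and a union bound over all pairs $(i,a)$ and all (at most $NA$) iterations show that, with probability at least $1-\delta$, every empirical best-response estimate obeys $|\hat u_i(a,\Pi_{-i}) - u_i(a,\Pi_{-i})| \le \epsilon'/2$. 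Combined with the $1-\delta$ success probability of Algorithm~\ref{alg:findaction} (Theorem~\ref{thm:ide_upper}), this accounts for the $1-2\delta$ overall confidence.

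For rationalizability, I would argue by induction on $t$ that every action set satisfies $\cA_i^{(t)} \cap E_L = \emptyset$. The base case holds because the initializer from Algorithm~\ref{alg:findaction} returns a $\Delta$-rationalizable profile. For the inductive step, since the subgame CCE $\Pi$ is supported on $\prod_i \cA_i^{(t)}$, the marginal $\Pi_{-i}$ is supported entirely on opponent profiles with $a_{-i}\cap E_L=\emptyset$. The key structural fact, read off Definition~\ref{def:ration} at the fixed point $E_L=E_{L+1}$, is that any $a_i\in\cA_i\cap E_L$ is $\Delta$-dominated by some $x_i\in\Delta(\cA_i)$ against all such $a_{-i}$; averaging over $\Pi_{-i}$ and then over the support of $x_i$ produces a pure action $a^\dagger\in\cA_i$ with $u_i(a^\dagger,\Pi_{-i}) \ge u_i(a_i,\Pi_{-i}) + \Delta$. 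Passing to empirical estimates accurate to $\epsilon'/2$ and using $\epsilon'<\Delta$ shows that $a^\dagger$ strictly beats $a_i$ in empirical value, so the empirical best response $a_i'$ cannot lie in $E_L$. Hence every newly added action is rationalizable, the induction closes, and the returned $\Pi$ plays only $\Delta$-rationalizable actions almost surely.

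For the $\epsilon$-CCE guarantee I would analyze the terminating iteration, at which $a_i'\in\cA_i^{(t)}$ for every $i$. Let $a^*$ be the true best response to $\Pi_{-i}$ over the full set $\cA_i$; combining optimality of the empirical argmax ($\hat u_i(a^*,\Pi_{-i})\le \hat u_i(a_i',\Pi_{-i})$) with the two $\epsilon'/2$ error bounds gives $u_i(a^*,\Pi_{-i}) \le u_i(a_i',\Pi_{-i}) + \epsilon'$. Since $a_i'\in\cA_i^{(t)}$ and $\Pi$ is an $\epsilon'$-CCE of the subgame, $\EE_{a\sim\Pi}[u_i(a)] \ge u_i(a_i',\Pi_{-i}) - \epsilon' \ge u_i(a^*,\Pi_{-i}) - 2\epsilon'$. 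As $a^*$ is the best deviation in the full game and $2\epsilon'\le\epsilon$, this certifies $\Pi$ as an $\epsilon$-CCE of the original game.

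For the complexity, each non-terminating iteration adds at least one new action to some $\cA_i$; since $\sum_i|\cA_i|\le NA$ and we begin with $N$ actions, there are at most $NA$ iterations, hence at most $NA$ black-box calls. The additional samples are those of Algorithm~\ref{alg:findaction}, namely $\tilde O(LNA/\Delta^2)$, plus the per-iteration best-response evaluations costing $NA\cdot M$ samples each, for a total of $\tilde O(N^2A^2 M) = \tilde O(N^2A^2/\min\{\epsilon^2,\Delta^2\})$; since $L\le NA$ the former is absorbed into the latter. I expect the main obstacle to be the rationalizability step: the delicate point is that we never compute the rationalizable set (which is intractable by Proposition~\ref{prop:decidehard}), and must instead show that the support-expansion rule keeps every iterate inside the rationalizable region using only the $\Delta$-gap at $E_L=E_{L+1}$ together with $\epsilon'<\Delta$—and balancing this against the $2\epsilon'\le\epsilon$ requirement for the CCE guarantee is exactly what forces the $\min\{\epsilon,\Delta\}$ in the final rate.
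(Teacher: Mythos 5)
Your proposal is correct and follows essentially the same route as the paper's proof: a Hoeffding-plus-union-bound concentration lemma over the at most $NA$ iterations, an induction showing each $\cA_i^{(t)}$ stays inside the $\Delta$-rationalizable set (your ``a dominated action cannot be the empirical argmax against a rationalizable $\Pi_{-i}$'' is just the contrapositive of the paper's ``every newly added action is a near-best response to a rationalizable belief, hence rationalizable''), and the termination condition yielding the $\epsilon$-CCE guarantee. The only differences are cosmetic choices of constants ($\epsilon'=\tfrac12\min\{\epsilon,\Delta\}$ with accuracy $\epsilon'/2$ versus the paper's $\epsilon'=\tfrac13\min\{\epsilon,\Delta\}$ with accuracy $\epsilon'$), which do not affect the stated rate.
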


Similarly, we can develop a reduction scheme for rationalizable $\epsilon$-CE. The algorithm for CE is quite similar to the one for CCE, except now when testing whether a subgame $\epsilon$-CE is an actual $\epsilon$-CE, we need to use the conditional distribution $\Pi|a_i$, which is the conditional distribution of the other players' actions given that player $i$ is told to play $a_i$. The detailed description is given in Algorithm~\ref{alg:ce_2}, and its main theoretical guarantee is the following.%The detailed description for this algorithm is deferred to Appendix~\ref{sec:reductiondetails}. Here we only state its main theoretical guarantee.

\begin{theorem}
\label{thm:ce-reduction}
With $M = \left\lceil\frac{4\ln\left(2NA^2/\delta\right)}{\epsilon'^2}\right\rceil$, Algorithm~\ref{alg:ce_2} outputs a $\Delta$-rationalizable $\epsilon$-CE with probability at least $1-2\delta$, using at most $NA$ calls to a black-box CE algorithm and $\tilde{O}\left(\frac{N^2A^3}{\min\{\epsilon^2,\Delta^2\}}\right)$ additional samples.
\end{theorem}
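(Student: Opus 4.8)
The plan is to mirror the three-part structure used for the CCE reduction in Theorem~\ref{thm:cce-reduction}: show that (i) every action ever inserted into the sets $\ca_i^{(t)}$ is $\Delta$-rationalizable, (ii) the support-expansion loop halts within $NA$ rounds, and (iii) upon halting the returned $\Pi$ is a genuine $\epsilon$-CE of the full game. The one essential departure from the CCE case is that the CE deviation test must be run against the conditional beliefs $\Pi|a_i$ (the distribution of $a_{-i}$ given that $i$ is recommended $a_i$) rather than against the marginal $\Pi_{-i}$; this is exactly what produces the extra factor of $A$, both in the number of empirical best-response estimates and in the final sample count.

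For rationalizability preservation I would induct on $t$ to show that all actions in $\cup_i\ca_i^{(t)}$ are $\Delta$-rationalizable. The base case holds because the initialization is the output of Algorithm~\ref{alg:findaction}. For the inductive step, fix a player $i$ and a recommended $a_i$; by the hypothesis $\Pi|a_i$ is supported on $\prod_{j\ne i}(\ca_j\setminus E_L)$, so I can invoke the closed-under-best-response property of the rationalizable set (the same fact underlying the overview of Lemma~\ref{lem:CCE_lowprob}): any $\Delta$-IDA $a\in\ca_i\cap E_L$ satisfies $u_i(x,\Pi|a_i)\ge u_i(a,\Pi|a_i)+\Delta$ for some mixed $x$. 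Choosing $M$ so each estimate $\hat u_i(\cdot,\Pi|a_i)$ has error below $\Delta/2$ — a Hoeffding bound with a union bound over the $NA^2$ triples $(i,a_i,a')$, matching $M=\lceil 4\ln(2NA^2/\delta)/\epsilon'^2\rceil$ once $\epsilon'\lesssim\Delta$ — then guarantees $\argmax_{a'}\hat u_i(a',\Pi|a_i)$ is never a $\Delta$-IDA, so every newly added action is $\Delta$-rationalizable. Termination is then immediate: each non-terminating round strictly enlarges some $\ca_i^{(t)}$, and $\sum_i|\ca_i|<NA$, bounding the number of rounds, and hence black-box calls, by $NA$.

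For correctness I would use the conditional-distribution decomposition of swap-regret: player $i$'s full-game CE gap under $\Pi$ equals $\sum_{a_i}\Pi(a_i)\bigl[\max_{a'\in\ca_i}u_i(a',\Pi|a_i)-u_i(a_i,\Pi|a_i)\bigr]$. The black-box guarantee bounds the contribution from swaps inside $\ca_i^{(t)}$ by $\epsilon'$. For deviations to actions outside $\ca_i^{(t)}$, the halting condition says the empirical best response to each $\Pi|a_i$ already lies in $\ca_i^{(t)}$; a standard $\hat u\to u$ sandwich (same $\le\epsilon'/2$ accuracy) gives $\max_{a'\in\ca_i}u_i(a',\Pi|a_i)\le\max_{a'\in\ca_i^{(t)}}u_i(a',\Pi|a_i)+\epsilon'$ for every $a_i$, so the full-game gap exceeds the subgame gap by at most $\epsilon'$. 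Taking $\epsilon'=\Theta(\min\{\epsilon,\Delta\})$ makes the total at most $\epsilon$; since $\Pi$ is supported on the sets $\ca_i^{(t)}$, it is simultaneously $\Delta$-rationalizable.

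The sample accounting is then routine: each round spends $M$ samples on each of the $NA^2$ estimates $\hat u_i(a',\Pi|a_i)$ (sample $a_{-i}\sim\Pi|a_i$ and play $(a',a_{-i})$), i.e.\ at most $NA^2M$ per round over $\le NA$ rounds, giving $N^2A^3M=\tilde O\!\left(N^2A^3/\min\{\epsilon^2,\Delta^2\}\right)$ additional samples; the $\tilde O(LNA/\Delta^2)$ initialization cost is dominated since $L<NA$. I expect the main obstacle to be the correctness step — precisely justifying that certifying, one conditional action at a time, that no external deviation beats the internal best response suffices to certify a full-game $\epsilon$-CE, while simultaneously tracking the union-bound and accuracy requirements across all $(i,a_i,a')$ triples and reconciling them with the rationalizability requirement. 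The rationalizability-preservation argument is conceptually identical to the CCE case but must now be run against conditional rather than marginal beliefs, which is the secondary subtlety.
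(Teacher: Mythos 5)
Your proposal is correct and follows essentially the same route as the paper's proof: the same induction showing every newly added empirical best response to the rationalizable conditional belief $\Pi|a_i$ is itself $\Delta$-rationalizable, the same $NA$-round termination bound, the same decomposition of the full-game CE gap into the subgame swap-gap plus the external-deviation gap certified by the halting condition, and the same $NA \times NA^2 M$ sample count. The only differences are cosmetic (you argue via the contrapositive that a $\Delta$-IDA cannot win the empirical argmax, where the paper argues the added action is a $\Delta$-best response and hence rationalizable, and your accuracy constants are bookkept slightly differently).
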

\begin{algorithm}[h]
	\caption{Rationalizable $\epsilon$-CE via Black-box Reduction \label{alg:ce_2}}
	\begin{algorithmic}[1]
		%\STATE 
		%\STATE \textbf{Input:} Number of actions $A$, number of players $N$, target accuracy $\epsilon$
		 \STATE $(a^\*_1,\cdots,a^\*_N)\leftarrow\texttt{Algorithm~\ref{alg:findaction}}$
		 \STATE For all $i\in[N]$, initialize $\ca_i^{(1)}\gets \{a^\*_i\}$ for all $i\in[N]$
		\FOR{$t=1,2,\ldots$}
		\STATE Find an $\epsilon'$-CE, $\Pi$, in the subgame supported on $\Pi_{i\in[N]}{\cA}_i^{(t)}$
		\STATE $\forall i\in [N], a_i,a'_i\in\cA_i$, sample $ u_i(a'_i, \Pi_{-i}|a_i)$ for $M$ times and compute average $\hat u_i(a'_i, \Pi_{-i}|a_i)$ 
		%\STATE $\forall i, a_i\in {\cA}^{(t)}_i, a'_i\in\cA_i$, evaluate $\hat U_i(a'_i, \Pi|a_i)$ with $\tilde{O}(1/\epsilon^2)$ samples so that w.h.p. $\Vert U-\hat U\Vert_\infty \le \epsilon/4$
		\FOR{$i\in [N]$}
		    \FOR{$a_i\in {\cA}_i^{(t)}$}
        	\STATE Let	\[
        		a_i'\leftarrow\argmax_{a\in{\cA}_i}\hat u_i(a, \Pi |a_i)\text{\color{blue}// Computing the empirical best response} \]
        	\STATE  ${\cA}^{(t+1)}_i\leftarrow {\cA}^{(t)}_i\cup\{a_i'\}$
		    
		    \ENDFOR
		\ENDFOR
	    
	    \IF{$\cA_i^{(t)}=\cA_i^{(t+1)}$ for all $i\in[N]$}
		\RETURN $\Pi$
	    \ENDIF
	    \ENDFOR
	\end{algorithmic}
\end{algorithm}

\section{Conclusion}

In this paper, we consider two tasks: (1) learning rationalizable action profiles; (2) learning rationalizable equilibria. For task 1, we propose a conceptually simple algorithm whose sample complexity is significantly better than prior work \citep{wu2021multi}. For task 2, we develop the first provably efficient algorithms for learning $\epsilon$-CE and $\epsilon$-CCE that are also {rationalizable}. 
Our algorithms are computationally efficient, enjoy sample complexity that scales polynomially with the number of players and are able to avoid iteratively dominated actions completely. 
Our results rely on several new techniques which might be of independent interests to the community. There remains a gap between our sample complexity upper bounds and the available lower bounds for both tasks, closing which is an important future research problem.

\section*{Acknowledgements}
This work is supported by Office of Naval Research N00014-22-1-2253.
Dingwen Kong is partially supported by the elite undergraduate training program of School of Mathematical Sciences in Peking University.
\bibliography{ref}
\bibliographystyle{iclr2023_conference}

\newpage
\appendix
\section{Further Details on Rationalizability}
\label{sec:detail}

\subsection{Equivalence of Never-best-response and strict dominance}
\label{sec:neverbr}

It is known that for finite normal form games, rationalizable actions are given by iterated elimination of never-best-response actions, which is in fact equivalent to the iterative elimination of strictly dominated actions~\citep[Lemma 60.1]{osborne1994course}. Here, for completeness, we include a proof that the iterative elimination of of actions that are never $\Delta$-best-response gives the same definition as Definition~\ref{def:ration}. Notice that it suffices to show that for every subgame, the set of never $\Delta$-best response actions and the set of $\Delta$-dominated actions are the same.
\begin{proposition}
\label{prop:neverbr}
Suppose that an action $a\in \ca_i$ is never a $\Delta$-best response, \emph{i.e.} $\forall \Pi_{-i} \in \Delta(\prod_{j\neq i}\ca_i)$, $\exists u\in \Delta(\ca_i)$ such that 
\begin{align*}
    u_i\left(a,\Pi_{-i}\right) \le u_i\left(u, \Pi_{-1}\right) - \Delta.
\end{align*}
Then $a$ is also $\Delta$-dominated, \emph{i.e.} $\exists u\in \Delta(\ca_i)$, $\forall \Pi_{-i}\in \Delta(\prod_{j\neq i}\ca_i)$
\begin{align*}
    u_i\left(a,\Pi_{-i}\right) \le u_i\left(u, \Pi_{-1}\right) - \Delta.
\end{align*}
\end{proposition}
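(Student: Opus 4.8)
The plan is to recognize this statement as a direct consequence of the minimax theorem applied to the zero-sum game whose payoff is the \emph{advantage} of a mixed deviation over the fixed action $a$. Concretely, I would define, for a mixed strategy $x\in\Delta(\ca_i)$ and a correlated belief $\Pi_{-i}\in\Delta(\prod_{j\neq i}\ca_j)$, the gap function
\[
f(x,\Pi_{-i}) := u_i(x,\Pi_{-i}) - u_i(a,\Pi_{-i}).
\]
The key structural observation is that $f$ is \emph{bilinear}: the term $u_i(x,\Pi_{-i})=\sum_{a_i,a_{-i}} x(a_i)\,\Pi_{-i}(a_{-i})\,u_i(a_i,a_{-i})$ is bilinear in $(x,\Pi_{-i})$, while the subtracted term $u_i(a,\Pi_{-i})$ is linear in $\Pi_{-i}$ and independent of $x$. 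Moreover, both $\Delta(\ca_i)$ and $\Delta(\prod_{j\neq i}\ca_j)$ are nonempty, compact, and convex.

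Next I would restate the two conditions in minimax language. The hypothesis that $a$ is never a $\Delta$-best response says exactly that for every belief $\Pi_{-i}$ there exists some $x$ with $f(x,\Pi_{-i})\ge\Delta$, i.e.
\[
\min_{\Pi_{-i}} \max_{x} f(x,\Pi_{-i}) \ge \Delta,
\]
where the extrema are attained by compactness. The desired conclusion that $a$ is $\Delta$-dominated says there is a \emph{single} $x$ with $f(x,\Pi_{-i})\ge\Delta$ for all $\Pi_{-i}$, i.e.
\[
\max_{x} \min_{\Pi_{-i}} f(x,\Pi_{-i}) \ge \Delta.
\]
By von Neumann's minimax theorem, applicable precisely because $f$ is bilinear and the two strategy sets are compact and convex, these two quantities coincide. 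Hence the hypothesis $\min_{\Pi_{-i}}\max_x f\ge\Delta$ immediately yields $\max_x\min_{\Pi_{-i}} f\ge\Delta$; taking a maximizing $x^\star$ gives $f(x^\star,\Pi_{-i})\ge\Delta$ for every $\Pi_{-i}$, which is exactly the assertion that $a$ is $\Delta$-dominated by $x^\star$.

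There is no deep obstacle here; the content is entirely bookkeeping. The one point that genuinely requires care is matching the quantifier order in Definition~\ref{def:ration} to the correct side of the minimax identity: ``never-$\Delta$-best-response'' is the $\min\max$ side (belief chosen first, deviation second), whereas ``$\Delta$-dominated'' is the $\max\min$ side (deviation fixed first, belief adversarial). I would also double-check that the hypotheses of the minimax theorem truly hold, in particular that the belief set is the \emph{full} product simplex $\Delta(\prod_{j\neq i}\ca_j)$ of correlated beliefs, which is convex and compact and is exactly the regime assumed throughout this paper; this is what licenses the exchange of $\min$ and $\max$ and closes the argument.
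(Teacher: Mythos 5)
Your proof is correct and follows essentially the same route as the paper: both reduce the statement to exchanging $\min$ and $\max$ of the bilinear advantage function over the two compact convex simplices via von Neumann's minimax theorem. If anything, your write-up is slightly more careful about which side of the minimax identity each condition sits on (never-$\Delta$-best-response as $\min_{\Pi_{-i}}\max_x$ and $\Delta$-dominance as $\max_x\min_{\Pi_{-i}}$), which is exactly the bookkeeping the paper's terse proof leaves implicit.
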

\begin{proof}
That $a$ is never a $\Delta$-best response is equivalent to
\begin{align*}
    \min_{\Pi_{-1}}\max_{u}  \left\{u_i\left(a,\Pi_{-i}\right)-u_i\left(u,\Pi_{-1}\right)\right\} \le -\Delta.
\end{align*}
That $a$ is $\Delta$-dominated is equivalent to
\begin{align*}
    \max_{u}\min_{\Pi_{-1}}  \left\{u_i\left(a,\Pi_{-i}\right)-u_i\left(u,\Pi_{-1}\right)\right\} \le -\Delta.
\end{align*}
Equivalence immediately follows from von Neumman's minimax theorem.
\end{proof}

\iffalse
\subsection{A Property on rationalizability}
Here we prove another statement on rationalizability that would prove useful in our later analysis.
\begin{proposition}
If $a$ is a $\Delta$-best response to $\Pi$, whose support only contains $\Delta$-rationalizable actions, then $a$ is also $\Delta$-rationalizable.
\end{proposition}
\begin{proof}
\yw{todo}
\end{proof}
\fi

\subsection{Proof of Proposition~\ref{prop:nash}}
\label{sec:nashproof}

\begin{proof}

We prove this inductively with the following hypothesis:
\begin{align*}
\forall l\ge 1, \forall i\in [N], \qquad \sum_{a\in\ca_{i}} x^*_i(a)\cdot \one[a\in E_l] \le \frac{2l\epsilon}{\Delta}.
\end{align*}

\paragraph{Base case:}

By the definition of $\epsilon$-NE, $\forall i\in [N]$, $\forall x'\in \Delta({\ca_i})$,
\begin{align*}
u_i(x^*_i,x^*_{-i}) \ge u_i(x',x^*_{-i}) - \epsilon.
\end{align*}
Note that if $\ta\in E_1 \cap \ca_i$, $\exists x\in \Delta(\ca_i)$ such that $\forall a_{-i}$,
\begin{align*}
u_i(\ta,a_{-i}) \le u_i(x,a_{-i}) - \Delta.
\end{align*}
Therefore if we choose%~\yub{$x(a)\in\Delta_{\ca_i}$ denotes the mixed strategy that dominates $a$}
$$x':= x^*_i - \sum_{a\in \ca_i} \one[a\in E_1]x^*_i(a) \mathbf{e}_a + \sum_{a\in \ca_i} \one[a\in E_1] x^*_i(a)\cdot x(a),$$
%$$x':= \frac{x^*_i - \sum_a \one[a\in E_1]\cdot x^*_i(a) e_a}{1- \sum_a \one[a\in E_1]\cdot x^*_i(a) }.$$
that is if we play the dominating strategy instead of the dominated action in $x^*_i$, then
\begin{align*}
 u_i(x',x^*_{-i}) \ge u_i(x^*_i, x^*_{-i}) +  \sum_{a\in \ca_i} x^*_i(a)\cdot \one[a\in E_1] \Delta.
\end{align*}
It follows that 
$$ \sum_{a\in\ca_i} x^*_i(a)\cdot \one[a\in E_1] \le \frac{\epsilon}{\Delta}.$$
\paragraph{Induction step:}
By the induction hypothesis, $\forall i\in [N]$,
$$ \sum_{a\in\ca_i} x^*_i(a)\cdot \one[a\in E_l] \le \frac{2l\epsilon}{\Delta}.$$
Now consider 
\begin{equation*}
    \tx_i := \frac{x^*_i - \sum_{a\in\ca_i} \one[a\in E_l]\cdot x^*_i(a) \mathbf{e}_a}{1- \sum_{a\in\ca_i} \one[a\in E_l]\cdot x^*_i(a) },\tag{$\forall i\in[N]$}
\end{equation*}
which is supported on actions on in $E_l$. The induction hypothesis implies $\Vert \tx_i - x^*_i\Vert_1\le 6l\epsilon/\Delta$. Therefore $\forall i\in [N]$, $\forall a\in\ca_i$,
\begin{align*}
\left| u_i(a, \tx_{-i}) - u_i(a, x^*_{-i}) \right| \le \frac{6Nl \epsilon}{\Delta}.
\end{align*}
Now if $\ta\in \left(E_{l+1}\setminus E_l\right)\cap \ca_i$, since $\tx_{-i}$ is not supported on $E_l$, $\exists x\in\Delta(\ca_i)$ such that
\begin{align*}
	u_i(\ta,\tx_{-i}) \le u_i(x,\tx_{-i}) - \Delta.
\end{align*}
It follows that
\begin{align*}
	u_i(\ta,x^*_{-i}) \le u_i(x,x^*_{-i}) - \Delta +  \frac{12Nl \epsilon}{\Delta} \le  u_i(x,x^*_{-i}) - \frac{\Delta}{2}.
\end{align*}
Using the same arguments as in the base case,
\begin{align*}
\sum_{a\in\ca_i} x^*_i(a)\cdot \one[a\in E_{l+1}\setminus E_l] \le \frac{\epsilon}{\Delta - \frac{12Nl \epsilon}{\Delta}} \le \frac{2\epsilon}{\Delta}.
\end{align*}
It follows that $\forall i\in[N]$,
$$ \sum_{a\in \ca_i} x^*_i(a)\cdot \one[a\in E_{l+1}] \le \frac{2(l+1)\epsilon}{\Delta}.$$
The statement is thus proved via induction on $l$.
\end{proof}
%\input{Appendix/appendix-nash}

%\section{Find one rationalizable action profile}
\section{Omitted Proofs in Section~\ref{sec:naive} and~\ref{sec:oneaction}}
\subsection{Proof of Proposition~\ref{prop:naive}}
\begin{lem}
\label{naivelemma}
With probability at least $1-\delta$, for all $i\in[N]$, $\tilde{\cA}_i$ does not contain any $\Delta$-IDA. 
\end{lem}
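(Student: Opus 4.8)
The plan is to decouple the argument into a uniform concentration step followed by an induction over elimination rounds. Write $\hat{E}_1,\hat{E}_2,\ldots$ for the sets produced by running $(\Delta/2)$-IDE on the empirical game $\{\hat{u}_i\}_{i=1}^N$, so that $\tilde{\cA}_i=\cA_i\setminus\hat{E}_{\hat{L}}$ at the empirical fixed point $\hat{L}$. Since a $\Delta$-IDA is by definition an action in $E_L$, the claim ``$\tilde{\cA}_i$ contains no $\Delta$-IDA'' is equivalent to showing that the empirical procedure eliminates every true $\Delta$-IDA, i.e. $E_L\subseteq\hat{E}_{\hat{L}}$. I would in fact prove the stronger round-wise containment $E_l\subseteq\hat{E}_l$ for all $l\ge 1$, by induction on $l$.

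First I would establish the concentration event. For each fixed profile $a\in\cA$ and player $i$, $\hat{u}_i(a)$ is an average of $M$ i.i.d.\ samples in $[0,1]$, so Hoeffding's inequality gives $\Pr[|\hat{u}_i(a)-u_i(a)|>\Delta/4]\le 2\exp(-M\Delta^2/8)$. A union bound over all $A^N$ profiles and all $N$ players, together with the choice $M=\lceil 256\log(1/\delta')/\Delta^2\rceil$ and $\delta'=\delta/(A^N N)$, shows that with probability at least $1-\delta$,
\[
|\hat{u}_i(a)-u_i(a)|\le \Delta/4 \qquad\text{for all } a\in\cA,\ i\in[N].
\]
By linearity this extends to every mixed strategy, giving $|\hat{u}_i(x,a_{-i})-u_i(x,a_{-i})|\le\Delta/4$ for all $x\in\Delta(\cA_i)$. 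I condition on this event for the remainder of the argument.

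Next comes the induction. For the base case, if $a\in E_1\cap\cA_i$ is witnessed by a dominating $x$ with $u_i(a,a_{-i})\le u_i(x,a_{-i})-\Delta$ for all $a_{-i}$, then the two-sided bound converts the true margin $\Delta$ into an empirical margin of at least $\Delta-2\cdot(\Delta/4)=\Delta/2$, so $a$ is $(\Delta/2)$-dominated in the empirical game and $a\in\hat{E}_1$. For the induction step, assume $E_{l-1}\subseteq\hat{E}_{l-1}$ and take $a\in E_l\cap\cA_i$ with witness $x$. The crucial observation is that since $\hat{E}_{l-1}\supseteq E_{l-1}$, the opponent profiles surviving in the empirical subgame, $\{a_{-i}:a_{-i}\cap\hat{E}_{l-1}=\emptyset\}$, form a subset of those surviving in the true subgame, $\{a_{-i}:a_{-i}\cap E_{l-1}=\emptyset\}$. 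Hence the true domination $u_i(a,a_{-i})\le u_i(x,a_{-i})-\Delta$, which holds over the larger true-surviving set, in particular holds over every empirical-surviving $a_{-i}$; applying concentration again yields $\hat{u}_i(a,a_{-i})\le\hat{u}_i(x,a_{-i})-\Delta/2$ for all such $a_{-i}$, so $a\in\hat{E}_l$. Taking $l=L$ gives $E_L\subseteq\hat{E}_L\subseteq\hat{E}_{\hat{L}}$, which is exactly the claim.

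The step I expect to require the most care is the direction of the set inclusion in the induction step: we need the empirical-surviving opponent set to be \emph{contained in} the true-surviving one, as this is what lets a universally quantified domination statement transfer in the correct direction. This is precisely where the monotone ``only ever eliminate more'' nature of IDE, combined with the inductive hypothesis $E_{l-1}\subseteq\hat{E}_{l-1}$, is essential; had the inclusion gone the other way, a true $\Delta$-IDA could fail to be dominated against some empirical-surviving profile and might wrongly survive. The additive $\Delta/4$ slack on each side is calibrated exactly so that the true margin $\Delta$ persists as the empirical margin $\Delta/2$ used by the algorithm, and no finer control is needed.
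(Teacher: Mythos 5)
Your proof is correct and follows essentially the same route as the paper's: a uniform Hoeffding concentration bound over all $NA^N$ player--profile pairs, followed by the observation that every true $\Delta$-IDA becomes a $(\Delta/2)$-IDA in the empirical game. The paper states this last implication in one line, whereas you supply the round-wise induction $E_l\subseteq\hat{E}_l$ (with the correctly oriented containment of surviving opponent profiles) that makes it rigorous; the constants differ harmlessly ($\Delta/4$ versus the paper's $\Delta/8$ slack, both compatible with the stated $M$).
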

\begin{proof}
We first present the concentration bound. For $i\in[N]$, and $a\in\cA$, by Hoeffding's inequality we have that with probability at least $1-{\delta'}$,
\[
\left|u_i(a)-\hat{u}_i(a)\right|\leq \sqrt{\frac{4\ln(1/\delta')}{M}}\leq \frac{\Delta}{8}.
\]
Therefore by a union bound we have that with probability at least $1-\delta$, for all  $i\in[N]$, and $a\in\cA$,
\[
\left|u_i(a)-\hat{u}_i(a)\right|\leq \frac{\Delta}{8}.
\]
Therefore we have that any $\Delta$-IDA in the original game is also a $\frac{\Delta}{2}$-IDA in the empirical game, thus will be eliminated from $\tilde{\cA}_i$
%We condition on this event for the rest of the proof.
\end{proof}
\begin{lem}
\label{naivelemma2}
With probability at least $1-\delta$, $\Pi$ is a $\epsilon$-CCE/CE in the original game.
\end{lem}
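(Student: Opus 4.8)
The plan is to establish Lemma~\ref{naivelemma2}, namely that the $\epsilon'$-CCE/CE $\Pi$ found on the subgame $\tilde{\cA}_1 \times \cdots \times \tilde{\cA}_N$ is in fact an $\epsilon$-CCE/CE for the full original game, with high probability. The central observation is that any equilibrium guarantee on a subgame only constrains deviations to actions \emph{within} the subgame, whereas the full-game equilibrium condition must hold against deviations to \emph{all} actions. So the core of the argument is to show that no player can profitably deviate to an action that was eliminated during the iterative dominance elimination step. I would first invoke the concentration bound already established in the proof of Lemma~\ref{naivelemma}: with probability at least $1-\delta$, for all $i\in[N]$ and all $a\in\cA$, we have $|u_i(a)-\hat u_i(a)|\le \Delta/8$. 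I will work on this good event throughout.

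First I would set up the two equilibrium conditions precisely. Since $\Pi$ is an $\epsilon'$-CCE/CE of the subgame, for every player $i$ the relevant deviation inequality holds against all actions (or all swap functions) \emph{restricted to} $\tilde{\cA}_i$. The only gap is a deviation to some action $a'\in \cA_i\setminus \tilde{\cA}_i$, i.e. an action that was eliminated. The key step is then to bound the gain from any such deviation. Because $a'$ was eliminated by the $(\Delta/2)$-IDE procedure run on the empirical game, and because $\Pi$ is supported entirely on surviving actions $\tilde{\cA}_{-i}$, I would argue that $a'$ is dominated (in the empirical game, hence approximately in the true game) by some mixed strategy $x$ supported on $\tilde{\cA}_i$ against every profile in the support of $\Pi$. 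Concretely, an action eliminated in round $l$ is dominated against opponent profiles avoiding the actions eliminated in rounds $<l$; since $\Pi_{-i}$ places mass only on surviving actions, those profiles qualify, so the empirical dominance inequality $\hat u_i(a',a_{-i})\le \hat u_i(x,a_{-i})-\Delta/2$ applies pointwise over the support. Transferring from empirical to true utilities via the concentration bound (losing at most $2\cdot \Delta/8 = \Delta/4$ on each side) yields $u_i(a',a_{-i})\le u_i(x,a_{-i})-\Delta/4$ for every relevant $a_{-i}$, hence $u_i(a',\Pi_{-i})\le u_i(x,\Pi_{-i})-\Delta/4$.

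With this in hand I would close the argument. For the CCE case: the deviation value to $a'$ is no larger than the deviation value to the in-subgame strategy $x$, which in turn is controlled by the $\epsilon'$-CCE guarantee of the subgame; so the deviation gain to $a'$ is at most $\epsilon'$, and choosing $\epsilon'=\epsilon$ (and accounting for the tolerance slack, which only helps since the dominance strictly disfavors $a'$) gives that $\Pi$ is an $\epsilon$-CCE of the full game. For the CE case, the analogous statement is phrased in terms of swap functions $\phi:\cA_i\to\cA_i$; I would decompose any full-game swap into its action-by-action recommendations and handle recommendations landing in $\cA_i\setminus\tilde{\cA}_i$ using the same pointwise-dominance bound conditioned on each $a_i$, comparing against the best in-subgame swap. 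Finally a union bound over the concentration event and the subgame equilibrium event gives the claimed overall probability.

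The main obstacle I anticipate is making the dominance transfer rigorous for \emph{iteratively} eliminated actions rather than just singly-dominated ones: an action eliminated in round $l>1$ is only dominated relative to opponent profiles that themselves avoid earlier-eliminated actions, so I must verify that the support of $\Pi_{-i}$ indeed avoids all of $E_{l-1}$ for the relevant round. Since $\Pi$ lives on the fully surviving set $\tilde{\cA}_{-i}$ (which avoids \emph{every} eliminated action, not just those of earlier rounds), this condition is satisfied, but stating it cleanly requires tracking the round structure of the empirical IDE. The CE case adds a layer of bookkeeping because deviations are conditional on the recommended action, but the same pointwise dominance argument applies conditionally and should go through without new ideas.
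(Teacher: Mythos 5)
Your proposal is correct and follows essentially the same route as the paper's (much terser) proof: both reduce the full-game deviation to an in-subgame deviation by observing that any eliminated action is dominated, against every opponent profile supported on the surviving set $\tilde{\cA}_{-i}$, by a mixed strategy over $\tilde{\cA}_i$, so the subgame $\epsilon$-CCE/CE guarantee transfers. You simply spell out the empirical-to-true dominance transfer, the round structure of the iterative elimination, and the CE swap-function bookkeeping that the paper leaves implicit.
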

\begin{proof}
We condition on the event defined in Lemma~\ref{naivelemma}. For any action $a_i\in\cA_i\setminus\tilde{\cA}_i$, since it is iteratively eliminated in the empirical game, it must be dominated by a mixed strategy $x_i$ over $\tilde{\cA}_i$, for any opponent action in $\Pi_{j\neq i}\tilde{\cA}_j$. Therefore one can check that any $\epsilon$-CCE/CE of the subgame is a $\epsilon$-CCE/CE by definition.
\end{proof}
\begin{proof}[Proof of Proposition~\ref{prop:naive}]
Combining Lemma~\ref{naivelemma} and Lemma~\ref{naivelemma2} completes the proof.
\end{proof}
\subsection{{Proof of Proposition~\ref{prop:decidehard}}}
\begin{proof}
Consider the following $N$-player game denoted by $G_0$ with action set $[A]$:
\begin{align*}
    u_i\left(\cdot\right) &= 0 \tag{$1\le i\le N-1$} \\
    u_{N}\left(a_N\right) &= \Delta\cdot \one[a_N >1].
\end{align*}
Specifically, a payoff with mean $u$ is realized by a skewed Rademacher random variable with $\frac{1+u}{2}$ probability on $+1$ and $\frac{1-u}{2}$ on $-1$. In game $G_0$, clearly for player $N$, the action $1$ is $\Delta$-dominated. However, consider the following game, denoted by $G_{a^*}$  (where $a^*\in [A]^{N-1}$)
\begin{align*}
    u_i\left(\cdot  \right) &= 0, \tag{$1\le i\le N-1$} \\
    u_{N}\left(a_N\right) &= \Delta, \tag{$a_N>1$}\\
    u_N\left(1 , a_{-N}\right) &= 2\Delta\cdot \one[a_{-N} = a^*].
\end{align*}
It can be seen that in game $G_{a^*}$, for player $N$, the action $1$ is \emph{not} dominated or iteratively strictly dominated. %Therefore, suppose that an algorithm $\mathcal{O}$ is able to determine whether the action 1 is a $\Delta$-IDA 
Therefore, suppose that an algorithm $\mathcal{O}$ is able to decide whether the action $1$ is rationalizable up to tolerance $\Delta$ with $0.9$ accuracy, then its output needs to be False with at least $0.9$  probability in game $G_0$, but True with at least $0.9$ probability in game $G_{a^*}$. By Pinsker's inequality,
\begin{align*}
    {\rm KL}(\mathcal{O}(G_0)||\mathcal{O}(G_{a^*})) \ge 2\cdot 0.8^2 >1,
\end{align*}
where we used $\mathcal{O}(G)$ to denote the trajectory generated by running algorithm $\mathcal{O}$ on game $G$. Meanwhile, notice that $G_0$ and $G_{a^*}$ is different only when the first $N-1$ players play $a^*$. Denote the number of times where the first $N-1$ players play $a^*$ by $n(a^*)$. Using the chain rule of KL-divergence,
\begin{align*}
     {\rm KL}(\mathcal{O}(G_0)||\mathcal{O}(G_{a^*})) &\le \E_{G_0}\left[n(a^*)\right]\cdot {\rm KL\left(\left.{\rm Ber}\left(\frac{1}{2}\right) \right\Vert {\rm Ber}\left(\frac{1+2\Delta}{2}\right)\right)}\\
     &\stackrel{(a)}{\le}  \E_{G_0}\left[n(a^*)\right]\cdot \frac{1}{\frac{1-2\Delta}{2}}\cdot (2\Delta)^2\\
     &\stackrel{(b)}{\le} 10\Delta^2\E_{G_0}\left[n(a^*)\right].
\end{align*}
Here $(a)$ follows from reverse Pinsker's inequality (see \emph{e.g.}~\citet{binette2019note}),  while $(b)$ uses the fact that $\Delta < 0.1$. This means that for any $a^*\in [A]^{N-1}$,
\begin{align*}
    \E_{G_0}\left[n(a^*)\right] \ge \frac{1}{10\Delta^2}.
\end{align*}
It follows that the expected number of samples when running $\mathcal{O}$ on $G_0$ is at least 
\begin{align*}
    \E_{G_0}\left[\sum_{a^*\in [A]^{N-1}}n(a^*)\right] \ge \frac{A^{N-1}}{10\Delta^2}.
\end{align*}
\end{proof}

\subsection{Proof of Theorem~\ref{thm:ide_upper}}
\iffalse
\begin{theorem}
%\label{thm:ide_upper}
With probability $1-\delta$, Algorithm~\ref{alg:findaction} returns an action profile that is $\Delta$-rationalizable using a total of $\tilde{O}\left(\frac{LNA}{\Delta^2}\right)$ samples.
\end{theorem}
\fi
\begin{proof}
We first present the concentration bound. For $l\in[L]$, $i\in[N]$, and $a\in\cA_i$, by Hoeffding's inequality we have that with probability at least $1-\frac{\delta}{LNA}$,
\[
\left|u_i(a,a_{-i}^{(l-1)})-\hat{u}_i(a,a_{-i}^{(l-1)})\right|\leq \sqrt{\frac{4\ln(ANL/\delta)}{M}}\leq \frac{\Delta}{4}.
\]
Therefore by a union bound we have that with probability at least $1-\delta$, for all $l\in[L]$, $i\in[N]$, and $a\in\cA_i$,
\[
\left|u_i(a,a_{-i}^{(l-1)})-\hat{u}_i(a,a_{-i}^{(l-1)})\right|\leq \frac{\Delta}{4}.
\]
We condition on this event for the rest of the proof.

We use induction on $l$ to prove that for all $l\in[L]\cup\{0\}$, $(a^{(l)}_1,\cdots,a^{(l)}_N)$ can survive at least $l$ rounds of IDE.
The base case for $l=0$ directly holds. Now we assume that the case for $1,2,\ldots,l-1$ holds and consider the case of $l$.

For any $i\in[N]$, we show that $a_i^{(l)}$ can survive at least $l$ rounds of IDE. Recall that $a_i^{(l)}$ is the empirical best response, \emph{i.e.}
\[
a_i^{(l)}=\argmax_{a\in{\cA}_i}\hat{u}_i(a, a^{(l-1)}_{-i}).
\]
For any mixed strategy $x_i\in\Delta(\cA_i)$, we have that
\begin{align*}
&u_i(a_i^{(l)},a_{-i}^{(l-1)})-u_i(x_i,a_{-i}^{(l-1)})\\
\geq &\hat{u}_i(a_i^{(l)},a_{-i}^{(l-1)})-\hat{u}_i(x_i,a_{-i}^{(l-1)})-\left|u_i(a^{(l)}_i,a_{-i}^{(l-1)})-\hat{u}_i(a^{(l)}_i,a_{-i}^{(l-1)})\right|-\left|u_i(x_i,a_{-i}^{(l-1)})-\hat{u}_i(x_i,a_{-i}^{(l-1)})\right|\\
\geq & 0-\frac{\Delta}{4}-\frac{\Delta}{4}=-\frac{\Delta}{2}.
\end{align*}
Since actions in $a_{-i}^{(l-1)}$ can survive at least $l-1$ rounds of $\Delta$-IDE, $a_i^{(l)}$ cannot be $\Delta$-dominated by $x_i$ in rounds $1,\cdots,l$. Since $x_i$ can be arbitrarily chosen, $a_i^{(l)}$ can survive at least $l$ rounds of $\Delta$-IDE.
%Since actions in $a_{-i}^{(l-1)}$ can survive at least $l-1$ rounds of $\Delta$-IDE, $a_i^{(l)}$ cannot be $\Delta$-dominated by $x_i$ in round $l$. Therefore $a_i^{(l)}$ can survive at least $l$ rounds of $\Delta$-IDE.
We can now ensure that the output $(a^{(L)}_1,\cdots,a^{(L)}_N)$ survives $L$ rounds of $\Delta$-IDE, which is equivalent to $\Delta$-rationalizability (see Definition~\ref{def:ration}).

The total number of samples used is
\[
LNA\cdot M=\tilde{O}\left(\frac{LNA}{\Delta^2}\right).
\]
\end{proof}

\subsection{Proof of Theorem~\ref{thm:ide_lower}}
%\begin{theorem}
%\label{thm:ide_lower}
%Any algorithm that returns an action profile surviving $L=2$ rounds of IDE with $0.9$ probability needs $\Omega\left(\frac{NA}{\Delta^2}\right)$ samples.
%\end{theorem}
\begin{proof}
Without loss of generality, assume that $\Delta<0.1$. Consider the following instance where $\cA_1=\cdots=\cA_N=[A]$:
\begin{align*}
    u_i(a_i) &= \Delta\cdot \one[a_i=1], \tag{$i\neq j$}\\
    u_{j}(a_j,a_{-j}) &= \begin{cases}
    \Delta\cdot \one[a_j=1] & (a_{-j}\neq \{1\}^{N-1})\\
   \Delta\cdot \one[a_j=1] + 2\Delta\cdot \one[a_j=a] & (a_{-j} = \{1\}^{N-1})
    \end{cases}.
\end{align*}
Denote this instance by $G_{j,a}$. Additionally, define the following instance $G_0$:
\begin{align*}
    u_i(a_i) &= \Delta\cdot \one[a_i=1]. \tag{$\forall i\in [N]$}
\end{align*}
As before, a payoff with expectation $u$ is realized as a random variable with distribution $2{\rm Ber}(\frac{1+u}{2})-1$. It can be seen that the only difference between $G_0$ and $G_{j,a}$ lies in  $u_j(a,\{1\}^{N-1})$. By the KL-divergence chain rule, for any algorithm $\mathcal{O}$,
\begin{align*}
    KL\left(\left.\mathcal{O}(G_0) \right\Vert \mathcal{O}(G_{j,a})\right) &\le 10\Delta^2\cdot \E_{G_0}\left[n(a_j=a,a_{-j}=\{1\}^{N-1})\right],
\end{align*}
where $n(a_j=a,a_{-j}=\{1\}^{N-1})$ denotes the number of times the action profile $(a,1^{N-1})$ is played.
Note that in $G_0$, the only action profile surviving two rounds of $\Delta$-IDE is $(1,\cdots,1)$, while in $G_{j,a}$, the only rationalizable action profile is $(\underbrace{1,\cdots,1}_{j-1},a,1,\cdots,1)$. To guarantee $0.9$ accuracy, by Pinsker's inequality,
$${\rm KL}\left(\mathcal{O}(G_0) || \mathcal{O}(G_{j,a})\right) \ge \frac{1}{2}\left|\mathcal{O}(G_0)-\mathcal{O}(G_{j,a})\right|^2>1.$$
It follows that $\forall j\in [N], a>1$,
$$\E_{G_0}\left[n(a_j=a,a_{-j}=\{1\}^{N-1})\right]\ge \frac{1}{10\Delta^2}.$$
Thus the total expected sample complexity is at least
\begin{align*}
    \sum_{a>1,j\in [N]}\E_{G_0}\left[n(a_j=a,a_{-j}=\{1\}^{N-1})\right]\ge \frac{N(A-1)}{10\Delta^2}.
\end{align*}
%so 
%$$T\ge \Omega\left(\frac{NA}{\Delta^2}\right).$$
\end{proof}
\section{Omitted Proofs in Section~\ref{sec:cce}}
\label{sec:cceproof}
We start our analysis by bounding the sampling noise.
For player $i\in [N]$, action $a_i\in\cA_i$,  and $\tau\in[T]$, we denote the sampling noise as
\[
\xi_i^{({\tau})}(a_i):=u_i^{({\tau})}(a_i)-u_i(a_i,\theta_{-i}^{(\tau)}).
\]
We have the following lemma.
\begin{lem}
\label{lemma:cce-event}
Let $\Omega_1$ denote the event that  for all $t\in[T]$, $i\in[N]$, and $a_i\in\cA_i$,
%With probability at least $1-\delta$, for all $t\in[T]$, $i\in[N]$, and $a_i\in\cA_i$,
\[
\left|\sum_{\tau=1}^t \xi_i^{({\tau})}(a_i)\right|\leq 2\sqrt{\log(ANT/\delta)\sum_{\tau=1}^t \frac{1}{M_\tau}}.
\]
Then $\Pr[\Omega_1] \ge 1-\delta$.
\end{lem}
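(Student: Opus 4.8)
The plan is to fix a player $i \in [N]$ and an action $a_i \in \cA_i$, and treat the partial sums $S_t := \sum_{\tau=1}^t \xi_i^{(\tau)}(a_i)$ as a martingale with respect to the natural filtration $\{\mathcal{F}_\tau\}$, where $\mathcal{F}_\tau$ collects all the randomness (played actions and observed payoffs) generated up through the end of round $\tau$. The first thing I would verify is that $\theta_{-i}^{(\tau)}$ is $\mathcal{F}_{\tau-1}$-measurable: by the update rule in Line 6 of Algorithm~\ref{alg:CCE-adaptive}, each $\theta_j^{(\tau)}$ is a deterministic function of the payoff estimates $u_j^{(1)},\dots,u_j^{(\tau-1)}$, all of which are determined by the end of round $\tau-1$. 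Consequently, conditioned on $\mathcal{F}_{\tau-1}$, the $M_\tau$ samples collected to form $u_i^{(\tau)}(a_i)$ are i.i.d. draws whose common mean is exactly $u_i(a_i,\theta_{-i}^{(\tau)})$, so $\E[\xi_i^{(\tau)}(a_i)\mid\mathcal{F}_{\tau-1}]=0$ and $(S_t)$ is indeed a martingale. Note that for a fixed $(i,a_i)$ we only ever track the samples dedicated to that profile $(a_i,\theta_{-i}^{(\tau)})$, so whether or not different actions within a round reuse randomness is irrelevant.

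Next I would control the conditional moment generating function of each increment. Since $u_i^{(\tau)}(a_i)$ is the empirical average of $M_\tau$ independent random variables each lying in $[0,1]$, Hoeffding's lemma gives, for all $s\in\R$,
\[
\E\!\left[\exp\!\big(s\,\xi_i^{(\tau)}(a_i)\big)\,\big|\,\mathcal{F}_{\tau-1}\right]\le \exp\!\left(\frac{s^2}{8M_\tau}\right),
\]
i.e. the increment is conditionally sub-Gaussian with variance proxy $\tfrac{1}{4M_\tau}$. Multiplying these per-step bounds along the martingale and optimizing over $s$ (the standard Azuma--Hoeffding argument for sub-Gaussian differences) then yields, for each fixed $t$,
\[
\Pr\!\left[\,\Big|\sum_{\tau=1}^t \xi_i^{(\tau)}(a_i)\Big|\ge \lambda\,\right]\le 2\exp\!\left(-\frac{2\lambda^2}{\sum_{\tau=1}^t 1/M_\tau}\right).
\]

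Finally, I would set $\lambda = 2\sqrt{\log(ANT/\delta)\sum_{\tau=1}^t 1/M_\tau}$, which makes the right-hand side at most $2(ANT/\delta)^{-8}\le \delta/(ANT)$ (the constant $2$ in front of the square root leaves ample slack), and take a union bound over all $t\in[T]$, $i\in[N]$, and $a_i\in\cA_i$, of which there are at most $TNA$. This gives $\Pr[\Omega_1]\ge 1-\delta$. The only genuinely delicate point is the measurability bookkeeping in the first step: making sure the correlated exploration schedule---players taking turns while the others hold $\theta_{-i}^{(\tau)}$ fixed---really leaves the conditional mean of each minibatch equal to $u_i(a_i,\theta_{-i}^{(\tau)})$, so that the centering is exact and the martingale structure is clean. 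Once that is nailed down, the remainder is a routine sub-Gaussian tail bound together with the union bound.
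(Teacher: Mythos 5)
Your proposal is correct and follows essentially the same route as the paper: an Azuma--Hoeffding bound on the accumulated sampling noise followed by a union bound over all $(t,i,a_i)$. The only (immaterial) difference is granularity --- the paper unrolls each minibatch into $\sum_{\tau\le t} M_\tau$ individual bounded martingale differences of magnitude at most $1/M_\tau$, whereas you keep round-level increments and observe each is conditionally sub-Gaussian with variance proxy $1/(4M_\tau)$; both yield the identical variance term $\sum_{\tau=1}^t 1/M_\tau$.
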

\begin{proof}
Note that$\sum_{\tau=1}^t \xi_i^{({\tau})}(a_i)$ can be written as the sum of $\sum_{\tau=1}^t M_\tau$ mean-zero bounded terms. By Azuma-Hoeffding inequality, with probability at least $1-\frac{\delta}{ANT}$, for a fixed $i\in [N]$, $t\in [T]$, $a_i\in \ca_i$,
\begin{equation}
\label{eq:cce-concentration}
    \left|\sum_{\tau=1}^t \xi_i^{({\tau})}(a_i)\right|\leq 2\sqrt{\log(ANT/\delta)\sum_{\tau=1}^t M_\tau\cdot\left(\frac{1}{M_\tau}\right)^2}.
\end{equation}
A union bound over $i\in [N]$, $t\in [T]$, $a_i\in \ca_i$ proves the statement.
\end{proof}
\begin{lem}
\label{lem:cce-adaptive-ide}
With probability at least $1-2\delta$, for all $t\in[T]$ and all $i\in[N]$, $a_i\in\cA_i\cap E_L$,
\[
 \theta_i^{(t)}(a_i)\leq p.
\]
\end{lem}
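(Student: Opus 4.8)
The plan is to prove the bound by induction on $t$, working throughout on the intersection of two good events, which has probability at least $1-2\delta$: the concentration event $\Omega_1$ of Lemma~\ref{lemma:cce-event} (probability $\ge 1-\delta$), and the event that Algorithm~\ref{alg:findaction} returns a $\Delta$-rationalizable profile $(a^\*_1,\ldots,a^\*_N)$ (probability $\ge 1-\delta$ by Theorem~\ref{thm:ide_upper}), so that $a^\*_i\notin E_L$ for every $i$. The inductive hypothesis at stage $t$ is precisely the conclusion up to time $t$, namely $\theta_j^{(\tau)}(a_j)\le p$ for all $j\in[N]$, all $\tau\le t$, and all $a_j\in\cA_j\cap E_L$. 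The base case $t=1$ is immediate, since $\theta_i^{(1)}=\one[\cdot=a^\*_i]$ places zero mass on $E_L$.

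For the inductive step, fix a player $i$ and an iteratively dominated action $a_i\in\cA_i\cap E_L$. The first step is to extract a single comparator $x_i\in\Delta(\cA_i)$ dominating $a_i$ whenever the opponents avoid $E_L$: taking $l$ minimal with $a_i\in E_l$, Definition~\ref{def:ration} furnishes an $x_i$ with $u_i(x_i,a_{-i})\ge u_i(a_i,a_{-i})+\Delta$ for every $a_{-i}$ disjoint from $E_{l-1}$, and since $E_{l-1}\subseteq E_L$ this persists for every $a_{-i}$ disjoint from $E_L$. The second step turns this into a per-round advantage in expectation over $\theta_{-i}^{(\tau)}$. By the inductive hypothesis each opponent places mass at most $pA$ on $E_L$, so a union bound yields $\Pr_{a_{-i}\sim\theta_{-i}^{(\tau)}}[a_{-i}\cap E_L\neq\emptyset]\le NpA\le\Delta/8$ for the choice of $p$ in~\eqref{eqn:para_CCE}; splitting the expectation on this event and using $u_i\in[0,1]$ gives $u_i(x_i,\theta_{-i}^{(\tau)})-u_i(a_i,\theta_{-i}^{(\tau)})\ge\frac{3\Delta}{4}$ for every $\tau\le t$.

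The third step sums over $\tau$ and absorbs the sampling noise. Writing $u_i^{(\tau)}=u_i(\cdot,\theta_{-i}^{(\tau)})+\xi_i^{(\tau)}$ and invoking event $\Omega_1$ (applied to $a_i$ and, by linearity, to each action in the support of $x_i$), the accumulated advantage obeys
\[
\sum_{\tau=1}^{t}u_i^{(\tau)}(x_i)-\sum_{\tau=1}^{t}u_i^{(\tau)}(a_i)\ \ge\ \frac{3\Delta t}{4}-4\sqrt{\log(ANT/\delta)\sum_{\tau=1}^{t}\frac{1}{M_\tau}}\ \ge\ \frac{\Delta t}{4},
\]
where the final inequality uses the minibatch schedule $M_\tau=\lceil 64\log(ANT/\delta)/(\Delta^2\tau)\rceil$, which forces $\sum_{\tau\le t}1/M_\tau\le\Delta^2 t^2/(64\log(ANT/\delta))$ and hence the noise to be at most $\Delta t/2$; this realizes~\eqref{eqn:1}. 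The Hedge update together with Jensen's inequality then gives
\[
\theta_i^{(t+1)}(a_i)\ \le\ \frac{\exp\!\big(\eta_t\sum_{\tau\le t}u_i^{(\tau)}(a_i)\big)}{\exp\!\big(\eta_t\sum_{\tau\le t}u_i^{(\tau)}(x_i)\big)}\ \le\ \exp\!\left(-\eta_t\tfrac{\Delta t}{4}\right)\ \le\ p,
\]
the last step using the enlarged learning rate $\eta_t\ge 4\log(1/p)/(\Delta t)$ from~\eqref{eqn:para_CCE}. As $i$ and $a_i$ were arbitrary, this closes the induction.

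I expect the main obstacle to be the self-referential coupling inside the inductive step: bounding $\theta_i^{(t+1)}$ requires every opponent to be approximately rationalizable at \emph{all} earlier rounds, which is exactly the statement being proved, so the induction must be carried jointly over all players and all $\tau\le t$ rather than one player at a time. The technical crux is then to make the per-round signal $\tfrac{3\Delta}{4}$ dominate the accumulated bandit noise \emph{uniformly} in $t$ — including the smallest $t$, where a fixed-size minibatch would be overwhelmed — which is precisely why the time-decaying minibatch $M_\tau=\tilde{O}(1/(\Delta^2\tau))$ and the boosted learning rate $\eta_t\gtrsim\log(1/p)/(\Delta t)$ are introduced; checking that these two schedules interlock to produce the clean $\tfrac{\Delta t}{4}$ margin is where the care is needed.
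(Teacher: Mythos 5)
Your proposal is correct and follows essentially the same route as the paper's proof: the same joint induction over all players and rounds on the intersection of the concentration event $\Omega_1$ and the success of Algorithm~\ref{alg:findaction}, the same per-round $\Omega(\Delta)$ advantage of the dominating mixture obtained from the inductive hypothesis and the choice of $p$, and the same absorption of the Azuma--Hoeffding noise via the $M_\tau=\tilde O(1/(\Delta^2\tau))$ minibatch followed by the Hedge/Jensen step with the boosted learning rate $\eta_t\ge 4\log(1/p)/(\Delta t)$. The only differences are immaterial constants (your $3\Delta/4$ versus the paper's $\Delta/2$ per-round gap) and a slightly more explicit justification of why the dominating mixture for $a_i\in E_l$ persists against all opponents outside $E_L$.
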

\begin{proof}
We condition on the event $\Omega_1$ defined in Lemma~\ref{lemma:cce-event} and the success of Algorithm~\ref{alg:findaction}. We prove the claim by induction in $t$. The base case for $t=1$ holds directly by initialization. Now we assume the case for $1,2,\ldots,t$ holds and consider the case of ${t+1}$.

Consider a fixed player $i\in[N]$ and iteratively dominated action $a_i\in\cA_i\cap E_L$. By definition there exists a mixed strategy $x_i$ such that for all $a_{-i}\cap E_L=\emptyset$,
\[
u_i(x_i,a_{-i})\geq u_i(a_i,a_{-i}) + \Delta.
\]
Therefore for $\tau\in[t]$, by the induction hypothesis for $\tau$,
\begin{align}
u_i(x_i,\theta_{-i}^{(\tau)})&\geq u_i(a_i,\theta_{-i}^{(\tau)}) + (1-ANp)\cdot\Delta-ANp \nonumber\\
&\geq u_i(a_i,\theta_{-i}^{(\tau)}) + \Delta/2. \label{eq:gap-cce-induction}
\end{align}
Consequently,
\begin{align*}
    &\sum_{\tau=1}^t (u_i^{({\tau})}(x_i)-u_i^{({\tau})}(a_i))\\
    \geq& \sum_{\tau=1}^t (u_i(x_i,\theta_{-i}^{(\tau)})-u_i(a_i,\theta_{-i}^{({\tau})}))-4\cdot\sqrt{\log(ANT/\delta)\sum_{\tau=1}^t \frac{1}{M_\tau}} \tag{By (\ref{eq:cce-concentration})}\\
    \geq& \frac{t\Delta}{2}-4\cdot\sqrt{\log(ANT/\delta)\sum_{\tau=1}^t \frac{1}{M_\tau}} \tag{By (\ref{eq:gap-cce-induction})}\\
    \geq & \frac{t\Delta}{4}.
\end{align*}
Therefore by our choice of learning rate,
\begin{align*}
{\theta}_{i}^{(t+1)}(a_i) &\le \exp\left(-\eta_{t}\cdot  \sum_{\tau=1}^t \left(u_i^{(\tau)}(x_i)-u_i^{(\tau)}(a_i)\right)\right)\\
&\le \exp\left(-\frac{4\ln(1/p)}{\Delta t}\cdot \frac{\Delta t}{4}\right)= p.
\end{align*}
Therefore
\[
\theta_{i}^{(t+1)}(a_i)\leq p
\]
as desired.
\end{proof}
Now we turn to the $\epsilon$-CCE guarantee. For a player $i\in[N]$, recall that the regret is defined as
\[
{\rm Regret}_T^i=\max_{\theta\in\Delta({\cA_i})}\sum_{t=1}^T \langle  u^{(t)}_i, \theta-\theta_i^{(t)}\rangle.
\]
\begin{lem}
\label{lem:cceregret}
The regret can be bounded as
\[
{\rm Regret}_T^i\leq O\left(\sqrt{\ln A\cdot T} +\frac{\ln(1/p)\ln T}{\Delta}\right).
\]
\end{lem}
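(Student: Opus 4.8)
The plan is to recognize that, for each fixed player $i$, the update $\theta_i^{(t+1)}(\cdot)\propto \exp\bigl(\eta_t\sum_{\tau=1}^t u_i^{(\tau)}(\cdot)\bigr)$ is exactly Follow-the-Regularized-Leader (equivalently, exponential weights) over the simplex $\Delta(\cA_i)$ with the negative-entropy regularizer $\psi(\theta)=\sum_a \theta(a)\ln\theta(a)$ and time-varying learning rate $\eta_t$. I would then invoke the standard regret guarantee for FTRL with a nonincreasing learning rate and a $1$-strongly-convex regularizer, and finally substitute the explicit $\eta_t$ from Eq.~\eqref{eqn:para_CCE} and evaluate two elementary sums.

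Concretely, first I would record the two structural facts the generic bound needs: (i) $\psi$ is $1$-strongly convex with respect to $\|\cdot\|_1$ on $\Delta(\cA_i)$ and has range $\ln A$; and (ii) the gain vectors satisfy $\|u_i^{(t)}\|_\infty\le 1$ since payoffs lie in $[0,1]$. Crucially I would also check monotonicity: $\eta_t=\max\{\sqrt{\ln A/t},\,4\ln(1/p)/(\Delta t)\}$ is nonincreasing in $t$, being the pointwise maximum of two nonincreasing functions of $t$. Under these conditions the usual FTRL stability analysis gives
\[
{\rm Regret}_T^i\;\le\;\frac{\ln A}{\eta_T}+\frac12\sum_{t=1}^T \eta_t\,\|u_i^{(t)}\|_\infty^2\;\le\;\frac{\ln A}{\eta_T}+\frac12\sum_{t=1}^T \eta_t .
\]
The remainder is arithmetic. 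For the first term, $\eta_T\ge \sqrt{\ln A/T}$ gives $\ln A/\eta_T\le \sqrt{\ln A\cdot T}$. For the second term I would bound the maximum by a sum, $\eta_t\le \sqrt{\ln A/t}+\frac{4\ln(1/p)}{\Delta t}$, and use $\sum_{t=1}^T t^{-1/2}\le 2\sqrt{T}$ and $\sum_{t=1}^T t^{-1}\le 1+\ln T$ to obtain $\sum_{t=1}^T \eta_t\le 2\sqrt{\ln A\cdot T}+\frac{4\ln(1/p)}{\Delta}(1+\ln T)$. Adding the two contributions yields ${\rm Regret}_T^i\le O\bigl(\sqrt{\ln A\cdot T}+\frac{\ln(1/p)\ln T}{\Delta}\bigr)$, as claimed.

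The one point that genuinely needs care—rather than being truly routine—is the stability term, because the $4\ln(1/p)/(\Delta t)$ component makes $\eta_t$ exceed $1$ for small $t$, so the textbook Hedge argument via $e^x\le 1+x+x^2$ (valid only for $\eta_t\le 1$) does not apply directly. I would therefore run the analysis through the strongly-convex FTRL route, where the per-round term $\langle u_i^{(t)},\theta_i^{(t)}-\theta_i^{(t+1)}\rangle\le \frac{\eta_t}{2}\|u_i^{(t)}\|_\infty^2$ follows from strong convexity of the regularized objective (a Bregman/local-norm bound) and holds for arbitrary $\eta_t>0$, while the leading penalty correctly uses the smallest rate $\eta_T$ thanks to monotonicity. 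This is also exactly where the atypical additive term $\frac{\ln(1/p)\ln T}{\Delta}$ originates: it is the harmonic sum $\sum_t t^{-1}$ produced by the larger-than-usual learning rate at small $t$, which is precisely the rate inflation used in Step~1 to force rationalizability from the start of the learning process.
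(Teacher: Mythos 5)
Your proposal is correct and takes essentially the same route as the paper: both treat the update as FTRL/exponential weights with the nonincreasing learning rate $\eta_t$, invoke the standard bound $\frac{\ln A}{\eta_T}+\frac12\sum_{t}\eta_t$ (the paper cites Orabona, Corollary 7.9), and evaluate the same two sums to get $O\bigl(\sqrt{\ln A\cdot T}+\frac{\ln(1/p)\ln T}{\Delta}\bigr)$. The only detail you gloss over is that $\theta_i^{(1)}$ is initialized as the indicator on $a_i^\star$ rather than the FTRL minimizer (uniform), so the first round is not an FTRL iterate and must be accounted for separately; it contributes only an additive $O(1)$, which the paper absorbs as the leading ``$2+$'' in its bound.
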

\begin{proof}
Note that apart from the choice of $\theta^{(1)}$, we are exactly running FTRL with learning rates
\[
\eta_t=\max\left\{\sqrt{\ln A/t},\frac{4\log(1/p)}{\Delta t}\right\},
\]
which are monotonically decreasing. Therefore following the standard analysis of FTRL (see, \emph{e.g.}, \citet[Corollary 7.9]{orabona2019modern}), we have
\begin{align*}
    \max_{\theta\in\Delta({\cA_i})}\sum_{t=1}^T \langle  u_i^{(t)}, \theta-\theta_i^{(t)}\rangle &\le 2+\frac{\ln A}{\eta_T} + \frac{1}{2}\sum_{t=1}^T\eta_t\\
    &\leq 2+\sqrt{\ln A \cdot T}+ \frac{1}{2}\sum_{t=1}^T \left(\sqrt{\frac{\ln A}{t}}+\frac{4\ln(1/p)}{\Delta t}\right)\\
    &= O\left(\sqrt{\ln A\cdot T} +\frac{\ln(1/p)\ln T}{\Delta}\right).
\end{align*}
\end{proof}
However, this form of regret cannot directly imply approximate CCE. We define the following expected version regret
\[
{\rm Regret}_T^{i,\*}=\max_{\theta\in\Delta({\cA_i})}\sum_{t=1}^T \langle  u_i(\cdot,\theta_{-i}^{(t)}), \theta-\theta_i^{(t)}\rangle.
\]
The next lemma bound the difference between these two types of regret
\begin{lem}
\label{lem:tworegret}
The following event $\Omega_2$ holds with probability at least $1-\delta$: for all $i\in[N]$
\[
\left|{\rm Regret}_T^{i,\*}-{\rm Regret}_T^i\right|\leq O\left(\sqrt{T\cdot \ln(NA/\delta)}\right).
\]

\end{lem}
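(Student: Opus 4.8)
The plan is to reduce the regret gap to a single uniform (over $\theta$) bound on the accumulated sampling noise, and then control that quantity by martingale concentration. Writing $u_i^{(t)}(\cdot) = u_i(\cdot,\theta_{-i}^{(t)}) + \xi_i^{(t)}(\cdot)$ by the definition of the noise, and applying the elementary inequality $|\max_\theta f(\theta) - \max_\theta g(\theta)| \le \max_\theta |f(\theta)-g(\theta)|$ to the two maximization problems defining ${\rm Regret}_T^i$ and ${\rm Regret}_T^{i,\*}$, I obtain
\[
\left|{\rm Regret}_T^{i,\*}-{\rm Regret}_T^i\right| \le \max_{\theta\in\Delta(\cA_i)}\left|\sum_{t=1}^T \langle \xi_i^{(t)}, \theta - \theta_i^{(t)}\rangle\right|.
\]
This strips away the nonlinearity coming from the two separate maxima and leaves a quantity that is affine in $\theta$.

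First I would exploit this affinity. Since $\sum_t\langle\xi_i^{(t)},\theta-\theta_i^{(t)}\rangle = \langle\sum_t\xi_i^{(t)},\theta\rangle - \sum_t\langle\xi_i^{(t)},\theta_i^{(t)}\rangle$ is affine in $\theta$ over the simplex, its absolute value is maximized at a vertex $e_a$, so the target reduces to $\max_{a\in\cA_i}\bigl|\sum_{t=1}^T Z_t^{a}\bigr|$ where $Z_t^{a} := \xi_i^{(t)}(a) - \langle \xi_i^{(t)},\theta_i^{(t)}\rangle$. The crucial structural observation is that $\{Z_t^a\}_{t}$ is a martingale difference sequence: both $e_a$ and $\theta_i^{(t)}$ are measurable with respect to the history before the round-$t$ minibatch is drawn, while each coordinate of $\xi_i^{(t)}$ has conditional mean zero because $u_i^{(t)}(a)$ is an unbiased average of $M_t$ fresh samples of $u_i(a,\theta_{-i}^{(t)})$. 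Moreover $|Z_t^a|\le |\xi_i^{(t)}(a)| + \max_b|\xi_i^{(t)}(b)| \le 2$, since all utilities, and hence all averages, lie in $[0,1]$.

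With boundedness and the martingale property in hand, Azuma--Hoeffding gives, for each fixed pair $(i,a)$, $\bigl|\sum_t Z_t^a\bigr| \le O(\sqrt{T\ln(NA/\delta)})$ with probability at least $1-\delta/(NA)$; a union bound over the at most $NA$ pairs $(i,a)$ then defines $\Omega_2$ and yields the stated bound simultaneously for all $i\in[N]$. I expect the only real subtlety to be the filtration bookkeeping behind the conditional-mean-zero property, i.e. confirming that $\theta^{(t)}$ (hence $\theta_{-i}^{(t)}$) is fixed before the round-$t$ samples are collected, which holds by construction of Algorithm~\ref{alg:CCE-adaptive}. An alternative that reuses earlier machinery would bound the two pieces separately: the term $\max_a\bigl|\sum_t\xi_i^{(t)}(a)\bigr|$ is already controlled on the event $\Omega_1$ of Lemma~\ref{lemma:cce-event} (using $\sum_t M_t^{-1}\le T$ since $M_t\ge 1$), leaving only $\bigl|\sum_t\langle\xi_i^{(t)},\theta_i^{(t)}\rangle\bigr|$ for a fresh Azuma argument; but the vertex reduction above dispatches both at once with a single union bound.
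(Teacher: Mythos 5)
Your proposal is correct and follows essentially the same route as the paper: both reduce the difference of maxima to a maximum over the $|\cA_i|$ vertices of the simplex (the paper swaps the max to the vertex set $\Theta_i$ before differencing, you invoke convexity of the absolute value of an affine function afterward — same thing), then apply Azuma--Hoeffding to the bounded martingale difference sequence $\langle u_i(\cdot,\theta_{-i}^{(t)})-u_i^{(t)},\,\theta-\theta_i^{(t)}\rangle$ and union bound over the $NA$ pairs. No gaps.
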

\begin{proof}
We denote 
\[
\Theta_i:=\{\mathbf{e}_1,\mathbf{e}_2,\ldots,\mathbf{e}_{|\cA_i|}\}
\]
Therefore we have
\begin{align*}
&\left|{\rm Regret}_T^{i,\*}-{\rm Regret}_T^i\right|\\
=& \left|\max_{\theta\in\Delta({\cA_i})}\sum_{t=1}^T \langle  u_i(\cdot,\theta_{-i}^{(t)}), \theta-\theta_i^{(t)}\rangle-\max_{\theta\in\Delta({\cA_i})}\sum_{t=1}^T \langle  u^{(t)}_i, \theta-\theta_i^{(t)}\rangle\right|\\
=& \left|\max_{\theta\in\Theta_i}\sum_{t=1}^T \langle  u_i(\cdot,\theta_{-i}^{(t)}), \theta-\theta_i^{(t)}\rangle-\max_{\theta\in\Theta_i}\sum_{t=1}^T \langle  u^{(t)}_i, \theta-\theta_i^{(t)}\rangle\right|\\
=& \max_{\theta\in\Theta_i}\left|\sum_{t=1}^T \langle  u_i(\cdot,\theta_{-i}^{(t)}), \theta-\theta_i^{(t)}\rangle-\sum_{t=1}^T \langle  u_i^{(t)}, \theta-\theta_i^{(t)}\rangle\right|\\
=& \max_{\theta\in\Theta_i}\left|\sum_{t=1}^T \langle  u_i(\cdot,\theta_{-i}^{(t)})-u_i^{(t)}, \theta-\theta_i^{(t)}\rangle\right|
\end{align*}
Note that $\langle  u_i(\cdot,\theta_{-i}^{(t)})-u_i^{(t)}, \theta-\theta_i^{(t)}\rangle$ is a bounded martingale difference sequence. By Azuma-Hoeffding's inequality, for a fixed $\theta\in\Theta_i$, with probability at least $1-\frac{\delta}{AN}$,
\[
\left|\sum_{t=1}^T \langle  u_i(\cdot,\theta_{-i}^{(t)})-u_i^{(t)}, \theta-\theta_i^{(t)}\rangle\right|\leq O\left(\sqrt{T\cdot \ln(NA/\delta)}\right)
\]
Thus we complete the proof by a union bound.
\end{proof}
\begin{proof}[Proof of Theorem~\ref{thm:cce_main}]
We condition on event $\Omega_1$ defined Lemma~\ref{lemma:cce-event}, event $\Omega_2$ defined in Lemma~\ref{lem:tworegret}, and the success of Algorithm~\ref{alg:findaction}.

\textbf{Coarse Correlated Equilibria.}
By Lemma~\ref{lem:cceregret} and Lemma~\ref{lem:tworegret} we know that for all $i\in[N]$,
\[
{\rm Regret}_T^{i,\*}\leq O\left(\sqrt{\ln A\cdot T} +\frac{\ln(1/p)\ln T}{\Delta}+\sqrt{T\cdot\ln(NA/\delta)}\right).
\]
Therefore choosing
\[
T=\Theta\left(\frac{\ln(NA/\delta)}{\epsilon^2}+\frac{\ln^2(NA/\Delta\epsilon\delta)}{\Delta\epsilon}\right)
\]
will guarantee that ${\rm Regret}_T^{i,\*}$ is at most $\epsilon T/2$ for all $i\in[N]$. In this case the average strategy $(\sum_{t=1}^T \otimes_{i=1}^N {\theta}_i^{(t)})/T$ would be an $(\epsilon/2)$-CCE.

Finally, in the clipping step, $\Vert \bar\theta_i^{(t)}-\theta_i^{(t)} \Vert_1 \le 2pA \le \frac{\epsilon}{4N}$ for all $i\in [N]$, $t\in [T]$. Thus for all $t\in[T]$, we have $\Vert \otimes_{i=1}^n\bar\theta_i^{(t)}-\otimes_{i=1}^n\theta_i^{(t)} \Vert_1\leq \frac{\epsilon}{4}$, which further implies \[\left\Vert (\sum_{t=1}^T\otimes_{i=1}^n\bar\theta_i^{(t)})/T-(\sum_{t=1}^T\otimes_{i=1}^n\theta_i^{(t)})/T \right\Vert_1\leq \frac{\epsilon}{4}.\] Therefore the output strategy $\Pi=(\sum_{t=1}^T \otimes_{i=1}^N \bar{\theta}_i^{(t)})/T$ is an $\epsilon$-CCE.

\textbf{Rationalizability.} By Lemma~\ref{lem:cce-adaptive-ide}, if $a\in E_L\cap \ca_i$, $\theta_i^{(t)}(a)\le p$ for all $t\in [T]$. It follows that $\bar\theta_i^{(t)}(a)=0$, \emph{i.e.}, the action would not be the support in the output strategy $\Pi=(\sum_{t=1}^T \otimes_{i=1}^N \bar{\theta}_i^{(t)})/T$.

\textbf{Sample complexity}. The total number of full-information queries is
\begin{align*}
\sum_{t=1}^T M_t &\le  T+\sum_{t=1}^T \frac{64\log(ANT/\delta)}{\Delta^2 t}\\
&\le  T+\Tilde{O}\left(\frac{1}{\Delta^2}\right)\\
   &= \tilde{O}\left(\frac{1}{\Delta^2}+\frac{1}{\epsilon^2}\right).
\end{align*}
The total sample complexity for CCE learning would then be
\begin{align*}
NA\cdot \sum_{t=1}^T M_t=\Tilde{O}\left(\frac{NA}{\epsilon^2}+\frac{NA}{\Delta^2}\right).
\end{align*}
Finally consider the cost of finding one IDE-surviving action profile ($\tilde{O}\left(\frac{LNA}{\Delta^2}\right)$) and we get the claimed rate.
\end{proof}
\section{Omitted Proofs in Section~\ref{sec:ce}}

\label{sec:ce-proof}

Similar to the CCE case we first bound the sampling noise.
For action $a_i\in\cA_i$, and $\tau\in[T]$, we denote the sampling noise as 
\[
\xi_i^{({\tau})}(a_i):=u_i^{({\tau})}(a_i)-u_i(a_i,\theta_{-i}^{(\tau)}).
\]
%In the CE case, for each $b_i\in\cA_i$, we consider the weighted sum 
%\[
%\sum_{\tau=1}^t \xi_i^{({\tau})}(a_i)\theta_i^{(\tau)}(b_i)
%\]
%and derive the following lemma.
In the CE case, we are interested in the weighted sum of noise
$\sum_{\tau=1}^t \xi_i^{({\tau})}(a_i)\theta_i^{(\tau)}(b_i),$
which is bounded in the following lemma.
\begin{lem}
\label{lem:ce-adaptive-ide}
The following event $\Omega_3$ holds with probability at least $1-\delta$: for all $t\in[T]$, $i\in[N]$, and $a_i\in\cA_i$,
\[
\left|\sum_{\tau=1}^t \xi_i^{({\tau})}(a_i)\theta_i^{(\tau)}(b_i)\right|\leq\frac{\Delta}{4}\sum_{\tau=1}^t \theta_i^{(\tau)}(b_i).
\]
\end{lem}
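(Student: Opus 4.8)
The plan is to prove this via a martingale concentration argument that directly parallels the CCE analysis in Lemma~\ref{lemma:cce-event}, but with the additional subtlety that the batch sizes are now adaptive. First I would fix a player $i\in[N]$, a pair of actions $a_i,b_i\in\cA_i$, and a horizon $t\in[T]$, prove the bound for this fixed tuple, and take a union bound over the $\le TNA^2$ choices at the very end (setting the per-event failure probability to $\delta/(TNA^2)$). The essential structural observation is that both the strategy $\theta_i^{(\tau)}$ and the batch size $M^{(\tau)}_i$ are determined \emph{before} the round-$\tau$ samples are drawn, i.e. they are measurable with respect to the filtration $\mathcal{F}_{\tau-1}$ generated by rounds $1,\dots,\tau-1$. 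Since each sample gathered while playing $(a_i,\theta_{-i}^{(\tau)})$ has conditional mean $u_i(a_i,\theta_{-i}^{(\tau)})$, the noise $\xi_i^{(\tau)}(a_i)$ is conditionally mean-zero, so the products $\xi_i^{(\tau)}(a_i)\,\theta_i^{(\tau)}(b_i)$ form a martingale difference sequence.

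The key refinement is to expand each round into its $M^{(\tau)}_i$ individual samples. Writing $u_i^{(\tau)}(a_i)$ as the empirical average of $M^{(\tau)}_i$ i.i.d.\ draws turns the whole sum into a martingale with $\sum_{\tau=1}^t M^{(\tau)}_i$ increments, where the contribution of a single sample from round $\tau$ is $\tfrac{\theta_i^{(\tau)}(b_i)}{M^{(\tau)}_i}$ times a mean-zero term bounded by $1$. Thus each fine-grained increment is bounded by $\tfrac{\theta_i^{(\tau)}(b_i)}{M^{(\tau)}_i}$, and the predictable quadratic variation equals $\sum_{\tau=1}^t \tfrac{\theta_i^{(\tau)}(b_i)^2}{M^{(\tau)}_i}$. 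I would then invoke a Freedman-type (self-normalized) martingale inequality to obtain, with probability $1-\delta/(TNA^2)$,
\[
\left|\sum_{\tau=1}^t \xi_i^{(\tau)}(a_i)\,\theta_i^{(\tau)}(b_i)\right|\;\le\; 2\sqrt{\log(TNA^2/\delta)\sum_{\tau=1}^t \frac{\theta_i^{(\tau)}(b_i)^2}{M^{(\tau)}_i}}.
\]

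It remains to control the variance term using the adaptive choice of $M^{(\tau)}_i$. Writing $S_\tau:=\sum_{j=1}^\tau \theta_i^{(j)}(b_i)$ and recalling that $M^{(\tau)}_i$ is the maximum over actions of $64\log(TNA^2/\delta)\,\theta_i^{(\tau)}(a)/(\Delta^2 S_\tau^{(a)})$ (so in particular $M^{(\tau)}_i \ge 64\log(TNA^2/\delta)\,\theta_i^{(\tau)}(b_i)/(\Delta^2 S_\tau)$), each summand obeys $\tfrac{\theta_i^{(\tau)}(b_i)^2}{M^{(\tau)}_i}\le \tfrac{\Delta^2}{64\log(TNA^2/\delta)}\,\theta_i^{(\tau)}(b_i)\,S_\tau$. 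The crux is the telescoping estimate $\sum_{\tau=1}^t \theta_i^{(\tau)}(b_i)\,S_\tau = \sum_{\tau=1}^t (S_\tau-S_{\tau-1})\,S_\tau \le S_t\sum_{\tau=1}^t(S_\tau-S_{\tau-1}) = S_t^2$, which converts the $\ell_2$-type variance into the $\ell_1$-type quantity $S_t=\sum_{\tau=1}^t\theta_i^{(\tau)}(b_i)$ on the right-hand side. Substituting and cancelling the logarithm leaves $2\sqrt{\Delta^2 S_t^2/64}=\tfrac{\Delta}{4}S_t$, exactly as claimed; a union bound over $t$, $i$, $a_i$, $b_i$ finishes the argument.

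I expect the main obstacle to be the adaptivity of the batch sizes $M^{(\tau)}_i$. Because they depend on the realized strategies $\{\theta_i^{(\tau)}\}$, the quadratic variation $\sum_{\tau}\theta_i^{(\tau)}(b_i)^2/M^{(\tau)}_i$ is random, so a plain Azuma--Hoeffding bound with a deterministic variance proxy would only yield a bound scaling with the worst-case envelope $t$ rather than the realized $S_t=\sum_\tau\theta_i^{(\tau)}(b_i)$ that appears on the right. Closing this gap is precisely why a \emph{data-dependent} concentration is needed: I would use Freedman's inequality with the predictable quadratic variation (equivalently, Azuma combined with a peeling argument over the dyadic scales of $S_t$, which only costs a further logarithmic factor absorbed into $M^{(\tau)}_i$). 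The second, purely algebraic, delicacy is that this cancellation closes only because the adaptive rule makes $\theta_i^{(\tau)}(b_i)^2/M^{(\tau)}_i$ proportional to $\theta_i^{(\tau)}(b_i)\,S_\tau$, and the telescoping inequality converts that into $S_t^2$; the constant $64$ together with the logarithmic factor folded into $M^{(\tau)}_i$ is exactly what drives the final coefficient down to $\Delta/4$.
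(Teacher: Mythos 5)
Your proposal is correct and follows essentially the same route as the paper: decompose the weighted noise sum into $\sum_{\tau\le t} M_i^{(\tau)}$ per-sample martingale increments each bounded by $\theta_i^{(\tau)}(b_i)/M_i^{(\tau)}$, concentrate, then use the adaptive definition of $M_i^{(\tau)}$ together with $\sum_{\tau\le t}\theta_i^{(\tau)}(b_i)\sum_{j\le\tau}\theta_i^{(j)}(b_i)\le\bigl(\sum_{\tau\le t}\theta_i^{(\tau)}(b_i)\bigr)^2$ to land on $\frac{\Delta}{4}\sum_{\tau\le t}\theta_i^{(\tau)}(b_i)$. The one place you are more careful than the paper is in invoking a Freedman-type inequality to handle the predictable (data-dependent) increment bounds arising from the adaptive $M_i^{(\tau)}$, where the paper simply cites Azuma--Hoeffding; this is a refinement in rigor rather than a different argument.
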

\begin{proof}
%Note that $\xi_i^{({\tau})}$ is the average of $M_i^{\tau}$ independent samples of a random variable bounded in $[-2,2]$. Therefore $\xi_i^{({\tau})}$ has 
%\iffalse
Note that $\sum_{\tau=1}^t \xi_i^{({\tau})}(a_i)\theta_i^{(\tau)}(b_i)$ can be written as the sum of $\sum_{\tau=1}^t M_i^{(\tau)}$ mean-zero bounded terms. Precisely, there are $M_i^{(\tau)}$ terms bounded by $\frac{\theta_i^{(\tau)}(b_i)}{M_i^{(\tau)}}$.
By the Azuma-Hoeffding inequality, we have that
with probability at least $1-\frac{\delta}{A^2NT}$,
\begin{align*}
\left|\sum_{\tau=1}^t \xi_i^{({\tau})}(a_i)\theta_i^{(\tau)}(b_i)\right|&\leq 2\cdot\sqrt{\log(ANT/\delta)\sum_{\tau=1}^t M_i^{\tau}\cdot\left(\frac{\theta_i^{(\tau)}(b_i)}{M_i^{(\tau)}}\right)^2}\\
&= 2\cdot\sqrt{\log(ANT/\delta)\sum_{\tau=1}^t \frac{(\theta_i^{(\tau)}(b_i))^2}{M_i^{(\tau)}}}\\
&\leq \frac{\Delta}{4}\cdot\sqrt{\sum_{\tau=1}^t \theta_i^{(\tau)}(b_i)\sum_{j=1}^\tau\theta_i^{(j)}(b_i)}\\
&\leq \frac{\Delta}{4}\sum_{\tau=1}^t \theta_i^{(\tau)}(b_i)
\end{align*}
Therefore by a union bound we complete the proof.
\end{proof}
\begin{lemma}
\label{lem:ce-adaptive-ide-real}
With probability at least $1-2\delta$, for all $t\in[T]$, all $i\in[N]$, and all $a_i\in\cA_i\cap E_L$,
\[
{\theta}_i^{(t)}(a_i)\leq p
\]
\end{lemma}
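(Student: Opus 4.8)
The plan is to mirror the inductive argument behind the CCE analogue (Lemma~\ref{lem:cce-adaptive-ide}), adapted to the Blum--Mansour structure of Algorithm~\ref{alg:CE-adaptive}, where each player keeps $A$ separate Hedge instances $\hat\theta_i^{(t+1)}(\cdot\mid b)$ and recombines them through the stationarity relation $\theta_i^{(t+1)}(a)=\sum_{b}\hat\theta_i^{(t+1)}(a\mid b)\,\theta_i^{(t+1)}(b)$. I would condition on the noise event $\Omega_3$ of Lemma~\ref{lem:ce-adaptive-ide} together with the success of Algorithm~\ref{alg:findaction}; a union bound makes both hold with probability at least $1-2\delta$, matching the claimed probability. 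The statement is then proved by induction on $t$, \emph{jointly} over all players $i$, since the advantage enjoyed by a dominating strategy for player $i$ relies on the other players continuing to avoid $E_L$.

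For the base case $t=1$, the initialization $\theta_i^{(1)}=(1-|\ca_i|p)\,\one[\cdot=a^\*_i]+p\,\one$ assigns probability exactly $p$ to every action other than $a^\*_i$; since $a^\*_i$ is $\Delta$-rationalizable, $a^\*_i\notin E_L$, so every $a_i\in\cA_i\cap E_L$ already satisfies $\theta_i^{(1)}(a_i)=p$. For the induction step I would fix a player $i$, an iteratively dominated action $a_i\in\cA_i\cap E_L$, and the mixed strategy $x_i$ that $\Delta$-dominates it on the rationalizable sub-game. Using the induction hypothesis that every opponent plays each $E_L$ action with probability at most $p$, a union bound bounds the probability that $a_{-i}$ meets $E_L$ by $NAp\le \Delta/8$, so in each past round $\tau$ the true weighted advantage satisfies $u_i(x_i,\theta_{-i}^{(\tau)})-u_i(a_i,\theta_{-i}^{(\tau)})\ge(1-NAp)\Delta-NAp\ge \tfrac34\Delta$.

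The crux is a \emph{per-instance} bound $\hat\theta_i^{(t+1)}(a_i\mid b)\le p$ for every $b$. Applying the log-sum-exp (Jensen) comparison between $a_i$ and the mixture $x_i$ inside the exponential-weights update of instance $b$ gives $\hat\theta_i^{(t+1)}(a_i\mid b)\le\exp\!\big(-\eta_{t,i}^{b}\sum_{\tau=1}^{t}\theta_i^{(\tau)}(b)\,(u_i^{(\tau)}(x_i)-u_i^{(\tau)}(a_i))\big)$. The $\Omega_3$ bound controls the weighted sampling noise of this advantage by $\tfrac{\Delta}{4}\sum_\tau\theta_i^{(\tau)}(b)$, so combining it with the $\ge\tfrac34\Delta$ per-round true gap lower-bounds the weighted excess utility by $\tfrac{\Delta}{2}\sum_{\tau=1}^t\theta_i^{(\tau)}(b)$, exactly as in \eqref{eq:ceazuma2}. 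Feeding this into the learning-rate lower bound $\eta_{t,i}^{b}\ge 2\ln(1/p)/(\Delta\sum_\tau\theta_i^{(\tau)}(b))$ from \eqref{eq:ce-param} yields $\hat\theta_i^{(t+1)}(a_i\mid b)\le\exp(-\ln(1/p))=p$. I would then aggregate over instances through the stationarity relation, $\theta_i^{(t+1)}(a_i)=\sum_{b}\hat\theta_i^{(t+1)}(a_i\mid b)\,\theta_i^{(t+1)}(b)\le p\sum_b\theta_i^{(t+1)}(b)=p$, which closes the induction.

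I expect the main obstacle to be making the adaptive parameterization interlock correctly, rather than any single inequality. Unlike the CCE proof, where the round index $t$ uniformly measures progress, here each instance $b$ matures at its own rate $\sum_{\tau=1}^t\theta_i^{(\tau)}(b)$, which may be arbitrarily small. The minibatch size $M_i^{(t)}$ must scale like $\theta_i^{(t)}(b)/\sum_{\tau\le t}\theta_i^{(\tau)}(b)$ so that the martingale noise $\sqrt{\sum_\tau\theta_i^{(\tau)}(b)^2/M_i^{(\tau)}}$ telescopes to $O(\Delta\sum_\tau\theta_i^{(\tau)}(b))$ and stays strictly below the $\Omega(\Delta\sum_\tau\theta_i^{(\tau)}(b))$ gap, while $\eta_{t,i}^{b}$ must scale inversely with the same quantity so that each instance is independently driven below $p$. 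Checking that these adaptive choices remain consistent and that the joint induction across players does not lose a factor of $N$ in the effective gap is where the care is required.
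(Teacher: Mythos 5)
Your proposal is correct and follows essentially the same route as the paper's proof: condition on $\Omega_3$ and the success of Algorithm~\ref{alg:findaction}, induct on $t$ jointly over players, show each Hedge instance satisfies $\hat\theta_i^{(t+1)}(a_i\mid b)\le p$ via the dominating mixture and the adaptive learning rate, and aggregate through the stationarity relation. Your constants ($\tfrac34\Delta$ gap, $2\ln(1/p)$ in the rate) are in fact more faithful to Eq.~\eqref{eq:ce-param} than the paper's own write-up, which uses $4\ln(1/p)$ in the exponent; this is a cosmetic discrepancy only.
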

\begin{proof}
We condition on the event $\Omega_3$ defined in Lemma~\ref{lem:ce-adaptive-ide} and the success of Algorithm~\ref{alg:findaction}. We prove the claim by induction in $t$. The base case for $t=1$ holds directly by initialization. Now we assume the case for $1,2,\ldots,t$ holds and consider the case of ${t+1}$.

Consider a fixed player $i\in[N]$, an iteratively dominated action $a_i\in\cA_i\cap E_L$, and an expert $b_i$. By definition there exists a mixed strategy $x_i$ such that for all $a_{-i}\cap E_L=\emptyset$,
\[
u_i(x_i,a_{-i})\geq u_i(a_i,a_{-i}) + \Delta
\]
Therefore for $\tau\in[t]$, by induction hypothesis we have
\begin{align*}
u_i(x_i,\theta_{-i}^{(\tau)})&\geq u_i(a_i,\theta_{-i}^{(\tau)}) + (1-ANp)\cdot\Delta-ANp\\
&\geq u_i(a_i,\theta_{-i}^{(\tau)}) + \Delta/2
\end{align*}
Thus we have
\begin{align*}
    &\sum_{\tau=1}^t (u_i^{({\tau})}(x_i)-u_i^{({\tau})}(a_i))\cdot\theta_i^{(\tau)}(b_i)\\
    \geq& \sum_{\tau=1}^t (u_i(x_i,\theta_{-i}^{(\tau)})-u_i(a_i,\theta_{-i}^{({\tau})}))\cdot\theta_i^{(\tau)}(b_i)-\frac{\Delta}{4}\sum_{\tau=1}^t \theta_i^{(\tau)}(b_i)\\
    \geq& \frac{\Delta}{2}\sum_{\tau=1}^t \theta_i^{(\tau)}(b_i)-\frac{\Delta}{4}\sum_{\tau=1}^t \theta_i^{(\tau)}(b_i)\\
    = & \frac{\Delta}{4}\sum_{\tau=1}^t \theta_i^{(\tau)}(b_i)
\end{align*}
By our choice of learning rate,
\begin{align*}
\hat{\theta}_{i}^{(t+1)}(a_i|b_i) &\le \exp\left(-\eta_{t,i}^{b_i}\cdot  \sum_{\tau=1}^t \theta_i^{(\tau)}(b_i)\left(u_i^{(\tau)}(x_i)-u_i^{(\tau)}(a_i)\right)\right)\\
&\le \exp\left(-\frac{4\ln(1/p)}{\Delta\sum_{\tau=1}^t \theta_i^{(\tau)}(b)}\cdot \frac{\Delta}{4}\sum_{i=1}^t \theta_i^{(\tau)}(b)\right)= p.
\end{align*}
Therefore we conclude
\[
\theta_i^{(t+1)}(a_i)=\sum_{b_i\in\cA_i}\hat\theta^{(t+1)}_i(a_i|b_i)\theta^{(t+1)}_i(b_i)\leq p
\]
\end{proof}
Now we turn to the $\epsilon$-CE guarantee. For a player $i\in[N]$, recall that the swap-regret is defined as
\begin{align*}
    {\rm SwapRegret}_T^i := \sup_{\phi:\ca_i\to\ca_i}\sum_{t=1}^T\sum_{b\in\ca_i} \theta_i^{(t)}(b)u_i^{(t)}(\phi(b)) - \sum_{t=1}^T \left\langle \theta_i^{(t)},u_i^{(t)} \right\rangle.
\end{align*}
\begin{lemma}
\label{lem:ceregret}
For all $i\in[N]$, the swap-regret can be bounded as
\[
{\rm SwapRegret}_T^{i}\le  O\left(\sqrt{A\ln(A)T} + \frac{A\ln(NAT/\Delta\epsilon)^2}{\Delta}\right).
\]
\end{lemma}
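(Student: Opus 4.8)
The plan is to follow the Blum--Mansour reduction~\citep{blum2007external} and reduce the swap-regret of the composite learner to a sum of $A$ external-regret terms, one per Hedge instance, and then control each via the standard FTRL analysis with time-varying learning rates (exactly as in the CCE proof of Lemma~\ref{lem:cceregret}, cf.~\citet[Corollary 7.9]{orabona2019modern}). First I would establish the exact decomposition. Writing $q_b^{(t)} := \hat\theta_i^{(t)}(\cdot\mid b)$ for the iterate of instance $b$ and using the fixed-point property $\theta_i^{(t)}(a)=\sum_{b}\hat\theta_i^{(t)}(a\mid b)\theta_i^{(t)}(b)$ from Line 7 of Algorithm~\ref{alg:CE-adaptive}, one rewrites $\sum_t\langle\theta_i^{(t)},u_i^{(t)}\rangle = \sum_t\sum_b \theta_i^{(t)}(b)\langle q_b^{(t)},u_i^{(t)}\rangle$. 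Since the supremum over maps $\phi$ decouples across the source action $b$, this yields the identity
\[
{\rm SwapRegret}_T^i = \sum_{b\in\ca_i}\; \max_{a^*\in\ca_i}\sum_{t=1}^T \theta_i^{(t)}(b)\bigl(u_i^{(t)}(a^*)-\langle q_b^{(t)},u_i^{(t)}\rangle\bigr) =: \sum_{b\in\ca_i} R_b,
\]
so $R_b$ is precisely the external regret of instance $b$ run on the scaled gain sequence $g_b^{(t)} := \theta_i^{(t)}(b)\,u_i^{(t)}$, whose entries lie in $[0,\theta_i^{(t)}(b)]$.

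Next I would bound each $R_b$. Because instance $b$ is exponential weights on cumulative gains with the non-increasing learning rate $\eta_{t,i}^b$ (a maximum of two decreasing sequences, hence decreasing), the FTRL regret bound gives $R_b \le \frac{\ln A}{\eta_{T,i}^b} + \frac{1}{2}\sum_{t=1}^T \eta_{t,i}^b (\theta_i^{(t)}(b))^2 + O(1)$. For the leading term I use $\eta_{T,i}^b \ge \sqrt{A\ln A/T}$, which gives $\frac{\ln A}{\eta_{T,i}^b}\le \sqrt{T\ln A/A}$, and hence $\sum_b \frac{\ln A}{\eta_{T,i}^b}\le \sqrt{AT\ln A}$ over the (at most $A$) instances.

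The crux is the stability sum, which I split using $\eta_{t,i}^b\le \frac{2\ln(1/p)}{\Delta S_t^b}+\sqrt{A\ln A/t}$ with $S_t^b:=\sum_{\tau=1}^t\theta_i^{(\tau)}(b)$. For the first piece I bound $(\theta_i^{(t)}(b))^2\le\theta_i^{(t)}(b)$ and invoke the telescoping/integral estimate $\sum_{t}\theta_i^{(t)}(b)/S_t^b \le 1+\ln(S_T^b/\theta_i^{(1)}(b)) = O(\ln(T/p))$, where I use that the initialization in Line 2 guarantees $\theta_i^{(1)}(b)\ge p$ and $S_T^b\le T$; summed over $b$ this contributes $O\!\left(\frac{A\ln(1/p)\ln(T/p)}{\Delta}\right)$. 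For the second piece, the naive bound $(\theta_i^{(t)}(b))^2\le 1$ would lose a spurious factor of $A$, so instead I sum over the instances first and exploit $\sum_b (\theta_i^{(t)}(b))^2 \le \sum_b \theta_i^{(t)}(b)=1$, giving $\sum_b\sum_t \sqrt{A\ln A/t}\,(\theta_i^{(t)}(b))^2 \le \sqrt{A\ln A}\sum_t t^{-1/2}=O(\sqrt{AT\ln A})$.

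Combining the three contributions and substituting $p=\min\{\epsilon,\Delta\}/(8AN)$, so that $\ln(1/p)$ and $\ln(T/p)$ are both $O(\ln(NAT/(\Delta\epsilon)))$, produces the claimed bound $O\!\left(\sqrt{A\ln(A)\,T}+\frac{A\ln(NAT/\Delta\epsilon)^2}{\Delta}\right)$. I expect the stability sum to be the main obstacle: the key ideas are (i) reordering the double sum over $t$ and $b$ and using $\ell_2\le\ell_1$ on the simplex to kill the extra factor of $A$ in the $\sqrt{A\ln A/t}$ part, and (ii) the logarithmic telescoping bound for the adaptive $1/S_t^b$ part, which is exactly what the normalization $\sum_\tau\theta_i^{(\tau)}(b)$ in the learning rate was designed to enable.
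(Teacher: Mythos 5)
Your proposal is correct and follows essentially the same route as the paper's proof: the Blum--Mansour decomposition of swap-regret into per-instance external regrets via the fixed-point property, the FTRL bound with the non-increasing two-branch learning rate, the telescoping $\sum_t \theta_i^{(t)}(b)/S_t^b \le 1+\ln(T/p)$ estimate enabled by $\theta_i^{(1)}(b)\ge p$, and the reordering of the sum over $b$ to absorb the extra $A$ factor in the $\sqrt{A\ln A/t}$ term. The only cosmetic difference is that you state the decomposition as an exact identity while the paper allows an additive $O(1)$ slack for the $t=1$ iterate (where the fixed-point property need not hold), which is immaterial to the final bound.
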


%\yw{TODO : Define swap regret, write proof more rigorously}

\begin{proof}
For $i\in[N]$, recall that the regret for an expert $b\in\cA_i$ is defined as
\begin{align*}
    {\rm Regret}_T^{i,b} := \max_{a\in \ca_i}\sum_{t=1}^T \theta_i^{(t)}(b)u_i^{(t)}(a)  - \sum_{t=1}^T \left\langle \hat\theta_i^{(t)}(\cdot|b),\theta_i^{(t)}(b)u_i^{(t)} \right\rangle.
\end{align*}
Since $\theta_i^{(t)}(a) = \sum_{b\in\ca_i} \hat\theta_i^{(t)}(a|b)\theta_i^{(t)}(b)$ for all $a$ and all $t>1$,
\begin{align*}
    \sum_{b\in\ca_i}{\rm Regret}_T^{i,b} &= \sum_{b\in\ca_i} \max_{a_b\in \ca_i}\sum_{t=1}^T \theta_i^{(t)}(b)u_i^{(t)}(a_b)  - \sum_{b\in\ca_i}\sum_{t=1}^T \left\langle \hat\theta_i^{(t)}(\cdot|b)\theta_i^{(t)}(b),u_i^{(t)} \right\rangle\\
    &=\max_{\phi:\ca_i\to\ca_i}\sum_{b\in\ca_i}\sum_{t=1}^T\theta_i^{(t)}(b)u_i^{(t)}(\phi(b)) - \sum_{t=1}^T \left\langle \sum_{b\in\ca_i}\hat\theta_i^{(t)}(\cdot|b)\theta_i^{(t)}(b), u_i^{(t)}\right\rangle\\
    &\ge  \max_{\phi:\ca_i\to\ca_i}\sum_{t=1}^T\sum_{b\in\ca_i}\theta_i^{(t)}(b)u_i^{(t)}(\phi(b))
    - \sum_{t=2}^T \left\langle \theta_i^{(t)},u_i^{(t)} \right\rangle - 1\ge  {\rm SwapRegret}_T^i-1.
\end{align*}
It now suffices to control the regret of each individual expert. For expert $b$, we are essentially running FTRL with learning rates
$$\eta_{t,i}^b:=\max\left\{\frac{4\ln(1/p)}{\Delta\sum_{\tau=1}^{t}\theta_i^{(\tau)}(b)},\frac{\sqrt{A\ln A}}{\sqrt{t}}\right\},$$
which are clearly monotonically decreasing. Therefore using standard analysis of FTRL (see, \emph{e.g.}, \citet[Corollary 7.9]{orabona2019modern}),
\begin{align*}
{\rm Regret}_T^{i,b} &\le \frac{\ln A}{\eta_{T,i}^b} + \sum_{t=1}^T \eta_{t,i}^b \cdot \theta_i^{(t)}(b)^2\\
&\le \sqrt{\frac{T\ln A}{A}} + \sum_{t=1}^T\theta_i^{(t)}(b)\cdot \sqrt{\frac{A\ln A}{t}} + \frac{4\ln(1/p)}{\Delta}\cdot\sum_{t=1}^{T} \frac{\theta_i^{(t)}(b)}{\sum_{\tau=1}^t \theta_i^{(\tau)}(b)}\\
&\le \sqrt{\frac{T\ln A}{A}}  +  \sum_{t=1}^T\theta_i^{(t)}(b)\cdot \sqrt{\frac{A\ln A}{t}} + \frac{4\ln(1/p)}{\Delta}\left(1+\ln\left(\frac{T}{p}\right)\right).
\end{align*}
Here we used the fact that $\forall b\in \ca_i$, $\theta_i^{(1)}(b)\ge p$, and
\begin{align*}
    \sum_{t=1}^{T} \frac{\theta^{(t)}_i(b)}{\sum_{i=1}^\tau \theta_i^{(\tau)}(b)} &\le 1+\int_{\theta_i^{(1)}(b)}^{\sum_{t=1}^T \theta_i^{(t)}(b)}{\frac{{\rm d}s}{s}} = 1 + \ln\left(\frac{\sum_{t=1}^T \theta_i^{(t)}(b)}{\theta_i^{(1)}(b)}\right)\\
    &\le 1 + \ln\left(\frac{T}{p}\right).
\end{align*}
Notice that $\sum_{b\in\ca_i}\sum_{t=1}^T\theta_i^{(t)}(b)\cdot \sqrt{\frac{A\ln A}{t}} \le O(\sqrt{A\ln (A) T})$. Therefore
\begin{equation}
\label{eq:swap-regret-ce}
    {\rm SwapRegret}_T^i\le O(1)+ \sum_{b\in \ca_i} {\rm Regret}_T^{i,b} \le O\left(\sqrt{A\ln(A)T} + \frac{A\ln(NAT/\Delta\epsilon)^2}{\Delta}\right).
\end{equation}
%It follows that the swap regret of Algorithm~\ref{alg:ceadaptive} is $\Tilde{O}\left(A\sqrt{T}+A/\Delta\right)$.
\end{proof}
Similar to the CCE case,, this form of regret can not directly imply approximate CE. We define the following expected version regret
\[
{\rm SwapRegret}_T^{i,\*}:=\sup_{\phi:\ca_i\to\ca_i}\sum_{t=1}^T\left\langle \phi\circ \theta_i^{(t)},u_i(\cdot,\theta_{-i}^{(t)})\right\rangle - \sum_{t=1}^T \left\langle \theta_i^{(t)},u_i(\cdot,\theta_{-i}^{(t)}) \right\rangle
\]
The next lemma bound the difference between these two types of regret
\begin{lem}
\label{lem:cetworegret}
The following event $\Omega_4$ has probability at least $1-\delta$: for all $i\in[N]$,
\[
\left|{\rm SwapRegret}_T^{i,\*}-{\rm SwapRegret}_T^i\right|\leq O\left(\sqrt{AT\ln\left(\frac{AN}{\delta}\right)}\right).
\]
\end{lem}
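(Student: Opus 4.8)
The plan is to mirror the strategy used for the CCE analogue (Lemma~\ref{lem:tworegret}), while exploiting the additive-over-experts structure of swap regret so as to pay only a $\sqrt{A}$ factor rather than an $A$ factor. First I would observe that, because a swap function $\phi:\ca_i\to\ca_i$ assigns an output to each input independently and the objective is additive in the input $b$, both swap regrets decouple across experts: ${\rm SwapRegret}_T^i=\sum_{b\in\ca_i}\max_{a\in\ca_i}\sum_{t=1}^T\theta_i^{(t)}(b)u_i^{(t)}(a)-\sum_{t=1}^T\langle\theta_i^{(t)},u_i^{(t)}\rangle$, and likewise for ${\rm SwapRegret}_T^{i,\*}$ with $u_i^{(t)}$ replaced by $u_i(\cdot,\theta_{-i}^{(t)})$. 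Subtracting the two, the difference splits into (i) a sum over experts $b$ of differences of maxima and (ii) the difference of the two ``played'' terms. Using $|\max_a X_a-\max_a Y_a|\le\max_a|X_a-Y_a|$ on each expert, and recalling the sampling noise $\xi_i^{(\tau)}(a)=u_i^{(\tau)}(a)-u_i(a,\theta_{-i}^{(\tau)})$, term (i) is bounded by $\sum_{b\in\ca_i}\max_{a\in\ca_i}\bigl|\sum_{t=1}^T\theta_i^{(t)}(b)\xi_i^{(t)}(a)\bigr|$, while term (ii) equals $\bigl|\sum_{t=1}^T\langle\theta_i^{(t)},\xi_i^{(t)}\rangle\bigr|$.

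Next I would apply the Azuma--Hoeffding inequality to each weighted noise sum. For a fixed triple $(i,b,a)$, the sequence $\theta_i^{(t)}(b)\xi_i^{(t)}(a)$ is a martingale difference sequence with respect to the filtration $\cF_{t-1}$ generated by all observations before round $t$, since $\theta_i^{(t)}(b)$ and $\theta_{-i}^{(t)}$ are $\cF_{t-1}$-measurable and $\EE[\xi_i^{(t)}(a)\mid\cF_{t-1}]=0$. Its increments are bounded in magnitude by $\theta_i^{(t)}(b)\le 1$, so its quadratic variation is at most $\sum_{t=1}^T\theta_i^{(t)}(b)^2$, giving with probability at least $1-\delta/(2NA^2)$ a bound of $O\bigl(\sqrt{\ln(NA/\delta)\sum_{t=1}^T\theta_i^{(t)}(b)^2}\bigr)$. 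For term (ii), $\langle\theta_i^{(t)},\xi_i^{(t)}\rangle$ is a single martingale difference sequence per player with increments bounded by $1$, hence $O(\sqrt{T\ln(NA/\delta)})$. A union bound over all $NA^2$ triples $(i,b,a)$ together with the $N$ martingales of term (ii) makes all these bounds hold simultaneously with probability at least $1-\delta$, which defines the event $\Omega_4$.

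Finally I would aggregate over experts. The crucial step is Cauchy--Schwarz: $\sum_{b\in\ca_i}\sqrt{\sum_{t=1}^T\theta_i^{(t)}(b)^2}\le\sqrt{A}\,\sqrt{\sum_{b\in\ca_i}\sum_{t=1}^T\theta_i^{(t)}(b)^2}\le\sqrt{A}\,\sqrt{\sum_{t=1}^T\sum_{b\in\ca_i}\theta_i^{(t)}(b)}=\sqrt{AT}$, where I used $\theta_i^{(t)}(b)^2\le\theta_i^{(t)}(b)$ and $\sum_{b}\theta_i^{(t)}(b)=1$. This turns term (i) into $O(\sqrt{AT\ln(AN/\delta)})$, while term (ii) is the dominated $O(\sqrt{T\ln(AN/\delta)})$, and combining the two yields the claimed bound $O\bigl(\sqrt{AT\ln(AN/\delta)}\bigr)$ for every $i\in[N]$.

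I expect the main obstacle to be obtaining the $\sqrt{A}$ rather than $A$ dependence: a naive per-expert martingale bound with unit increments would sum to $A\sqrt{T}$ over the $A$ experts. Avoiding this requires using the sharper quadratic-variation bound $\sum_t\theta_i^{(t)}(b)^2$ in Azuma--Hoeffding and then applying Cauchy--Schwarz across experts, which exploits that the weights $\theta_i^{(t)}(\cdot)$ form a distribution summing to one in each round. Secondary care is needed to verify the martingale-difference property of the weighted noise, \emph{i.e.} the $\cF_{t-1}$-measurability of $\theta_i^{(t)}$ and $\theta_{-i}^{(t)}$, so that the inequality is legitimately applicable.
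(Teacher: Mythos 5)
Your proof is correct, but it takes a genuinely different route from the paper's. The paper bounds $\left|{\rm SwapRegret}_T^{i,\*}-{\rm SwapRegret}_T^i\right|$ by $\sup_{\phi}\bigl|\sum_{t}\langle \phi\circ\theta_i^{(t)}-\theta_i^{(t)},\,u_i(\cdot,\theta_{-i}^{(t)})-u_i^{(t)}\rangle\bigr|$, applies Azuma--Hoeffding with unit-order increments to each fixed swap function $\phi$, and then union-bounds over all $A^A$ maps $\phi:\ca_i\to\ca_i$; the $\sqrt{A}$ in the final bound comes entirely from the $\ln(A^A)=A\ln A$ term inside the logarithm. You instead exploit the fact that the supremum over $\phi$ decouples additively across experts $b$, run a separate martingale for each of the polynomially many pairs $(b,a)$ with the sharper variance proxy $\sum_t\theta_i^{(t)}(b)^2$, and recover the $\sqrt{A}$ via Cauchy--Schwarz together with $\sum_b\theta_i^{(t)}(b)=1$. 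Both arguments are valid and give the stated $O\bigl(\sqrt{AT\ln(AN/\delta)}\bigr)$; the paper's is shorter and its logarithmic dependence on $\delta$ is slightly better (it yields $\sqrt{T(A\ln A+\ln(N/\delta))}$ rather than $\sqrt{AT\ln(NA/\delta)}$), while yours avoids the exponential-size union bound, needs only a polynomial covering of events, and correctly identifies the measurability facts ($\theta_i^{(t)}$, $\theta_{-i}^{(t)}$ being $\cF_{t-1}$-measurable) that make the weighted noise a martingale difference sequence.
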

\begin{proof}
Note that
\begin{align*}
&\left|{\rm SwapRegret}_T^{i,\*}-{\rm SwapRegret}_T^i\right|\\
=&\left|\sup_{\phi:\ca_i\to\ca_i}\sum_{t=1}^T\left\langle \phi\circ \theta_i^{(t)}-\theta_i^{(t)},u_i(\cdot,\theta_{-i}^{(t)})\right\rangle-\sup_{\phi:\ca_i\to\ca_i}\sum_{t=1}^T\left\langle \phi\circ \theta_i^{(t)}-\theta_i^{(t)},u_i^{(t)}\right\rangle\right|\\
\leq&\sup_{\phi:\ca_i\to\ca_i}\left|\sum_{t=1}^T\left\langle \phi\circ \theta_i^{(t)}-\theta_i^{(t)},u_i(\cdot,\theta_{-i}^{(t)})-u_i^{(t)}\right\rangle\right|.
\end{align*}
Notice that $\E[u_i^{(t)}] = u_i\left(\cdot,\theta_{-i}^{(t)}\right)$, and that $u_i^{(t)}\in [-1,1]^A$. Therefore, $\forall \phi:\ca_i\to \ca_i$,  $$\xi^\phi_t:=\left\langle \phi\circ \theta_i^{(t)}-\theta_i^{(t)},u_i\left(\cdot,\theta_{-i}^{(t)}\right)-u_i^{(t)}\right\rangle$$ is a bounded martingale difference sequence. By Azuma-Hoeffding inequality, for a fixed $\phi:\cA_i\rightarrow\cA_i$, with probability $1-\delta'$,
\begin{align*}
    \left|\sum_{t=1}^T \xi_t^\phi\right| \le 2\sqrt{2T\ln\left(\frac{2}{\delta'}\right)}.
\end{align*}
By setting $\delta'=\delta/(NA^A)$, we get with probability $1-\delta/N$, $\forall \phi:\ca_i\to \ca_i$,
\begin{align*}
  \left|\sum_{t=1}^T \xi_t^\phi\right| \le 2\sqrt{2AT\ln\left(\frac{2AN}{\delta}\right)}.
\end{align*}
Therefore we complete the proof by a union bound over $i\in[N]$.
\end{proof}

\begin{proof}[Proof of Theorem~\ref{thm:ce_main}]
%\yw{TODO: Write proof more rigorously. Handle swp-regret $\to$ CE reduction more carefully (need to handle martingale since swap regret is defined against $u_i^{(t)}$ not $u_i(\theta_t)$)}
We condition on event $\Omega_3$ defined Lemma~\ref{lem:ce-adaptive-ide}, event $\Omega_4$ defined in Lemma~\ref{lem:cetworegret}, and the success of Algorithm~\ref{alg:findaction}.

\textbf{Correlated Equilibrium.} 
\iffalse
By Eq.~(\ref{eq:swap-regret-ce}), we can show that
\begin{align*}
    \sup_{\phi:\ca_i\to\ca_i}\sum_{t=1}^T\left\langle \phi\circ \theta_i^{(t)},u_i^{(t)}\right\rangle - \sum_{t=1}^T \left\langle \theta_i^{(t)},u_i^{(t)} \right\rangle \le O\left(\sqrt{A\ln(A)T} + \frac{A\ln(NAT/\delta)^2}{\Delta}\right).
\end{align*}
It follows that with probability $1-\delta$, $\forall i\in [N]$,
\begin{align}
&\sup_{\phi:\ca_i\to\ca_i}\sum_{t=1}^T\left\langle \phi\circ \theta_i^{(t)},u_i(\theta^{(t)})\right\rangle - \sum_{t=1}^T \left\langle \theta_i^{(t)},u_i(\theta^{(t)}) \right\rangle \nonumber\\
\le& O\left(\sqrt{A\ln(A)T} + \frac{A\ln(NAT/\delta)^2}{\Delta}\right) + 2\sup_\phi  \left|\sum_{t=1}^T \xi_t^\phi\right| \nonumber\\
\le& O\left(\sqrt{AT\ln\left(\frac{AN}{\delta}\right)} + \frac{A\ln(NAT/\delta)^2}{\Delta}\right). \label{eq:ce-ind}
\end{align}
\fi
By Lemma~\ref{lem:ceregret} and Lemma~\ref{lem:cetworegret} we know that for all $i\in[N]$,
\[
{\rm SwapRegret}_T^{i,\*}\leq O\left(\sqrt{A\ln(A)T} + \frac{A\ln(NAT/\Delta\epsilon)^2}{\Delta}+\sqrt{AT\ln\left(\frac{AN}{\delta}\right)}\right).
\]
Therefore choosing 
\begin{align*}
    T=\Theta\left(\frac{A\ln\left(\frac{AN}{\delta}\right)}{\epsilon^2} + \frac{A\ln^3\left(\frac{NA}{\Delta\epsilon\delta}\right)}{\Delta\epsilon}\right)
\end{align*}
will guarantee that ${\rm SwapRegret}_T^{i,\*}$ is at most $\epsilon T/2$ for all $i\in[N]$. In this case the average strategy $(\sum_{t=1}^T \otimes_{i=1}^N {\theta}_i^{(t)})/T$ would be an $\epsilon/2$-CE.

Finally, in the clipping step, $\Vert \bar\theta_i^{(t)}-\theta_i^{(t)} \Vert_1 \le 2pA \le \frac{\epsilon}{4N}$ for all $i\in [N]$, $t\in [T]$. Thus for all $t\in[T]$, we have $\Vert \otimes_{i=1}^n\bar\theta_i^{(t)}-\otimes_{i=1}^n\theta_i^{(t)} \Vert_1\leq \frac{\epsilon}{4}$, which further implies \[\left\Vert (\sum_{t=1}^T\otimes_{i=1}^n\bar\theta_i^{(t)})/T-(\sum_{t=1}^T\otimes_{i=1}^n\theta_i^{(t)})/T \right\Vert_1\leq \frac{\epsilon}{4}.\] Therefore the output strategy $\Pi=(\sum_{t=1}^T \otimes_{i=1}^N \bar{\theta}_i^{(t)})/T$ is an $\epsilon$-CE.

%\yw{Lot's of constants undetermined}

\textbf{Rationalizability.} By Lemma~\ref{lem:ce-adaptive-ide-real}, if $a\in E_L\cap \ca_i$, $\theta_i^{(t)}(a)\le p$ for all $t\in [T]$. It follows that $\bar\theta_i^{(t)}(a)=0$, \emph{i.e.}, the action would not be the support in the output strategy $\Pi=(\sum_t \otimes_i \bar{\theta}_i^{(t)})/T$.

\textbf{Sample complexity}. The total number of queries is%full-information queries is
\begin{align*}
\sum_{i\in[N]}\sum_{t=1}^T AM_i^{(t)} &\le NAT+ \sum_{i\in[N]}\sum_{b\in\ca_i} \sum_{t=1}^{T} \frac{16\theta_i^{(t)}(b)}{\Delta^2 \cdot \sum_{\tau=1}^t \theta_i^{(\tau)}(b)}\\
&\le NAT+\frac{16 NA^2}{\Delta^2}\cdot \ln (T/p)\\
&\le \Tilde{O}\left(\frac{NA^2}{\epsilon^2}+\frac{NA^2}{\Delta^2}\right),
\end{align*}
where we used the fact that
\begin{align*}
    \sum_{t=1}^{T} \frac{\theta^{(t)}_i(a)}{\sum_{\tau=1}^t \theta_i^{(\tau)}(a)}\le 1 + \ln\left(\frac{T}{p}\right).
\end{align*}
%Total sample complexity for CE finding would then be
%\begin{align*}
%NA\cdot \sum_{t=1}^T M_t=\Tilde{O}\left(\frac{NA^2}{\epsilon^2}+\frac{NA^2}{\Delta^2}\right).
%\end{align*}
Finally consider the cost of finding one IDE-surviving action profile ($\tilde{O}\left(\frac{LNA}{\Delta^2}\right)$) and we get the claimed rate.
\end{proof}
%\section{Details for Reduction Algorithms} 
\section{Omitted Proofs in Section~\ref{sec:reduction}}
\label{sec:reductiondetails}
%\section{Reduction Algorithm} 

%In this section, we present the details for the reduction based algorithm for finding rationalizable CE (Algorithm~\ref{alg:ce_2}) and analysis of both Algorithm~\ref{alg:cce-reduction} and~\ref{alg:ce_2}.

%\subsection{Rationalizable CCE via Reduction} 
\subsection{Proof of Theorem~\ref{thm:cce-reduction}}
In the proof we denote $\epsilon'=\frac{\min\{\epsilon,\Delta\}}{3}$.

\begin{lemma}
\label{lem:cce-reduction-concentration}
With probability $1-\delta$, throughout the execution of Algorithm~\ref{alg:cce-reduction}, for every $t$ and $i\in [N]$, $a'_i\in \cA_i$,
\begin{align*}
    \left|\hat u_i(a'_i, \Pi_{-i}) - u_i\left(a'_i, \Pi_{-i}\right)\right| \le \epsilon'.
\end{align*}
\end{lemma}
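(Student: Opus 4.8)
The plan is to prove the bound by a standard application of Hoeffding's inequality combined with a union bound; the only nonroutine ingredients are a deterministic cap on the number of outer rounds (so that the union is taken over polynomially many events) and a careful conditioning to handle the fact that $\Pi$ is chosen adaptively.

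First I would bound the number of rounds deterministically. Since $\cA_i^{(t+1)}\supseteq \cA_i^{(t)}$ for every $i$, and the algorithm terminates as soon as $\cA_i^{(t+1)}=\cA_i^{(t)}$ for all $i$, every non-terminating round enlarges $\bigcup_i \cA_i^{(t)}$ by at least one action. Because $\sum_{i}|\cA_i|\le NA$, the loop runs for at most $NA$ rounds. Consequently there are at most $NA\cdot NA\le N^2A^2$ triples $(t,i,a'_i)$ for which an estimate $\hat u_i(a'_i,\Pi_{-i})$ is ever formed, and I can safely union bound over all of them.

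Next I would control a single estimate. At the start of round $t$ the strategy $\Pi$ is a function of the history $\cF_t$ gathered so far, so conditionally on $\cF_t$ the target $u_i(a'_i,\Pi_{-i})$ is a fixed scalar. Each of the $M$ samples forming $\hat u_i(a'_i,\Pi_{-i})$ is produced by playing $a'_i$ against an independent draw $a_{-i}\sim\Pi_{-i}$ and observing the bounded feedback $U_i\in[0,1]$ with $\EE[U_i\mid a'_i,a_{-i}]=u_i(a'_i,a_{-i})$; averaging over the draw of $a_{-i}$ gives conditional mean exactly $u_i(a'_i,\Pi_{-i})$. Thus $\hat u_i(a'_i,\Pi_{-i})$ is an average of $M$ i.i.d.\ $[0,1]$-valued variables with the correct mean, and Hoeffding's inequality yields
\[
\Pr\!\left[\left|\hat u_i(a'_i,\Pi_{-i})-u_i(a'_i,\Pi_{-i})\right|>\epsilon'\,\middle|\,\cF_t\right]\le 2\exp\!\left(-2M\epsilon'^2\right).
\]
Since this holds for every realization of $\cF_t$, it holds unconditionally.

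Finally I would substitute $M=\lceil 4\ln(2NA/\delta)/\epsilon'^2\rceil$, which gives $2\exp(-2M\epsilon'^2)\le 2(\delta/(2NA))^{8}$ per estimate, and union bound over the at most $N^2A^2$ triples identified above; the resulting total failure probability is comfortably below $\delta$ for all $\delta\le 1$. The main obstacle is purely the adaptivity of $\Pi$ across rounds: one must state Hoeffding conditionally on the pre-round history and lean on the deterministic round bound, rather than treat the run as a fixed data-independent schedule. Once these two points are in place, the remaining estimate is routine.
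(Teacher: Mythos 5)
Your proposal is correct and follows essentially the same route as the paper: bound the number of outer rounds by $NA$ via the strictly growing support, apply Hoeffding to each of the at most $N^2A^2$ estimates, and union bound. The only difference is that you make explicit the conditioning on the pre-round history to handle the adaptivity of $\Pi$, a point the paper's proof leaves implicit.
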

\begin{proof}
First, observe that during every iterate of $t$ before the algorithm returns, the total support size $\sum_{i=1}^{t} |\ca_i^{(t)}|$ is increased by at least $1$. It follows that the algorithm returns before $t=NA$.

By Hoeffding's inequality,
\begin{align*}
\Pr\left[ \left|\hat u_i(a'_i, \Pi_{-i}) - u_i\left(a'_i, \Pi_{-i}\right)\right|>\epsilon'\right] \le 2\exp\left(-\frac{n\epsilon'^2}{2}\right) \le \frac{\delta}{N^2A^2}.
\end{align*}
Applying union bound over $t$, $i$ and $a'_i$ proves the statement.
\end{proof}

\begin{proof}[Proof of Theorem~\ref{thm:cce-reduction}]
\textbf{Correctness.} Since $\Pi$ is an $\epsilon$-CCE in the subgame $\Pi_{i=1}^{N} \ca_i^{(t)}$, $\forall i\in [N]$, $\forall a\in \ca_i^{(t)}$%To show that $\Pi$ is indeed an $\epsilon$-CCE, we only need to verify that $\forall i\in[N]$, $\forall a\in \ca_i$
\begin{align*}
u_i\left(a,\Pi_{-i}\right) \le u_i\left(\Pi\right) + \epsilon'. 
\end{align*}
Because $\argmax_{a\in \ca_i} \hat u_i(a,\Pi_{-i}) \in \ca_i^{(t)}$,
$\forall i\in [N]$, $\forall a\in \ca_i$
\begin{align*}
u_i\left(a,\Pi_{-i}\right) &\le \hat u_i\left(a,\Pi_{-i}\right)+\epsilon' \le \max_{a'\in \ca_i^{(t)}}\hat u_i\left(a',\Pi_{-i}\right)+\epsilon' \\
&\le \max_{a'\in \ca_i^{(t)}}u_i\left(a',\Pi_{-i}\right)+2\epsilon' \\
&\le u_i(\Pi) + 3\epsilon' \le u_i(\Pi) + \epsilon.
\end{align*}
Therefore $\Pi$ is an $\epsilon$-CCE in the full game.

Moreover, we claim that for any $t$, $\ca_i^{(t)}$ only contains  $\Delta$-rationalizable actions. This is true for $t=1$ with high probability due to our initialization. Suppose that this is true for $t$. Notice that the only way for an action $a'_i\in \ca_i^{(t+1)}$ is to be an empirical best response, which means
\begin{align*}
    u_i(a'_i , \Pi_{-i}) &\ge \hat u_i(a'_i,\Pi_{-i}) - \epsilon' \ge  \max_{a\in \ca_i} \hat u_i(a, \Pi_{-i}) - \epsilon'\\
    &\ge \max_{a\in \ca_i} u_i(a, \Pi_{-i}) - 2\epsilon'.
\end{align*}
%the additional action included in $\ca_i^{(t+1)}$ is an empirical best response to $\Pi_{-i}$, \emph{i.e.}
Since $\epsilon'<\Delta/2$, this means that $a'_i$ is the $\Delta$-best response to a  $\Delta$-rationalizable strategy, and is therefore  $\Delta$-rationalizable. Therefore $\ca_i^{(t+1)}$ also only contains $\Delta$-rationalizable actions. Our claim can be thus proven via induction, and it follows that the output strategy is also $\Delta$-rationalizable. 

We conclude that the output strategy is a $\Delta$-rationalizable $\epsilon$-CCE with probability $1-2\delta$ (assuming the event in Lemma~\ref{lem:cce-reduction-concentration} as well as the rationalizability of the initialization).

\textbf{Sample complexity.} By Theorem~\ref{thm:ide_upper}, Line $1$ needs $\tilde{O}\left(\frac{LNA}{\Delta^2}\right)$ samples. Since the algorithm returns before $t=NA$, the total number of calls to the black-box oracle $\mathcal{O}$ is $NA$. For each $t$, the number of samples required is
\begin{align*}
    NAM = \tilde{O}\left(\frac{NA}{\min\{\Delta,\epsilon\}^2}\right).
\end{align*}
Combining this with the upper bound on $t$, and the cost for Algorithm~\ref{alg:findaction} gives the total sample complexity bound
\begin{align*}
    \Tilde{O}\left(\frac{N^2A^2}{\min\{\Delta^2,\epsilon^2\}}\right).
\end{align*}
\end{proof}

%\subsection{Rationalizable CE via Reduction}
\subsection{Proof of Theorem~\ref{thm:ce-reduction}}
In the proof we denote $\epsilon'=\frac{\min\{\epsilon,\Delta\}}{3}$.

\begin{lemma}
\label{lem:ce-reduction-concentration}
With probability $1-\delta$, throughout the execution of Algorithm~\ref{alg:cce-reduction}, for every $t$ and $i\in [N]$, $a'_i\in \cA_i$, $a_i\in \cA_i$,
\begin{align*}
    \left|\hat u_i(a'_i, \Pi|a_i) - u_i\left(a'_i, \Pi|a_i\right)\right| \le \epsilon'.
\end{align*}
\end{lemma}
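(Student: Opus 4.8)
The plan is to follow the proof of the CCE analogue (Lemma~\ref{lem:cce-reduction-concentration}) essentially verbatim, with two adjustments: the union bound now carries the extra conditioning index $a_i$, and the batch size is the larger $M$ prescribed in Theorem~\ref{thm:ce-reduction}. First I would bound the number of outer iterations of Algorithm~\ref{alg:ce_2}. In any round $t$ that does not trigger the \texttt{return}, there is some player $i$ with $\cA_i^{(t)} \neq \cA_i^{(t+1)}$, so at least one new action is inserted and the total support size $\sum_{i=1}^N |\cA_i^{(t)}|$ strictly increases; since this quantity starts at $N$ and never exceeds $NA$, the algorithm must stop with $t < NA$. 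This turns the ``throughout the execution'' quantifier into a finite union bound over $t \in \{1, \dots, NA\}$.

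Next, I would fix a tuple $(t, i, a_i, a'_i)$ and condition on the algorithm's history prior to the sampling in Line~5 of round $t$, which determines the correlated strategy $\Pi$ and hence the conditional law $\Pi_{-i}\mid a_i$. Under this conditioning the estimate $\hat u_i(a'_i, \Pi|a_i)$ is an average of $M$ i.i.d.\ bounded payoffs with mean exactly $u_i(a'_i, \Pi|a_i)$, so Hoeffding's inequality gives
\[
\Pr\left[\left|\hat u_i(a'_i, \Pi|a_i) - u_i(a'_i, \Pi|a_i)\right| > \epsilon'\ \Big|\ \text{history}\right] \le 2\exp\left(-\frac{M\epsilon'^2}{2}\right).
\]
Because this bound is uniform over all realizations of the history, it also holds unconditionally, which is what lets the argument survive the adaptive, data-dependent choice of $\Pi$.

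With $M = \lceil 4\ln(2NA^2/\delta)/\epsilon'^2 \rceil$ the right-hand side is at most $2(2NA^2/\delta)^{-2} = \delta^2/(2N^2A^4)$. Taking a union bound over $t \in \{1,\dots,NA\}$, $i \in [N]$, $a_i \in \cA_i$ and $a'_i \in \cA_i$ --- at most $N^2 A^3$ events in total --- yields an overall failure probability of at most $N^2 A^3 \cdot \delta^2/(2N^2A^4) = \delta^2/(2A) \le \delta$ (using $\delta \le 1 \le 2A$), which establishes the claim.

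I do not expect a genuine obstacle here: this is a standard concentration-plus-union-bound argument, and the only point needing care is the bookkeeping. Specifically, the CE reduction verifies deviations through the conditional distributions $\Pi\mid a_i$ rather than a single marginal $\Pi_{-i}$, so the union bound ranges over an additional $A$ choices of $a_i$; this is precisely why $M$ is tuned with $\ln(2NA^2/\delta)$ here instead of the $\ln(2NA/\delta)$ used in the CCE case.
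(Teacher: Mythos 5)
Your proposal is correct and follows essentially the same route as the paper's proof: bound the number of outer rounds by $NA$ via the strictly increasing support size, apply Hoeffding to each fixed tuple $(t,i,a_i,a'_i)$, and union-bound over the at most $N^2A^3$ such events. The only additions are the explicit conditioning-on-history step (a worthwhile clarification the paper leaves implicit) and the worked-out arithmetic, both of which check out.
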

\begin{proof}
First, observe that during every iterate of $t$ before the algorithm returns, the total support size $\sum_{i=1}^{t} |\ca_i^{(t)}|$ is increased by at least $1$. It follows that the algorithm returns before $t=NA$.

By Hoeffding's inequality,
\begin{align*}
\Pr\left[ \left|\hat u_i(a'_i, \Pi|a_i) - u_i\left(a'_i,  \Pi|a_i\right)\right|>\epsilon'\right] \le 2\exp\left(-\frac{n\epsilon'^2}{2}\right) \le \frac{\delta}{N^2A^3}.
\end{align*}
Applying union bound over $t$, $i$, $a_i$, and $a'_i$ proves the statement.
\end{proof}

\begin{proof}
Note that with high probability, the empirical estimates $\hat U$ are at most $\epsilon/4$ away from the true value $U$. Since $a_i'$ is the empirical best response, we have
\begin{align*}
    	U_i(a'_i,\Pi|a_i) \geq \argmax_{a\in\cA_i}U_i(a,\Pi|a_i) - \epsilon .
\end{align*}
Note that $\Pi|a_i$ is supported on actions that can survive any rounds of $\epsilon$-IDE. Therefore it serves as a certificate that $a_i'$ will never be $\epsilon$-eliminated as well.
\end{proof}

\begin{lemma}
\label{lem:ce-reduction-correct}
The returned strategy $\Pi$ is an $\epsilon$-CE with probability $1-\delta$.
\end{lemma}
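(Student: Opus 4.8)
The plan is to condition on the high-probability event of Lemma~\ref{lem:ce-reduction-concentration}---under which every empirical conditional payoff $\hat u_i(a'_i,\Pi|a_i)$ lies within $\epsilon'$ of the true value $u_i(a'_i,\Pi|a_i)$---and then use the termination test of Algorithm~\ref{alg:ce_2} to upgrade the subgame $\epsilon'$-CE guarantee into a full-game $\epsilon$-CE guarantee. Recalling that the returned $\Pi$ is supported on $\prod_i \cA_i^{(t)}$, so that $\Pi_i(a_i)=0$ for $a_i\notin\cA_i^{(t)}$, the full-game $\epsilon$-CE condition reduces, for each player $i$, to bounding the swap-regret $\sum_{a_i}\Pi_i(a_i)\bigl[\max_{a'_i\in\cA_i}u_i(a'_i,\Pi|a_i)-u_i(a_i,\Pi|a_i)\bigr]$ by $\epsilon$. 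The only difference from the $\epsilon'$-CE already guaranteed on the subgame is that the inner maximum now ranges over the \emph{entire} action set $\cA_i$ rather than the active set $\cA_i^{(t)}$, since the worst-case swap map $\phi$ can be chosen action-by-action as $\phi(a_i)=\argmax_{a'_i\in\cA_i}u_i(a'_i,\Pi|a_i)$.

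The key step is to show that enlarging the comparator set from $\cA_i^{(t)}$ to $\cA_i$ costs at most $2\epsilon'$ for each recommended $a_i$. When the algorithm terminates, $\cA_i^{(t)}=\cA_i^{(t+1)}$ forces the empirical best response $\hat a:=\argmax_{a\in\cA_i}\hat u_i(a,\Pi|a_i)$ to already lie in $\cA_i^{(t)}$ for every $a_i\in\cA_i^{(t)}$. I would then chain the concentration bound twice: writing $a^{\mathrm{BR}}_i:=\argmax_{a\in\cA_i}u_i(a,\Pi|a_i)$,
\[
u_i(a^{\mathrm{BR}}_i,\Pi|a_i)\le \hat u_i(a^{\mathrm{BR}}_i,\Pi|a_i)+\epsilon'\le \hat u_i(\hat a,\Pi|a_i)+\epsilon'\le u_i(\hat a,\Pi|a_i)+2\epsilon'\le \max_{a\in\cA_i^{(t)}}u_i(a,\Pi|a_i)+2\epsilon',
\]
where the second inequality uses that $\hat a$ is the empirical maximizer over all of $\cA_i$ and the last uses $\hat a\in\cA_i^{(t)}$. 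Multiplying by $\Pi_i(a_i)$ and summing (the weights sum to one on the support) shows the full-game best-response value exceeds the subgame best-response value by at most $2\epsilon'$.

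Finally I would combine this with the property that $\Pi$ is an $\epsilon'$-CE of the subgame, so its subgame swap-regret $\sum_{a_i}\Pi_i(a_i)\bigl[\max_{a'_i\in\cA_i^{(t)}}u_i(a'_i,\Pi|a_i)-u_i(a_i,\Pi|a_i)\bigr]$ is at most $\epsilon'$; adding the $2\epsilon'$ slack from the previous step yields a full-game swap-regret of at most $3\epsilon'=\min\{\epsilon,\Delta\}\le\epsilon$, which is exactly the $\epsilon$-CE condition. Since the entire argument is carried out on the event of Lemma~\ref{lem:ce-reduction-concentration}, the conclusion holds with probability $1-\delta$. I expect the main obstacle to be the bookkeeping around the conditional distributions $\Pi|a_i$---the CE-specific object, in contrast to the marginals $\Pi_{-i}$ in the CCE reduction of Theorem~\ref{thm:cce-reduction}---together with verifying that the worst-case swap map decomposes into independent per-$a_i$ choices so that the pointwise $2\epsilon'$ bound aggregates cleanly; the concentration and termination arguments themselves are routine.
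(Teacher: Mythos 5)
Your proposal is correct and follows essentially the same route as the paper's proof: you use the termination condition to place the empirical best response over all of $\cA_i$ inside $\cA_i^{(t)}$, pay $2\epsilon'$ to pass from empirical to true best-response values, and add the subgame $\epsilon'$-CE guarantee to obtain total swap-regret at most $3\epsilon'=\min\{\epsilon,\Delta\}\le\epsilon$. The paper merely writes the first step as an equality of empirical maxima summed against $\Pi_i$ rather than your pointwise chain, which is an immaterial difference.
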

\begin{proof}
When the algorithm terminates, for all $i\in[N]$, 
\begin{align*}
    \sum_{a_i\in {\cA}^{(t)}_i} \Pi_i(a_i)\cdot \left(\max_{a\in\cA_i} \hat u_i(a,\Pi|a_i)-\max_{a\in\cA_i^{(t)}} \hat u_i(a,\Pi|a_i)\right) = 0.
\end{align*}
Therefore
\begin{align*}
    \sum_{a_i\in {\cA}^{(t)}_i} \Pi_i(a_i)\cdot \left(\max_{a\in\cA_i}  u_i(a,\Pi|a_i)-\max_{a\in\cA_i^{(t)}} u_i(a,\Pi|a_i)\right) \le  2\epsilon'.
\end{align*}
Since $\Pi$ is an $\epsilon'$-CE in the reduced game,
\begin{align*}
    \sum_{a_i\in {\cA}^{(t)}_i} \Pi_i(a_i)\cdot \left(\max_{a\in \cA^{(t)}_i}  u_i(a,\Pi|a_i)-  u_i(a_i,\Pi|a_i)\right)\le \epsilon'.
\end{align*}
Summing the two inequalities above gives
\begin{align*}
    \sum_{a_i\in {\cA}^{(t)}_i} \Pi_i(a_i)\cdot \left(\max_{a\in \cA_i}  u_i(a,\Pi|a_i)-  u_i(a_i,\Pi|a_i)\right)\le 3\epsilon',
\end{align*}
which proves the statement.
\end{proof}

\begin{lemma}
\label{lem:ce-reduction-ide}
For any $t$, $\ca_i^{(t)}$ only contains $\Delta$-rationalizable actions with probability $1-2\delta$.
\end{lemma}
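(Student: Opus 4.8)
The plan is to mirror the rationalizability argument from the proof of Theorem~\ref{thm:cce-reduction}, adapting it to the conditional-distribution best responses used in Algorithm~\ref{alg:ce_2}. First I would condition on the event in Lemma~\ref{lem:ce-reduction-concentration} (which holds with probability $1-\delta$) together with the success of Algorithm~\ref{alg:findaction} (another $1-\delta$ by Theorem~\ref{thm:ide_upper}), so that the total failure probability is at most $2\delta$, matching the claim. On this good event every empirical conditional payoff $\hat u_i(a_i',\Pi|a_i)$ is within $\epsilon'=\min\{\epsilon,\Delta\}/3$ of its true value $u_i(a_i',\Pi|a_i)$, and the initialization $(a_1^\*,\dots,a_N^\*)$ is $\Delta$-rationalizable.

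I would then proceed by induction on $t$. For the base case $t=1$, each $\ca_i^{(1)}=\{a_i^\*\}$ consists of a single $\Delta$-rationalizable action. For the inductive step, assume $\ca_j^{(t)}$ contains only $\Delta$-rationalizable actions for every $j\in[N]$. Any newly added action $a_i'\in\ca_i^{(t+1)}$ arises as an empirical best response $a_i'=\argmax_{a\in\ca_i}\hat u_i(a,\Pi|a_i)$ for some $a_i\in\ca_i^{(t)}$. Chaining the concentration bounds exactly as in the CCE reduction gives
\[
u_i(a_i',\Pi|a_i)\ \ge\ \max_{a\in\ca_i}u_i(a,\Pi|a_i)-2\epsilon'\ \ge\ \max_{a\in\ca_i}u_i(a,\Pi|a_i)-\Delta,
\]
so $a_i'$ is a $\Delta$-best response to the correlated belief $\Pi|a_i$.

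The key remaining point is that $\Pi|a_i$ is supported only on $\Delta$-rationalizable opponent profiles: since $\Pi$ is an $\epsilon'$-CE of the subgame on $\prod_j\ca_j^{(t)}$, the conditional $\Pi|a_i$ is supported on $\prod_{j\neq i}\ca_j^{(t)}$, which contains only $\Delta$-rationalizable actions by the induction hypothesis. It then remains to invoke the fact that a $\Delta$-best response to a (correlated) belief supported on $\Delta$-rationalizable actions is itself $\Delta$-rationalizable. This follows from Definition~\ref{def:ration}: if $a_i'$ could be eliminated in some round $l\le L$, there would exist $x_i\in\Delta(\ca_i)$ with $u_i(a_i',a_{-i})\le u_i(x_i,a_{-i})-\Delta$ for all $a_{-i}$ satisfying $a_{-i}\cap E_{l-1}=\emptyset$; averaging this inequality against $\Pi|a_i$, whose support avoids $E_L\supseteq E_{l-1}$, would yield $u_i(a_i',\Pi|a_i)\le u_i(x_i,\Pi|a_i)-\Delta$, contradicting the $\Delta$-best-response property. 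Hence $a_i'\notin E_L$, completing the induction.

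The main obstacle I anticipate is not any calculation but rather stating this last step cleanly: one must argue that a single global $\Delta$-best-response property rules out elimination at every round $l\le L$ simultaneously, using the nesting $E_1\subseteq\cdots\subseteq E_L$ so that support-avoidance of $E_L$ implies support-avoidance of each $E_{l-1}$. Some care is also needed to handle every inserted action (one per $a_i\in\ca_i^{(t)}$ per player, rather than a single best response as in the CCE reduction) and to restrict attention to $a_i$ in the support of $\Pi_i$ so that $\Pi|a_i$ is well defined; the argument above nonetheless applies verbatim to each such $a_i'$.
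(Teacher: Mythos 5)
Your proposal is correct and follows essentially the same route as the paper's proof: induction on $t$, with the concentration event turning the empirical best response into a $\Delta$-best response to $\Pi|a_i$, whose support is $\Delta$-rationalizable by the induction hypothesis. The only difference is that you explicitly justify the final step (a $\Delta$-best response to a belief supported on $\Delta$-rationalizable profiles cannot lie in any $E_l$, using $E_{l-1}\subseteq E_L$), which the paper simply asserts; your argument for it is valid.
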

\begin{proof}
We prove this inductively. This is true for $t=1$ with probability $1-\delta$ due to our initialization. Suppose that this is true for $t$. Notice that the only way for an action $a'_i\in \ca_i^{(t+1)}$ is to be an empirical best response, which means for some $a_i$
\begin{align*}
    u_i(a'_i , \Pi|a_i) &\ge \hat u_i(a'_i,\Pi|a_i) - \epsilon' \ge  \max_{a\in \ca_i} \hat u_i(a, \Pi|a_i) - \epsilon'\\
    &\ge \max_{a\in \ca_i} u_i(a, \Pi|a_i) - 2\epsilon'.
\end{align*}
%the additional action included in $\ca_i^{(t+1)}$ is an empirical best response to $\Pi_{-i}$, \emph{i.e.}
Since $\epsilon'<\Delta/2$, this means that $a'_i$ is the $\Delta$-best response to a  $\Delta$-rationalizable strategy, and is therefore  $\Delta$-rationalizable. Therefore $\ca_i^{(t+1)}$ also only contains $\Delta$-rationalizable actions. Our claim can be thus proven via induction, and it follows that the output strategy is also $\Delta$-rationalizable. 
\end{proof}

\begin{proof}[Proof of Theorem~\ref{thm:ce-reduction}]
\textbf{Correctness.} By Lemma~\ref{lem:ce-reduction-correct} and~\ref{lem:ce-reduction-ide}, the output strategy is a $\Delta$-rationalizable $\epsilon$-CE with probability $1-2\delta$ (assuming that the event in Lemma~\ref{lem:ce-reduction-concentration} holds and the rationalizability of the initialization).

\textbf{Sample complexity.} The total sample complexity is
\begin{align*}
     \Tilde{O}\left(\frac{LNA}{\Delta^2}\right) + NA\times NA^2M = \Tilde{O}\left(\frac{N^2A^3}{\min\{\Delta,\epsilon\}^2}\right).
\end{align*}
%\begin{align*}
%    \Tilde{O}\left(NA\times \left(\underbrace{\frac{A^2}{\epsilon^2}}_{\text{Finding CE}}+\underbrace{\frac{NA^2}{\epsilon^2}}_{\text{Verifying CE}}\right)\right) < \tilde{O}\left(\frac{N^2A^3}{\epsilon^2}\right).
%\end{align*}

\end{proof}
\end{document}